\theoremstyle{plain}
\newtheorem{theorem}{Theorem}[section]
\newtheorem{proposition}[theorem]{Proposition}
\newtheorem{lemma}[theorem]{Lemma}
\theoremstyle{definition}
\newtheorem{definition}[theorem]{Definition}
\theoremstyle{remark}
\newtheorem*{theorem*}{Theorem}
\newtheorem*{lemma*}{Lemma}
\definecolor{myred}{HTML}{E6194B}
\icmltitlerunning{Optimistic Linear Support and SFs as a Basis for Optimal Policy Transfer}
\newcommand{\vect}[1]{\boldsymbol{\mathbf{#1}}}
\newcommand{\Ex}{\mathbb{E}}
\newcommand{\pigpi}{\pi^{\text{GPI}}}
\newcommand{\pismp}{\pi^{\text{SMP}}}
\newcommand{\ccs}{\mathrm{CCS}}
\DeclareMathOperator*{\argmax}{arg\,max}
\DeclareMathOperator*{\argmin}{arg\,min}
\begin{document}

\twocolumn[
\icmltitle{Optimistic Linear Support and Successor Features\\ as a Basis for Optimal Policy Transfer}




\begin{icmlauthorlist}
\icmlauthor{Lucas N. Alegre}{inf}
\icmlauthor{Ana L. C. Bazzan}{inf}
\icmlauthor{Bruno C. da Silva}{cics}
\end{icmlauthorlist}

\icmlaffiliation{inf}{Institute of Informatics, Federal University of Rio Grande do Sul, Porto Alegre, RS, Brazil}
\icmlaffiliation{cics}{University of Massachusetts Amherst, MA}

\icmlcorrespondingauthor{Lucas N. Alegre}{lnalegre@inf.ufrgs.br}

\icmlkeywords{multi-objective reinforcement learning, successor features, policy transfer, transfer learning, generalized policy improvement}

\vskip 0.3in
]



\printAffiliationsAndNotice{}  



\begin{abstract}
In many real-world applications, reinforcement learning (RL) agents might have to solve multiple tasks, each one typically modeled via a reward function.
If reward functions are expressed linearly, and the agent has previously learned a set of policies for different tasks, \textit{successor features} (SFs) can be exploited to combine such policies and identify reasonable solutions for new problems. 
However, the identified solutions are not guaranteed to be optimal. 
We introduce a novel algorithm that addresses this limitation.
It allows RL agents to combine existing policies and \textit{directly} identify optimal policies for arbitrary new problems, without requiring any further interactions with the environment. 
We first show (under mild assumptions) that the transfer learning problem tackled by SFs is equivalent to the problem of learning to optimize multiple objectives in RL.
We then introduce an SF-based extension of the \textit{Optimistic Linear Support} algorithm to learn a set of policies whose SFs form a convex coverage set.
We prove that policies in this set can be combined via generalized policy improvement to construct optimal behaviors for any new linearly-expressible tasks, without requiring any additional training samples.
We empirically show that our method outperforms state-of-the-art competing algorithms both in discrete and continuous domains under value function approximation.
\end{abstract}

\section{Introduction}
\label{sec:intro}

Reinforcement learning (RL) has been successfully applied to solve many complex problems in a wide range of domains \cite{Silver+2017,Vinyals+2019,Bellemare+2020}.
However, real-world problems often require optimizing  multiple tasks or identifying solutions that optimize possibly conflicting objectives, such as efficiency, energy, and safety. Typically, these tasks or objectives are encoded via separate reward functions.
A simple solution to solve problems in this setting is to combine the different objectives of the agent according to some weighting scheme, thereby producing a single scalar reward. This allows for standard RL algorithms to be used, but introduces the question of how to balance the relative importance of each objective \citep{Vamplew+2021arxiv}. 
As an alternative, one can identify \textit{multiple} decision-making policies---each one specialized in solving a particular objective or a particular weighted combination of objectives. This is useful because it allows the user of the algorithm to select a policy based on the particular task they are tackling, or based on their current relative preferences over objectives.

Consider an agent that needs to solve multiple tasks whose reward functions can be linearly-expressed (i.e., represented as the weighted sum of reward features and reward weights). In this setting, \textit{successor features} (SFs) \cite{Barreto+2017} have been shown to be a promising approach capable of composing previously-learned policies to solve novel tasks \cite{Barreto+2018,Borsa+2019,Gimelfarb+2021,Nemecek&Parr2021}.
SFs, in particular, allow for the policy evaluation and policy improvement steps (which underlie most RL algorithms) to be extended to the case where the agent needs \textit{(i)} to evaluate a policy on multiple tasks; or \textit{(ii)} to construct a policy (appropriate for solving a novel task) by improving upon an existing set of policies. These processes are known, respectively, as \textit{generalized policy evaluation} (GPE) and \textit{generalized policy improvement} (GPI) \cite{Barreto+2020}.
This is relevant because, given the weights describing a new reward function (or task), GPI can be used to construct a policy that performs better than the existing, previously-learned ones known to the agent.
However, the solutions identified by GPI are \textit{not} guaranteed to be optimal policies. \textbf{This leads to an important open problem: how to construct a set of policies, such that combining them directly leads to the optimal policy for \textit{any} novel linearly-expressible tasks?}

A closely-related problem to the one of solving multiple tasks is that of simultaneously optimizing multiple objectives. This problem has been extensively studied in the multi-objective RL (MORL) literature \cite{vanMoffaert&Nowe2014,Abels+2019,Yang+2019,Hayes+2022}.
When an agent's relative preferences over objectives can be expressed linearly, an optimal solution corresponds to a \textit{convex coverage set} (CCS). This means, in particular, that given \textit{any} new linear preferences, there exists at least one corresponding optimal policy in such a set.
The \textit{Optimistic Linear Support} (OLS) algorithm \cite{Roijers+2015,Mossalam+2016} is a state-of-the-art technique for constructing a CCS by iteratively learning policies specialized in optimizing different linear preferences. 

Even though SFs and MORL address related problems (i.e., optimizing multiple tasks, or optimizing multiple objectives), they have typically been investigated independently. \textbf{In this paper we show that it is possible to combine and extend theoretical guarantees from each of these optimization frameworks \textit{to directly identify optimal policies for any new tasks}. We refer to this as \textit{optimal policy transfer}. }
We first show (under mild assumptions) that the transfer learning problem tackled by SFs is equivalent to the problem of learning policies that solve a multi-objective Markov decision process (MOMDP) under linear preferences.
We then introduce an SF-based extension of the OLS algorithm (SFOLS) capable of identifying which tasks to solve so that the SFs corresponding to their policies form a CCS. Importantly, we prove that policies in this set can be combined via GPI to directly construct optimal behaviors for \textit{any} new linearly-expressible tasks, without requiring the agent to collect any additional training samples.
Furthermore, we also prove that when an \textit{incomplete} CCS is available, it is possible to bound the gap between an optimal solution and solutions derived via GPI.
Additionally, in Appendix~\ref{sec:skill-discovery} we show that SFOLS can be used to solve a recently-proposed, important open problem in the area of optimal skill discovery in the unsupervised setting \cite{Eysenbach+2022}. Even though SFOLS was not designed to this end, it can nonetheless be used to identify optimal solutions to this open problem.
Finally, we empirically show that SFOLS outperforms state-of-the-art competing algorithms in the optimal policy transfer setting, both in discrete and continuous domains under value function approximation, as well as in a zero-shot/lifelong learning setting.

\section{Background}

Here, we discuss important definitions (and corresponding notation) associated with RL, SFs, GPI, and MORL.

\subsection{Reinforcement Learning}
\label{sec:rl}
A RL problem \cite{Sutton&Barto2018} is typically modeled as a \textit{Markov decision process} (MDP). An MDP can be defined as a tuple $M \equiv (\mathcal{S},\mathcal{A},p,r,\mu,\gamma)$, where $\mathcal{S}$ is a state space,  $\mathcal{A}$ is an action space, $p(\cdot|s,a)$ describes the distribution over next states given that the agent executed action $a$ in state $s$, $r : \mathcal{S} \times \mathcal{A} \times \mathcal{S} \mapsto \mathbb{R}$ is a reward function, $\mu$ is an initial state distribution, and $\gamma \in [0,1)$ is a discounting factor.
Let $S_t$, $A_t$ and $R_t = r(S_t,A_t,S_{t+1})$ be random variables corresponding to the state, action, and reward, respectively, at time step $t$. 
The goal of the agent is to learn a policy $\pi:\mathcal{S} \mapsto \mathcal{A}$ that maximizes the expected discounted sum of rewards (\textit{return}) $G_t = \sum_{i=0}^{\infty} \gamma^{i} R_{t+i}$.
The \textit{action-value function} of a policy $\pi$ is defined as $q^\pi(s,a) \equiv \Ex_\pi [G_t  |  S_t = s, A_t = a]$, where $\Ex_\pi [\cdot]$ denotes the expectation over trajectories induced by $\pi$. 
Given $q^\pi$, one can define a \textit{greedy} policy $\pi'(s) \in \argmax\nolimits_a q^\pi(s,a)$. 
It is guaranteed that $q^{\pi'}(s,a) \geq q^\pi(s,a), \forall (s,a) \in \mathcal{S} \times \mathcal{A}$. 
The processes of computing $q^\pi$ and $\pi'$ are known, respectively, as the \textit{policy evaluation} and \textit{policy improvement} steps. Under certain conditions, repeatedly executing the policy evaluating and improvement steps leads to an optimal policy $\pi^*(s) \in \argmax\nolimits_a q^*(s,a)$ \citep{Puterman2005}.

\subsection{Successor Features and GPI}

Let $r_{\vect{w}}$ be a reward function that can be linearly-expressed as $r_{\vect{w}}(s,a,s') = \vect{\phi}(s,a,s')\cdot \vect{w}$, where $\vect{\phi}(s,a,s') \in \mathbb{R}^d$ are reward features and $\vect{w} \in \mathbb{R}^d$ are weights.
The features $\vect{\phi}(s,a,s')$ often represent aspects of the environment that the agent cares about and are related to its objective.
Given a policy $\pi$, we can define the \textit{successor features} (SFs) $\vect{\psi}(s,a) \in \mathbb{R}^d$ for a given state-action pair $(s,a)$ as:
\begin{equation}
\label{eq:psi}
    \vect{\psi}^\pi(s,a) \equiv \Ex_\pi \left[ \sum_{i=0}^{\infty} \gamma^i \vect{\phi}_{t+i}  | S_t=s, A_t=a \right],
\end{equation}
where $\vect{\phi}_t = \vect{\phi}(S_t,A_t,S_{t+1})$.
Given the SFs $\vect{\psi}^{\pi}(s,a)$ associated with a particular policy $\pi$, it is possible to perform policy evaluation and directly compute the action-value function $q^\pi_{\vect{w}}(s,a)$ of $\pi$ under \textit{any} reward function $r_{\vect{w}}$:
\begin{align}
q_{\vect{w}}^\pi(s,a) &= \Ex_\pi \left[ \sum_{i=0}^{\infty} \gamma^i \vect{\phi}_{t+i}\cdot \vect{w} | S_t=s, A_t=a  \right]  \\
&= \vect{\psi}^\pi(s,a)\cdot \vect{w} .
\label{eq:SF_dot_product}
\end{align}
Let $\vect{\psi}^\pi$ be the expected SF vector associated with $\pi$, where the expectation is with respect to the initial state distribution:
$
    \vect{\psi}^\pi = \Ex_{S_0 \sim \mu} \left[ \vect{\psi}^\pi(S_0, \pi(S_0)) \right].
$
Then, the value of a policy $\pi$ under any given reward function $\vect{w}$ can be expressed as  $v^{\pi}_{\vect{w}} = \vect{\psi}^\pi \cdot \vect{w}$.
A key insight is that the definition of SFs corresponds to a form of the Bellman equation, where the features $\vect{\phi}_t$ play the role of rewards. Thus, SFs can be learned through any temporal-difference learning algorithm.

\paragraph{Generalized Policy Evaluation and Improvement.} 
GPI generalizes the policy improvement step (introduced in Section~\ref{sec:rl}) by improving a given policy, tasked with solving a particular task, based on a \textit{set} of action-value functions, instead of a single one.
Assume the agent has access to a set of previously-learned policies $\Pi = \{\pi_i\}_{i=1}^{n}$ and their corresponding SFs, $\Psi = \{\vect{\psi}^{\pi_i}\}_{i=1}^{n}$.
It is possible to evaluate all policies $\pi_i \in \Pi$ under arbitrary reward functions $r_{\vect{w}}$, via Eq.~\eqref{eq:SF_dot_product}: $q_{\vect{w}}^{\pi_i}(s,a)= \vect{\psi}^{\pi_i}(s,a) \cdot \vect{w}$.
This is known as \textit{generalized policy evaluation} (GPE). We define a \textit{GPI policy} as an extension of the standard definition of policy. In particular, it is a \textit{generalized policy} $\pi: \mathcal{S}\times\mathcal{W} \mapsto \mathcal{A}$, defined based on a policy set $\Pi$ and a weight vector $\vect{w}$:
\begin{equation}
\label{eq:sf-gpi}
\pigpi(s; \vect{w}) \in \underset{a \in \mathcal{A}}{\arg\max} \,\, \underset{\pi \in \Pi}{\max} \,\, q^{\pi}_{\vect{w}}(s,a) .
\end{equation}
Let $q^{\text{GPI}}_{\vect{w}}(s,a)$ be the action-value function of policy $\pigpi(\cdot;\vect{w})$. 
The GPI theorem \cite{Barreto+2017} ensures that $q^{\text{GPI}}_{\vect{w}}(s,a) \geq \max_{\pi \in \Pi} q_{\vect{w}}^{\pi}(s,a)$ for all $(s,a) \in \mathcal{S} \times \mathcal{A}$.
In other words, Eq.~\eqref{eq:sf-gpi} can be used to define a policy that is guaranteed to perform at least as well as any other policies $\pi_i \in \Pi$ in a new task, $\vect{w}$.
Hence, GPI can be seen as a form of \textit{transfer learning} \citep{Taylor&Stone2009}.
The GPI theorem can be extended to the case where we replace $q^{\pi_i}$ with an approximation, $\tilde{q}^{\pi_i}$ \citep{Barreto+2018}.

\subsection{Multi-Objective Reinforcement Learning}

The multi-objective RL setting (MORL) is used to model problems where an agent is tasked with optimizing possibly conflicting objectives, each one modeled via a separate reward function. Formally, MORL problem are modeled as multi-objective MDPs (MOMDPs), which differ from regular MDPs in that the reward function is a vector-valued function $\vect{r}: \mathcal{S} \times \mathcal{A} \times \mathcal{S} \mapsto \mathbb{R}^m$, where $m$ is the number of objectives.
Then, the multi-objective action-value function of a given policy $\pi$ is defined as:
\begin{equation}
     \vect{q}^\pi(s,a) \equiv \Ex_\pi \left[ \sum_{i=0}^{\infty} \gamma^i \vect{R}_{t+i}  | S_t=s, A_t=a \right],
\end{equation}
\noindent where $\vect{q}^\pi(s,a)$ is a $m$-dimensional vector with the $i$-th entry corresponding to the expect return of policy $\pi$ (given the  state-action pair $(s,a)$) under the $i$-th objective.
Let $\vect{v}^\pi \in \mathbb{R}^m$ be the multi-objective value of the policy $\pi$ under the initial state distribution $\mu$:
$
    \vect{v}^\pi = \Ex_{S_0 \sim \mu} \left[ \vect{q}^\pi(S_0, \pi(S_0)) \right],
$
where $v^{\pi}_{i}$ is the value of policy $\pi$ under the $i$-th objective.
Let a \textit{user utility function} (or scalarization function) $u : \mathbb{R}^m \mapsto \mathbb{R}$ be a mapping from the multi-objective value of policy $\pi$, $\vect{v}^\pi$, to a scalar. 
Utility functions often linearly combine the value of a policy under each of the $m$ objectives using a set of weights $\vect{w}$:
$
   u(\vect{v}^\pi, \vect{w})
    = \vect{v}^\pi\cdot \vect{w},
$
\noindent where each element of $\vect{w} \in \mathbb{R}^m$ specifies the relative importance of each objective.
For any given constant $\vect{w}$ (i.e., a particular way of weighting objectives), the original MOMDP collapses into an MDP with reward function $r_{\vect{w}}(s,a,s') = \vect{r}(s,a,s') \cdot \vect{w}$.
Let a \textit{Pareto frontier} be a set of nondominated multi-objective value functions $\vect{v}^\pi$: $\mathcal{F} \equiv \{\vect{v}^\pi | \nexists \pi' \mathrm{s.t. }  \vect{v}^{\pi'} \succ_p \vect{v}^{\pi} \}$, where $\succ_p$ is the \textit{Pareto dominance relation} $\vect{v}^\pi \succ_p \vect{v}^{\pi'} \iff (\forall i : v^\pi_i \geq v^{\pi'}_{i}) \land  (\exists i : v^{\pi}_i > v^{\pi'}_{i})$.
We define the optimal solution to a MOMDP as the set of all policies $\pi$ such that $\vect{v}^\pi$ is in the Pareto frontier.
Given a linear utility function $u$, we can define a \textit{convex coverage set} (CCS)  \citep{Roijers+2013} as a finite convex subset of $\mathcal{F}$, such that there exists a policy in the set that is optimal with respect to any linear preference $\vect{w}$. In other words, the CCS is the set of nondominated multi-objective value functions $\vect{v}^\pi$ where the dominance relation is now defined over scalarized values:
\begin{equation}
\label{eq:ccs}
    \ccs {\equiv}\{\vect{v}^\pi \in \mathcal{F} \ | \ \exists \vect{w}\ \text{s.t.}\ \forall \vect{v}^{\pi'} {\in} \mathcal{F}, \vect{v}^{\pi} \cdot \vect{w} {\geq} \vect{v}^{\pi'} \cdot \vect{w} \},
\end{equation}
\noindent where $\vect{w}$ is a vector of weights used by the scalarization function $u$.
The above definition implies that the optimal solution to a MOMDP, under linear preferences, is a finite convex subset of the Pareto frontier. 

\vspace{-0.2cm}
\section{Optimal Policy Transfer}

As mentioned in Section~\ref{sec:intro}, a key contribution of this paper is to combine and extend theoretical guarantees from the SFs and MORL literature to construct new methods capable of directly identifying optimal policies for any new tasks. We refer to this as the optimal policy transfer problem. We first formally define such a problem. Then, we demonstrate how to map any transfer learning problem defined within the SFs framework to an equivalent multi-objective problem modeled as a MOMDP under linear utility functions.
In Section~\ref{sec:method} we derive a principled method with theoretical guarantees for constructing a CCS. These contributions are relevant because, as will be shown, performing GPI over the policies in the CCS is a sufficient condition to ensure that---given any novel linearly-expressible tasks---the resulting policy will be optimal. \textbf{We will show that by mapping SF transfer learning problems to equivalent multi-objective problems, and by exploiting properties of GPI and CCS, we can construct a principle algorithm that achieves our goal of performing optimal policy transfer.}

\subsection{Problem Formulation}
\label{sec:problem}

In the SFs literature, transfer learning is defined as the problem of combining existing policies to identify a (typically sub-optimal, but reasonable) policy for a novel task. Let $\mathcal{M}^\phi$ be the---possibly infinite---set of MDPs associated with all linearly-expressible reward functions:
\begin{equation}
    \label{eq:multi-task-mdp}
    \mathcal{M}^\phi {\equiv} \{ (\mathcal{S}, \mathcal{A}, p, r_{\vect{w}}, \mu, \gamma) | r_{\vect{w}}(s,a,s') {=} \vect{\phi}(s,a,s') {\cdot} \vect{w} \}.
\end{equation}
Assume an agent has learned a set of policies, $\Pi$, for solving some set of tasks, $\mathcal{M} \subset \mathcal{M}^\phi$. By using SF and GPI, it is possible to perform transfer knowledge by composing such policies to construct a new specialized policy for solving a novel new task $M \not\in \mathcal{M}$. 
The question of which set of policies, $\Pi$, should be learned by the agent to facilitate transfer learning is an open problem. Our goal is to construct a policy set $\Pi$ such that the value of the GPI policy $\pigpi$, derived from $\Pi$, is as close as possible to the value of the optimal policy for \textit{any} tasks $\vect{w} \in \mathcal{W}$. That is,
\begin{equation}
\label{eq:problem}
    \argmin_\Pi  \Ex_{\vect{w}\sim\mathcal{W}} \left[ \mathcal{L}(\pigpi, \vect{w}) \right],
\end{equation}
where the expectation is over tasks $\vect{w}$ drawn uniformly at random from the set $\mathcal{W}$ and $\mathcal{L}(\pi, \vect{w}) =  v^{*}_{\vect{w}} - v^\pi_{\vect{w}}$, where $v^{*}_{\vect{w}}$ is the value of the optimal policy under a given reward function $r_{\vect{w}}$: $v_{\vect{w}}^* = \max\nolimits_{\pi} \vect{\psi}^\pi \cdot \vect{w}$.    

Without loss of generality, we consider weight vectors $\vect{w} \in \mathcal{W}$ that induce convex combinations of the features; that is, $\sum_{i} w_i = 1$ and $w_i \geq 0, \forall i$. This is common practice in the MORL literature \cite{Yang+2019}, as it does not alter the optimal policies of the MDPs since optimal policies are invariant with respect to the scale of the rewards.

\subsection{Bridging Successor Features and MORL}

In this section, we show that any transfer learning problem within the SF framework can be mapped into an equivalent problem of learning multiple policies in MORL. 
We do so by transforming a set of MDPs, as defined in Eq.~\eqref{eq:multi-task-mdp}, into a MOMDP, such that the set of optimal policies for solving all tasks in Eq.~\eqref{eq:multi-task-mdp} is equal to the set of policies that solve the corresponding MOMDP; that is, the policies in the CCS.

Recall that $\vect{\phi}(s,a,s')$ is a $d$-dimensional vector containing the $d$ reward features used to construct SFs. We construct a MOMDP with $m{=}d$ objectives such that its $m$-dimensional reward function, $\vect{R}(s,a,s')$, is defined as  $\vect{\phi}(s,a,s')$. That is, for any $s,a,s'$, we define $R_i(s,a,s') \equiv \phi_i(s,a,s')$, where $R_i$ is the reward function associated with the $i$-th objective of the MOMDP. 
Let $\vect{q}^\pi(s,a)$ be the multi-objective action-value function of the MOMDP. Then,
\begin{align}
     \vect{q}^\pi(s,a) &\equiv \Ex_\pi \left[ \sum_{i=0}^{\infty} \gamma^i \vect{R}_{t+i}  | S_t=s, A_t=a \right]\\
          &= \Ex_\pi \left[ \sum_{i=0}^{\infty} \gamma^i \vect{\phi}_{t+i}  | S_t=s, A_t=a \right]\\
          &\equiv \vect{\psi}^\pi(s,a) \label{eq:equiv_q_psi}.
\end{align}
Therefore, any algorithms capable of learning SFs $\vect{\psi}^\pi(s,a)$ can be used to learn the multi-objective action-value $\vect{q}^\pi(s,a)$ of a corresponding MOMDP, and vice-versa.
Recall that $\vect{\psi}^\pi$ is the expected SF vector associated with an arbitrary policy $\pi$.
Under the previously-introduced definition that $\vect{R}(s,a,s') = \vect{\phi}(s,a,s')$, we can show that the multi-objective policy value is equal to the expected SF vector:
$\vect{v}^\pi \equiv \Ex_{S_0 \sim \mu} \left[ \vect{q}^\pi (S_0, \pi(S_0)) \right] = \Ex_{S_0 \sim \mu} \left[ \vect{\psi}^\pi (S_0, \pi(S_0)) \right] = \vect{\psi}^\pi $, 
where the second equality follows from the identity in Eq.~\eqref{eq:equiv_q_psi}.
Let $\vect{\psi}^\pi$ be the SF vector associated with any $\vect{v}^\pi \in \mathcal{F}$. Then, the original definition of CCS can be rewritten by replacing each occurrence of $\vect{v}^\pi \in \mathcal{F}$ with its corresponding $\vect{\psi}^\pi$:
\begin{align}
    \ccs &{\equiv}\{\vect{v}^\pi {\in} \mathcal{F} \ | \ \exists \vect{w}\ \text{s.t.}\ \forall \vect{v}^{\pi'} {\in} \mathcal{F}, \vect{v}^{\pi} {\cdot} \vect{w} {\geq} \vect{v}^{\pi'} \cdot \vect{w} \}\\
    &{=}\{\vect{\psi}^\pi \ | \ \exists \vect{w}\ \text{s.t.}\ \forall \vect{\psi}^{\pi'}, \vect{\psi}^\pi \cdot \vect{w} \geq \vect{\psi}^{\pi'} \cdot \vect{w} \}\\
    &{=}\{\vect{\psi}^\pi \ | \ \exists \vect{w}\ \text{s.t.}\ \forall \pi', v^{\pi}_{\vect{w}} \geq v_{\vect{w}}^{\pi'} \}. \label{eq:new_ccs}    
\end{align}
Let $\Pi_{\mathrm{CCS}}$ be the set of all policies whose expected SF $\vect{\psi}^\pi$ is in the CCS. All such policies have the property that their value is greater than the value of any other policies, on at least one task, $\vect{w}$. Let $\vect{w}_n$ be \textit{any} new task of interest, and let $\pi^*_n$ be an optimal policy for solving this task. Because $\pi^*_n$ is optimal, it follows that its value, $v^{\pi^*_n}_{\vect{w}}$, is greater than the value of all other policies, $\pi'$, on task $\vect{w}$. That is, $v^{\pi^*_n}_{\vect{w}} \geq v^{\pi'}_{\vect{w}}$. Thus, by the definition of CCS in Eq.~\eqref{eq:new_ccs}, it must be the case that $\vect{\psi}^{\pi^*_n} \in$ CCS and $\pi^*_n \in \Pi_{\mathrm{CCS}}$. This implies that for \textit{any} novel task, whose linear reward function is represented by weights $\vect{w}_n$, a corresponding optimal policy is in $\Pi_{\mathrm{CCS}}$. 

The above realization implies that if one can learn a CCS, it is possible to \textit{directly identify an optimal policy for any linearly-expressible tasks}. Thus, we may reuse algorithms (from the MORL literature) tailored to construct CCS's to derive a set of policies, $\Pi_{\mathrm{CCS}}$, such that, given any MDP with a linear reward function (that is, MDPs $M \in \mathcal{M}^\phi$), its corresponding optimal policy is in $\Pi_{\mathrm{CCS}}$. That is, knowledge of a CCS allows the agent to directly identify optimal solutions to any MDPs with linear reward functions.

\subsection{Theoretical Results}
\label{sec:theoretical-results}

We now prove that learning a CCS in the form of Eq.~\ref{eq:new_ccs} allows for the problem defined in Eq.~\eqref{eq:problem} to be solved.
In particular, we show that by learning a CCS and performing GPI on the corresponding policy set, $\Pi_{\mathrm{CCS}}$, one can guarantee optimal performance of the GPI policy on \textit{all} reward weight vectors $\vect{w} \in \mathcal{W}$. Finally, we also introduce a bound on the performance of the GPI policy when only a partial CCS is available. 
The proofs of all lemmas and theorems in this section can be found in Appendix~\ref{sec:proofs}.

First, we define a weaker strategy for performing policy transfer that is commonly used in MORL settings. Let a \textit{Set Max Policy} (SMP) \citep{Zahavy+2021} be the best policy in some set $\Pi$ for a given reward weight vector $\vect{w}$:
\begin{equation}
    \pismp(s; \vect{w}) = \pi'(s), \text{ where } \pi' = \argmax_{\pi \in \Pi} v^{\pi}_{\vect{w}}.
\end{equation}
Let the value of this policy be $v^{\text{SMP}}_{\vect{w}} = \max_{\pi \in \Pi} v^\pi_{\vect{w}}$.
\citet{Zahavy+2021} showed that for any weight vector $\vect{w} \in \mathcal{W}$ and policy set $\Pi$, it follows that  $v^{\text{GPI}}_{\vect{w}} \geq v^{\text{SMP}}_{\vect{w}}$.

\begin{lemma}
\label{lemma:smp}
Let $\Pi$ be a set of policies and $\vect{w}$ an arbitrary weight vector. If an optimal policy for the reward $r_{\vect{w}}$ is in $\Pi$, then $v^{\text{SMP}}_{\vect{w}} = v^*_{\vect{w}}$.
\end{lemma}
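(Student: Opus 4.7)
The plan is to exploit the two definitional facts directly: (i) the SMP value is the pointwise maximum of $v^\pi_{\vect{w}}$ over $\pi \in \Pi$, and (ii) $v^*_{\vect{w}}$ is by definition the maximum of $v^\pi_{\vect{w}}$ over \emph{all} policies. These two facts essentially squeeze $v^{\text{SMP}}_{\vect{w}}$ from both sides.

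First I would write down the lower bound. Let $\pi^* \in \Pi$ be an optimal policy for the reward $r_{\vect{w}}$, so that $v^{\pi^*}_{\vect{w}} = v^*_{\vect{w}}$. Since $\pi^* \in \Pi$, the value $v^{\pi^*}_{\vect{w}}$ is one of the arguments of the max that defines $v^{\text{SMP}}_{\vect{w}} = \max_{\pi \in \Pi} v^\pi_{\vect{w}}$. Hence $v^{\text{SMP}}_{\vect{w}} \geq v^{\pi^*}_{\vect{w}} = v^*_{\vect{w}}$.

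Next I would supply the matching upper bound. By optimality of $\pi^*$ with respect to $r_{\vect{w}}$, no policy (in particular no element of $\Pi$) can achieve a value strictly larger than $v^*_{\vect{w}}$, so $v^\pi_{\vect{w}} \leq v^*_{\vect{w}}$ for every $\pi \in \Pi$; taking the max yields $v^{\text{SMP}}_{\vect{w}} \leq v^*_{\vect{w}}$. Combining the two inequalities gives the claimed equality.

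I do not anticipate any real obstacle: the lemma is essentially a restatement of the definitions of SMP and of an optimal policy, and does not require GPI, SFs, or the MORL-SF bridge established earlier in the section. The only subtlety worth stating explicitly is that the maximizer $\pi'$ used in the definition of $\pismp$ need not equal $\pi^*$; the argument only uses that some optimal policy lies in $\Pi$, so that $\pi^*$ appears as a candidate in the max and forces its value to be attained.
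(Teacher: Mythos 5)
Your proof is correct and matches the paper's argument: the paper's one-line chain $v^{\text{SMP}}_{\vect{w}} = \max_{\pi \in \Pi} \vect{\psi}^\pi \cdot \vect{w} = \vect{\psi}^{\pi^*_{\vect{w}}} \cdot \vect{w} = v^*_{\vect{w}}$ implicitly contains exactly the two inequalities you spell out. Making the lower and upper bounds explicit is a slightly more careful presentation of the same reasoning, not a different route.
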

\begin{theorem}
\label{th:gpi-ccs}
Let $\Pi \equiv \{\pi_i\}_{i=1}^{n}$ be a set of policies such that the set of their expected SFs, $\Psi = \{\vect{\psi}^{\pi_i}\}_{i=1}^{n}$, 
constitute a CCS (Eq.~\eqref{eq:new_ccs}).
Then, given any weight vector $\vect{w} {\in} \mathcal{W}$, the GPI policy $\pigpi(s;\vect{w}) \in \argmax_{a \in \mathcal{A}} \max_{\pi \in \Pi} q_{\vect{w}}^{\pi}(s,a)$ is optimal with respect to $\vect{w}$: $v_{\vect{w}}^{\text{GPI}} = v_{\vect{w}}^{*}$.
\end{theorem}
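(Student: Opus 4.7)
The plan is to sandwich $v^{\text{GPI}}_{\vect{w}}$ between $v^*_{\vect{w}}$ from above (trivially, since no policy beats the optimum) and from below using the chain $v^{\text{GPI}}_{\vect{w}} \geq v^{\text{SMP}}_{\vect{w}} = v^*_{\vect{w}}$, where the first inequality is the result of Zahavy et al.\ quoted just before Lemma~\ref{lemma:smp}, and the second equality is Lemma~\ref{lemma:smp}. The key step is therefore to show that the hypothesis of Lemma~\ref{lemma:smp} holds, i.e., that for every $\vect{w} \in \mathcal{W}$ the set $\Pi$ contains an optimal policy for the scalarized reward $r_{\vect{w}}$.

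First I would fix an arbitrary $\vect{w} \in \mathcal{W}$ and let $\pi^*$ be any optimal policy for the MDP induced by $r_{\vect{w}}$, so that $v^{\pi^*}_{\vect{w}} = v^*_{\vect{w}}$. Using the identity $v^\pi_{\vect{w}} = \vect{\psi}^\pi \cdot \vect{w}$ established in Section~2.2, optimality of $\pi^*$ rewrites as $\vect{\psi}^{\pi^*} \cdot \vect{w} \geq \vect{\psi}^{\pi'} \cdot \vect{w}$ for every policy $\pi'$. By the reformulation of the CCS in Eq.~\eqref{eq:new_ccs}, this is precisely the condition for $\vect{\psi}^{\pi^*}$ to belong to the CCS. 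Since by hypothesis $\Psi = \{\vect{\psi}^{\pi_i}\}_{i=1}^n$ constitutes the CCS, there exists some $\pi_i \in \Pi$ with $\vect{\psi}^{\pi_i} = \vect{\psi}^{\pi^*}$, and hence $v^{\pi_i}_{\vect{w}} = \vect{\psi}^{\pi_i} \cdot \vect{w} = \vect{\psi}^{\pi^*} \cdot \vect{w} = v^*_{\vect{w}}$. So an optimal policy for $r_{\vect{w}}$ is indeed in $\Pi$.

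Given this, Lemma~\ref{lemma:smp} yields $v^{\text{SMP}}_{\vect{w}} = v^*_{\vect{w}}$, and combining with $v^{\text{GPI}}_{\vect{w}} \geq v^{\text{SMP}}_{\vect{w}}$ (Zahavy et al.) and the trivial upper bound $v^{\text{GPI}}_{\vect{w}} \leq v^*_{\vect{w}}$ gives $v^{\text{GPI}}_{\vect{w}} = v^*_{\vect{w}}$, as desired. Since $\vect{w}$ was arbitrary, the conclusion holds for all $\vect{w} \in \mathcal{W}$.

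The only real subtlety, and where I expect the main conceptual work, is the step that promotes the CCS-membership of a value vector $\vect{\psi}^{\pi^*}$ to the existence of a matching policy inside the finite generating set $\Pi$. This is clean here because we are \emph{assuming} the expected SFs of policies in $\Pi$ already realize the CCS, so equality of SF vectors suffices to transfer optimality from $\pi^*$ to some $\pi_i \in \Pi$. It is worth noting for the reader that $\pi_i$ and $\pi^*$ need not coincide as behaviors; what matters is only that their expected SFs, and hence their scalar values under $\vect{w}$, agree.
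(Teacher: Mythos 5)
Your proof is correct and follows essentially the same route as the paper's: both establish $v^{\text{GPI}}_{\vect{w}} \geq v^{\text{SMP}}_{\vect{w}}$ (the paper re-derives this from the GPI theorem, you cite the quoted result of Zahavy et al.) and then conclude $v^{\text{SMP}}_{\vect{w}} = v^*_{\vect{w}}$ via Lemma~\ref{lemma:smp} together with the observation that any optimal policy's expected SF vector must lie in the CCS of Eq.~\eqref{eq:new_ccs} and hence be realized by some $\pi_i \in \Pi$. Your explicit treatment of the CCS-membership step (and the remark that $\pi_i$ need only match $\pi^*$ in expected SF, not in behavior) fills in a detail the paper compresses into a single line, but it is the same argument.
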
 
Theorem~\ref{th:gpi-ccs} shows that learning a CCS in the form of Eq.~\eqref{eq:new_ccs} guarantees optimal behavior when GPI is used to identify a policy for any given task.
However, performing GPI also provides improvement upon a set of policies, $\Pi$, even when an incomplete CCS is available:
\begin{definition}
\label{def:epsilon-ccs}
A SF set $\Psi = \{\vect{\psi}^{\pi_i}\}_{i=1}^{n}$ is an $\epsilon$-CCS if
\begin{align*}
    \forall \vect{w} \in \mathcal{W}, \  
    &\max_{\vect{\psi}^\pi \in \ccs} \vect{\psi}^\pi  \cdot \vect{w} - \max_{\vect{\psi}^\pi \in \Psi} \vect{\psi}^\pi \cdot \vect{w} \leq \epsilon \\
    &\Rightarrow v^*_{\vect{w}} - v^{\text{SMP}}_{\vect{w}} \leq \epsilon .
\end{align*}
\end{definition}
\begin{definition}
Given a SF set $\Psi = \{\vect{\psi}^{\pi_i}\}_{i=1}^{n}$, the GPI-expanded SF set $\Psi^{\text{GPI}}$ is defined as the set
\begin{equation*}
\label{eq:gpi-set}
    \Psi^{\text{GPI}} = \{ \vect{\psi}^\pi | \ \pi \in \{\pi^\text{GPI}(\cdot; \vect{w}) \text{ for all } \vect{w} \in \mathcal{W}\} \}.
\end{equation*}
\end{definition}
\begin{theorem}
\label{th:gpi-epsilon-CCS}
Let $\Pi = \{\pi^*_i\}_{i=1}^{n}$ be a set of optimal policies with respect to weights $\{\vect{w}_i\}_{i=1}^{n}$, such that their SF set $\Psi = \{\vect{\psi}^{\pi^*_i}\}_{i=1}^{n}$ is an $\epsilon_1$-CCS according to Def.~\eqref{def:epsilon-ccs}. Let $\vect{\phi}_{\text{max}} = \max_{s,a} ||\vect{\phi}(s,a)||$. Then, the GPI-expanded SF set $\Psi^{\text{GPI}}$ is an $\epsilon_2$-CCS where:
\begin{equation*}
    \epsilon_2 \leq \min \{\epsilon_1, \frac{2}{1-\gamma} \vect{\phi}_{\text{max}} \max_{\vect{w} \in \mathcal{W}}\min_{i} ||\vect{w} - \vect{w}_i|| \} .\label{eq:opt_gap}
\end{equation*}
\end{theorem}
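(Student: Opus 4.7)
The plan is to prove the two parts of the bound separately and then combine them with a minimum. I would start by observing that, by the definition of $\Psi^{\text{GPI}}$, for every weight vector $\vect{w} \in \mathcal{W}$ the vector $\vect{\psi}^{\pi^{\text{GPI}}(\cdot;\vect{w})}$ belongs to $\Psi^{\text{GPI}}$, and by the GPI theorem this policy satisfies $v^{\pi^{\text{GPI}}(\cdot;\vect{w})}_{\vect{w}} \geq \max_{\pi \in \Pi} v^{\pi}_{\vect{w}} = v^{\text{SMP over }\Psi}_{\vect{w}}$. Taking an SMP over the (larger) collection $\Psi^{\text{GPI}}$ can therefore only improve on the SMP over $\Psi$:
\begin{equation*}
    v^{\text{SMP over }\Psi^{\text{GPI}}}_{\vect{w}} \;\geq\; v^{\text{SMP over }\Psi}_{\vect{w}} \qquad \forall \vect{w} \in \mathcal{W}.
\end{equation*}
Combined with the hypothesis that $\Psi$ is an $\epsilon_1$-CCS (and the fact that by Theorem~\ref{th:gpi-ccs} the max of $\vect{\psi}\cdot\vect{w}$ over the true CCS equals $v^*_{\vect{w}}$), this immediately yields the first branch: $v^*_{\vect{w}} - v^{\text{SMP over }\Psi^{\text{GPI}}}_{\vect{w}} \leq \epsilon_1$ for every $\vect{w}$.

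Next I would derive the second branch via a Lipschitz-in-$\vect{w}$ argument on scalarized values. Unrolling Eq.~\eqref{eq:psi} and applying the triangle inequality together with $\|\vect{\phi}_t\| \leq \vect{\phi}_{\text{max}}$ gives $\|\vect{\psi}^\pi\| \leq \vect{\phi}_{\text{max}}/(1-\gamma)$ for every policy $\pi$. Since $v^\pi_{\vect{w}} = \vect{\psi}^\pi \cdot \vect{w}$, Cauchy--Schwarz then yields the Lipschitz bound $|v^\pi_{\vect{w}} - v^\pi_{\vect{w}'}| \leq \tfrac{\vect{\phi}_{\text{max}}}{1-\gamma}\|\vect{w}-\vect{w}'\|$ uniformly in $\pi$.

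Fix any $\vect{w} \in \mathcal{W}$, let $\pi^*_{\vect{w}}$ be optimal for $\vect{w}$, and let $i^\star \in \argmin_i \|\vect{w}-\vect{w}_i\|$. The key chain is the standard ``perturbation-of-weights'' decomposition
\begin{equation*}
    v^{\pi^*_{\vect{w}}}_{\vect{w}} - v^{\pi^*_{i^\star}}_{\vect{w}} = \bigl(v^{\pi^*_{\vect{w}}}_{\vect{w}} - v^{\pi^*_{\vect{w}}}_{\vect{w}_{i^\star}}\bigr) + \bigl(v^{\pi^*_{\vect{w}}}_{\vect{w}_{i^\star}} - v^{\pi^*_{i^\star}}_{\vect{w}_{i^\star}}\bigr) + \bigl(v^{\pi^*_{i^\star}}_{\vect{w}_{i^\star}} - v^{\pi^*_{i^\star}}_{\vect{w}}\bigr),
\end{equation*}
in which the middle term is non-positive because $\pi^*_{i^\star}$ is optimal for $\vect{w}_{i^\star}$, and the two remaining terms are each bounded by $\tfrac{\vect{\phi}_{\text{max}}}{1-\gamma}\|\vect{w}-\vect{w}_{i^\star}\|$ by the Lipschitz bound. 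Hence $v^*_{\vect{w}} - v^{\pi^*_{i^\star}}_{\vect{w}} \leq \tfrac{2\vect{\phi}_{\text{max}}}{1-\gamma}\min_i\|\vect{w}-\vect{w}_i\|$, and since $\pi^*_{i^\star} \in \Pi$ this is an upper bound on $v^*_{\vect{w}} - v^{\text{SMP over }\Psi}_{\vect{w}}$, which by the first paragraph also upper-bounds $v^*_{\vect{w}} - v^{\text{SMP over }\Psi^{\text{GPI}}}_{\vect{w}}$. Taking a maximum over $\vect{w} \in \mathcal{W}$ delivers the second branch of the min.

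The two branches together give the claimed $\epsilon_2$ and hence, via Definition~\ref{def:epsilon-ccs}, show that $\Psi^{\text{GPI}}$ is an $\epsilon_2$-CCS. The main obstacle I anticipate is not any single step (each is essentially a triangle-inequality or Lipschitz manipulation) but rather being careful that the ``$\epsilon$-CCS'' definition is invoked correctly in both directions: the premise of Definition~\ref{def:epsilon-ccs} must be identified with $v^*_{\vect{w}} - v^{\text{SMP}}_{\vect{w}} \leq \epsilon$ using the fact that $\max_{\vect{\psi}^\pi \in \ccs} \vect{\psi}^\pi \cdot \vect{w} = v^*_{\vect{w}}$ (Theorem~\ref{th:gpi-ccs}), and one must keep straight that the SMP in the definition refers to the set $\Psi^{\text{GPI}}$ rather than $\Psi$.
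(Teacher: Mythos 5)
Your proof is correct, and the first branch ($\epsilon_2 \leq \epsilon_1$) coincides with the paper's argument: both rest on $v^{\text{GPI}}_{\vect{w}} \geq v^{\text{SMP}}_{\vect{w}}$ together with the $\epsilon_1$-CCS hypothesis on $\Psi$. Where you genuinely diverge is the second branch. The paper imports Theorem 2 of Barreto et al.\ (2017) as a black box---a bound on $q^*_{\vect{w}}(s,a) - q^{\text{GPI}}_{\vect{w}}(s,a)$ for every state--action pair---and then needs an auxiliary construction (a dummy initial state $\bar{s}$ with a single action whose transition kernel equals $\mu$) to convert that pointwise $q$-level bound into a bound on the values $v^*_{\vect{w}} - v^{\text{GPI}}_{\vect{w}}$ under the initial-state distribution. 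You instead re-derive the bound from first principles at the level of expected SF vectors: $\|\vect{\psi}^\pi\| \leq \vect{\phi}_{\text{max}}/(1-\gamma)$ gives a Lipschitz constant for $\vect{w} \mapsto v^\pi_{\vect{w}}$, and the three-term perturbation decomposition (with the middle term killed by optimality of $\pi^*_{i^\star}$ at $\vect{w}_{i^\star}$) yields $v^*_{\vect{w}} - v^{\text{SMP}}_{\vect{w}} \leq \frac{2\vect{\phi}_{\text{max}}}{1-\gamma}\min_i\|\vect{w}-\vect{w}_i\|$ directly. This buys you two things: the argument is self-contained (no external citation, no dummy-state lifting), and it establishes the bound for the SMP value over the \emph{original} set $\Pi$, which is a slightly stronger statement than the paper's bound on the GPI value (since $v^{\text{GPI}}_{\vect{w}} \geq v^{\text{SMP}}_{\vect{w}}$, your bound implies theirs). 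The only cosmetic quibble is attributing $\max_{\vect{\psi}^\pi \in \ccs}\vect{\psi}^\pi\cdot\vect{w} = v^*_{\vect{w}}$ to Theorem~\ref{th:gpi-ccs}; it follows directly from the CCS definition in Eq.~\eqref{eq:new_ccs} rather than from that theorem. Your closing caution about which set the SMP in Definition~\ref{def:epsilon-ccs} refers to is well placed and handled correctly.
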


Intuitively, Theorem~\eqref{th:gpi-epsilon-CCS} implies that if a CCS is incomplete, it is possible to bound the performance gap between the GPI policy (considering an adversarially-chosen task, $\vect{w}$) and the optimal policy for that task. 

\section{Constructing a Set of Policies with Optimistic Linear Support}
\label{sec:method}

In this section, we introduce SFOLS, an SF-based extension of the OLS algorithm \citep{Roijers2016}. It incrementally learns a set of policies, $\Pi$, such that their corresponding successor feature set, $\Psi$, converges to the CCS (Eq.~\eqref{eq:new_ccs}). 
The policies in $\Pi$, at any given iteration of SFOLS, can be combined via GPI to derive an expanded set of solutions $\Psi^{\text{GPI}}$. Because SFOLS monotonically increases the size of $\Pi$, and because it converges to the CCS (as discussed later in this section), it follows from Theorem~\eqref{th:gpi-epsilon-CCS} that the optimality gap in Eq.~\eqref{eq:opt_gap} decreases to zero. In other words, SFOLS is guaranteed to converge to a set of policies, $\Pi$, whose combination via GPI is capable of directly produce optimal solutions to any linearly-expressible tasks.
Pseudocode for SFOLS is shown in Algorithm~\ref{alg:ols}.

\begin{algorithm}[tb]
\caption{SFs Optimistic Linear Support (SFOLS)}
\label{alg:ols}
\begin{algorithmic}[1]
\STATE {\bfseries Initialize:} $\Pi {\leftarrow} \{\}$; $\Psi {\leftarrow} \{\}$; $\mathcal{W}_{exp} {\leftarrow} \{\}$; $Q {\leftarrow} \{\}$; 

\FOR{each extremum of the weight simplex $\vect{w}_e \in \mathcal{W}$}
\STATE {Add $\vect{w}_e$ to $Q$ with maximum priority}
\ENDFOR

\REPEAT
    
\STATE{$\vect{w} \leftarrow \text{pop weight with maximum priority in } Q$}
\STATE{ $\pi, \vect{\psi}^\pi \leftarrow \text{solve task } (\mathcal{S},\mathcal{A},p,r_{\vect{w}},\mu,\gamma)$}
\STATE{Add $\vect{w}$ to $\mathcal{W}_{exp}$}

\IF{$\vect{\psi}^\pi \notin \Psi$}
\STATE{Remove from $Q$ all $\vect{w}'$ s.t. $\vect{\psi}^\pi \cdot \vect{w}' > v^{\text{SMP}}_{\vect{w}'}$}
\STATE{$\mathcal{W}_{c} \leftarrow \mathrm{CornerWeights}(\vect{\psi}^\pi, \vect{w}, \Psi)$}
\STATE{Add $\vect{\psi}^\pi$ to $\Psi$ and $\pi$ to $\Pi$}

\FOR{$\vect{w}' \in \mathcal{W}_{c}$} 
\STATE{$\Delta(\vect{w}') \leftarrow \mathrm{EstimateImprovement}(\vect{w}', \Psi, \mathcal{W}_{exp})$}

\STATE{Add $\vect{w}'$ to $Q$ with priority $\Delta(\vect{w}')$}

\ENDFOR

\ENDIF
\UNTIL{$Q$ \text{is empty}}

\STATE{\bfseries return $\Pi, \Psi$}
\end{algorithmic}
\end{algorithm}

The algorithm starts by inserting into a priority queue, $Q$, the weights in the extrema of the weight simplex $\mathcal{W}$ (i.e., weights in which one component is 1 and all others are 0), assigning them maximum priority.
SFOLS then iteratively pops the weight $\vect{w}$ with the largest priority and uses any RL algorithm to learn a policy, $\pi$, for solving task $\vect{w}$.\footnote{We assume that the agent can \textit{observe} the features $\vect{\phi}_t$ at the current time step $t$. Notice that this is different than assuming prior knowledge of the analytic reward feature function $\vect{\phi}(s,a,s')$. Similar (or more restrictive) assumptions are made in related works \cite{Zahavy+2021,Alver&Precup2021}.} SFOLS also computes the SF, $\vect{\psi}^\pi$, induced by $\pi$.
After computing such a new SF, SFOLS identifies novel weight vectors to add to $Q$ via a procedure that computes \textit{corner weights}, as described below.\footnote{In Appendix~\ref{sec:corner-weights} we detail how corner weights can be computed given any set of SFs (line 11 of Algorithm~\ref{alg:ols}).} SFOLS is guaranteed to stop after a finite number of iterations, when $Q$ is empty. This guarantee follows from similar properties discussed by \citet{Roijers2016}. Thus, as SFOLS identifies more weight vectors (i.e., tasks) to be processed, it incrementally expands the set $\Pi$ until this set converges to a complete CCS.

\begin{figure}[!tb]
\vskip 0.1in
\begin{center}
\centerline{
\includegraphics[width=0.84\columnwidth,align=c]{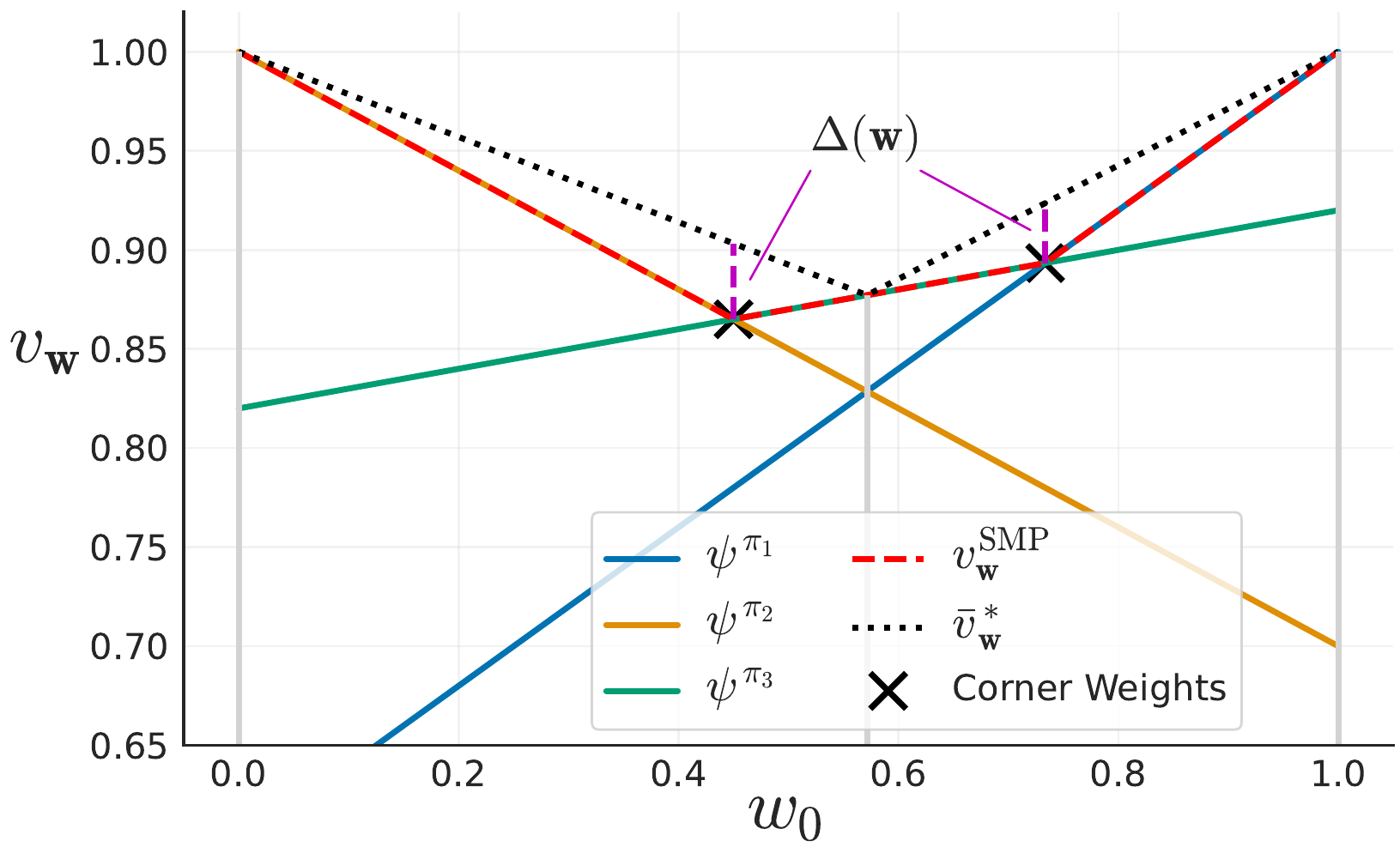}
}
\caption{Example of two corner weights for the SF set $\Psi = \{\vect{\psi}^{\pi_1},\vect{\psi}^{\pi_2},\vect{\psi}^{\pi_3}\}$. Each curve shows how the value of a given policy changes as a function of the task, $\vect{w}$. Here, there are two reward features and thus two weights. We omit $w_1 = 1 - w_0$. Corner weights are the weights in which the value of the SMP policy ($v^{\text{SMP}}_{\vect{w}} = \max_{\pi \in \Pi} \vect{\psi}^\pi \cdot \vect{w}$) changes slope.}
\label{fig:olsexample}
\end{center}
\vskip -0.2in
\end{figure}

To determine which task SFOLS is going to solve at each iteration, it only examines tasks associated with \textit{corner weights}. 
Consider the curve describing the value of the SMP policy, $v^{\text{SMP}}_{\vect{w}} \equiv \max_{\pi \in \Pi} \vect{\psi}^\pi \cdot \vect{w}$, as a function of the task $\vect{w}$. Such a curve forms a \textit{piecewise-linear and convex} (PWLC) surface \citep{Roijers2016}. Corner weights are defined as the points along this surface where it changes slope. These points can be observed in Figure~\ref{fig:olsexample} by analyzing the dashed red curve. Corner weights, here, are denoted by black crosses.
SFOLS can safely consider only corner weights due to the following theorem by \citet{Cheng1988}:
\begin{theorem}
\label{th:cheng}
\cite{Cheng1988} The maximum value of
\begin{align}
    \max_{\vect{w} \in \mathcal{W}, \vect{\psi}^\pi \in \ccs} \min_{\pi' \in \Pi}& \vect{\psi}^\pi \cdot \vect{w} - \vect{\psi}^{\pi'} \cdot \vect{w}\\
    = \max_{\vect{w} \in \mathcal{W}}& \ v^*_{\vect{w}} - v^{\text{SMP}}_{\vect{w}},
\end{align}
is at one of the corner weights of $v^{\text{SMP}}_{\vect{w}} = \max_{\pi \in \Pi} \vect{\psi}^\pi \cdot \vect{w}$.
\end{theorem}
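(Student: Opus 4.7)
The plan is to exploit the piecewise-linear and convex (PWLC) structure of $v^{\text{SMP}}_{\vect{w}}$ and the convexity of the gap $f(\vect{w}) \equiv v^*_{\vect{w}} - v^{\text{SMP}}_{\vect{w}}$ when restricted to each linear region of $v^{\text{SMP}}_{\vect{w}}$. The core observation is that, although $f$ is not convex over all of $\mathcal{W}$, it becomes convex on each polytope of the natural partition induced by $v^{\text{SMP}}_{\vect{w}}$, and convex functions attain their maximum at vertices of polytopes.

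First, I would note that $v^{\text{SMP}}_{\vect{w}} = \max_{\pi \in \Pi} \vect{\psi}^{\pi} \cdot \vect{w}$ is a finite maximum of linear functions in $\vect{w}$, hence PWLC, and that $v^*_{\vect{w}} = \max_{\vect{\psi}^{\pi} \in \ccs} \vect{\psi}^{\pi} \cdot \vect{w}$ is also PWLC and convex in $\vect{w}$ (since the CCS is a finite convex set under linear preferences, as established in Section~2.3). Next, I would partition $\mathcal{W}$ into the closed convex polytopes $R_i = \{\vect{w} \in \mathcal{W} \mid \vect{\psi}^{\pi_i} \cdot \vect{w} \geq \vect{\psi}^{\pi_j} \cdot \vect{w} \text{ for all } \pi_j \in \Pi\}$, one per policy $\pi_i \in \Pi$ attaining the SMP maximum on that region. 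On each $R_i$, we have $v^{\text{SMP}}_{\vect{w}} = \vect{\psi}^{\pi_i} \cdot \vect{w}$, so the restriction $f|_{R_i}(\vect{w}) = v^*_{\vect{w}} - \vect{\psi}^{\pi_i} \cdot \vect{w}$ is the difference of a convex PWLC function and a linear function, and thus convex.

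Since a convex function on a bounded convex polytope attains its maximum at a vertex of that polytope, $\max_{\vect{w} \in R_i} f(\vect{w})$ is attained at some vertex of $R_i$. I would then identify these vertices: each vertex of $R_i$ is either a vertex of the weight simplex $\mathcal{W}$ or a point in the interior of $\mathcal{W}$ where the active set of maximizing policies changes. By the definition given in the paper (the points where $v^{\text{SMP}}_{\vect{w}}$ changes slope), these are precisely the corner weights. Taking the maximum over all $R_i$ then yields $\max_{\vect{w} \in \mathcal{W}} f(\vect{w})$, and this is achieved at a corner weight, which is the claim. The equivalence of the two forms of the objective in the theorem statement follows directly: $\max_{\vect{\psi}^{\pi} \in \ccs} \vect{\psi}^{\pi} \cdot \vect{w} = v^*_{\vect{w}}$ and $\min_{\pi' \in \Pi} (-\vect{\psi}^{\pi'} \cdot \vect{w}) = -v^{\text{SMP}}_{\vect{w}}$, so the nested max/min collapses to $\max_{\vect{w}}(v^*_{\vect{w}} - v^{\text{SMP}}_{\vect{w}})$.

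The main obstacle I expect is being precise about the vertex characterization: one needs to argue that any vertex of $R_i$ which is not a vertex of $\mathcal{W}$ lies on the intersection of at least two facets $\{\vect{w} : \vect{\psi}^{\pi_i} \cdot \vect{w} = \vect{\psi}^{\pi_j} \cdot \vect{w}\}$ and thus constitutes a slope-change point of $v^{\text{SMP}}_{\vect{w}}$, matching the paper's notion of corner weight. A secondary subtlety is handling degenerate cases where multiple policies in $\Pi$ tie on a full-dimensional region (so some $R_i$ have empty interior); this can be dispatched by restricting attention to the full-dimensional cells of the PWLC partition without loss of generality, since removing redundant policies does not change $v^{\text{SMP}}_{\vect{w}}$.
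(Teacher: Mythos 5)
Your argument is correct. The paper itself does not prove this statement---it is imported verbatim from \citet{Cheng1988} (see also Chapter~3 of \citet{Roijers2016})---so there is no in-paper proof to compare against; your reconstruction is essentially the standard argument behind Cheng's result. The key steps all check out: the collapse of the nested $\max$/$\min$ to $\max_{\vect{w}}(v^*_{\vect{w}} - v^{\text{SMP}}_{\vect{w}})$ is immediate; $v^*_{\vect{w}}$ is PWLC as a finite max over the CCS; restricting to a cell $R_i$ on which $v^{\text{SMP}}_{\vect{w}}$ is linear makes the gap convex there, so its maximum sits at a vertex of $R_i$. The one place where your write-up is slightly loose is the vertex characterization: a vertex of $R_i$ need not be either a vertex of $\mathcal{W}$ or an \emph{interior} slope-change point---it can lie on a proper face of the simplex where one or more boundary constraints $w_k = 0$ are active together with fewer than $d-1$ policy-equality constraints. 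You flag this yourself, and it is harmless here because the paper's operational definition of corner weights (Algorithm~4 in Appendix~\ref{sec:corner-weights}, via the set $\mathcal{B}_{rel}$ of simplex boundaries) explicitly includes such boundary intersections, so every vertex of every $R_i$ is indeed a corner weight in the sense the theorem uses.
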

As a result of Theorem~\ref{th:cheng}, corner weights represent tasks whose values under the SMP policy are maximally incorrect with respect to their optimal values. For this reason, they are the best candidate tasks to learn next---intuitively, they are the tasks that the agent knows the least about.
Theorem~\ref{th:cheng} guarantees the correctness of OLS and SFOLS: if at a given iteration all corner weights have a maximal improvement of zero, then the algorithm has identified the complete CCS. Furthermore, Theorem 8 of \cite{Roijers2016} extends such property to the case when the underlying RL algorithm learns $\epsilon$-optimal policies. In this case, both OLS and SFOLS are guaranteed to produce an $\epsilon$-CCS.
Importantly, the number of iterations until SFOLS converges is bounded by $\mathcal{O}(|\mathrm{CCS}| + |\mathcal{W}_{\mathrm{CCS}}|)$, where $|\mathrm{CCS}|$ is the size of the CCS and $|\mathcal{W}_{\mathrm{CCS}}|$ is the number of corner weights.\footnote{Notice, however, that one can also stop the algorithm earlier, when no weights in the priority queue $Q$ have priority greater than a desired optimality threshold $\epsilon$. This results in an $\epsilon$-CCS.}

Theorem~\ref{th:cheng} ensures that the most promising task to practice is located at \textit{one} of the (possibly many) corner weights. A simple heuristic for exploring these tasks is to prioritize them by estimating the difference between $v^{\text{SMP}}_{\vect{w}}$, and an optimistic upper bound on that task's optimal value, $\bar{v}^*_{\vect{w}}$.
This is known as the \textit{optimistic maximal improvement}, $\Delta(\vect{w}) = \bar{v}^*_{\vect{w}} - v^{\text{SMP}}_{\vect{w}}$, where $\bar{v}^*_{\vect{w}}$ is an optimistic upper-bound for $v^*_{\vect{w}}$ (see the black dotted line in Figure~\ref{fig:olsexample}). Notice that this heuristic only changes the order in which corner weights are explored, but the algorithm still explores all of them in the limit. Therefore, this heuristic does not affect the optimality of SFOLS.
The value of $\bar{v}^*_{\vect{w}}$ can be efficiently computed by the linear program in Algorithm~\ref{alg:improvement} with an off-the-shelf solver \citep{Diamond&Boyd2016}.
Interestingly, the upper-bound $\bar{v}^*_{\vect{w}}$ is equivalent to the one introduced by \citet{Nemecek&Parr2021}, which computes the dual version of the linear program. In Appendix~\ref{sec:equivalence-nemecek} we prove this equivalence.

\begin{algorithm}[tb]
\caption{Estimate Improvement}
\label{alg:improvement}

\begin{algorithmic}[1]
   \STATE {\bfseries Input:} New weight vector $\vect{w}$, SFs set $\Psi$, set of weights, $\mathcal{W}_{exp}$, for which optimal policies are already known.
   
   \STATE{Let $\bar{v}^*_{\vect{w}}$ be the optimistic upper bound on $v^*_{\vect{w}}$, computed by the following linear program:}
   \vspace{-0.3cm}
   \begin{align*}
    \max \vect{\psi} &\cdot \vect{w}\\
    \text{subject to } \vect{\psi} &\cdot \vect{w}' \leq v^{\text{SMP}}_{\vect{w}'}, \text{ for all } \vect{w}' \in \mathcal{W}_{exp}
   \end{align*}
    \vspace{-0.6cm}
   \STATE{$\Delta(\vect{w}) \leftarrow \bar{v}^*_{\vect{w}} - v^{\text{SMP}}_{\vect{w}}$}
   
\STATE {\bfseries return} $\Delta(\vect{w})$
   
\end{algorithmic}
\end{algorithm}

\section{Experiments}

We compare SFOLS with the Worst Case Policy Iteration (WCPI) algorithm \citep{Zahavy+2021} and other baselines. 
WCPI works by iteratively learning a new policy that is optimal for the reward function under which the previously-learned policies perform the worst.
This worst-case reward function is defined as $\bar{\vect{w}} = \argmin_{\vect{w}\in\mathcal{W}}\max_{\pi\in\Pi} \vect{\psi}^\pi \cdot \vect{w}$.
WCPI stops when the value of $v^{\text{SMP}}_{\bar{\vect{w}}}$ no longer improves between successive iterations.
The set of policies learned with WCPI is provably optimal with respect to the worst-case reward of the MDP. Empirically, it has been shown to produce a diverse set of policies with good performance over randomly selected test tasks. Additionally, we compare with a baseline that, at each iteration, learns a policy to solve a randomly-selected task. This is also the approach used in \cite{Nemecek&Parr2021}.
We perform these comparisons in three scenarios: a classic MORL environment and two well-known benchmark environments used in the SFs literature. Additional details can be found in Appendix~\ref{ap:experiment}.

\begin{figure*}[tb]
\vskip 0.1in
\begin{center}
\centerline{
\includegraphics[width=0.25\linewidth,align=c]{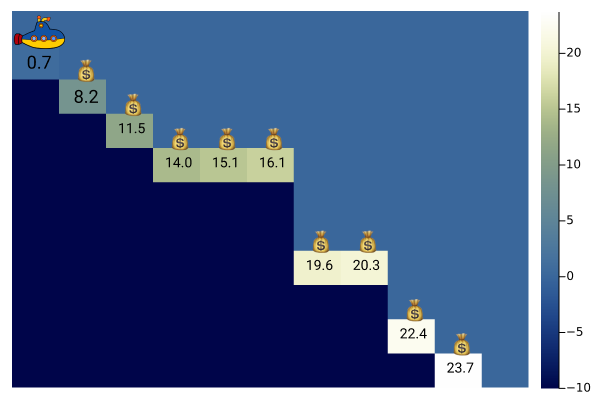}
\includegraphics[width=0.35\linewidth,align=c]{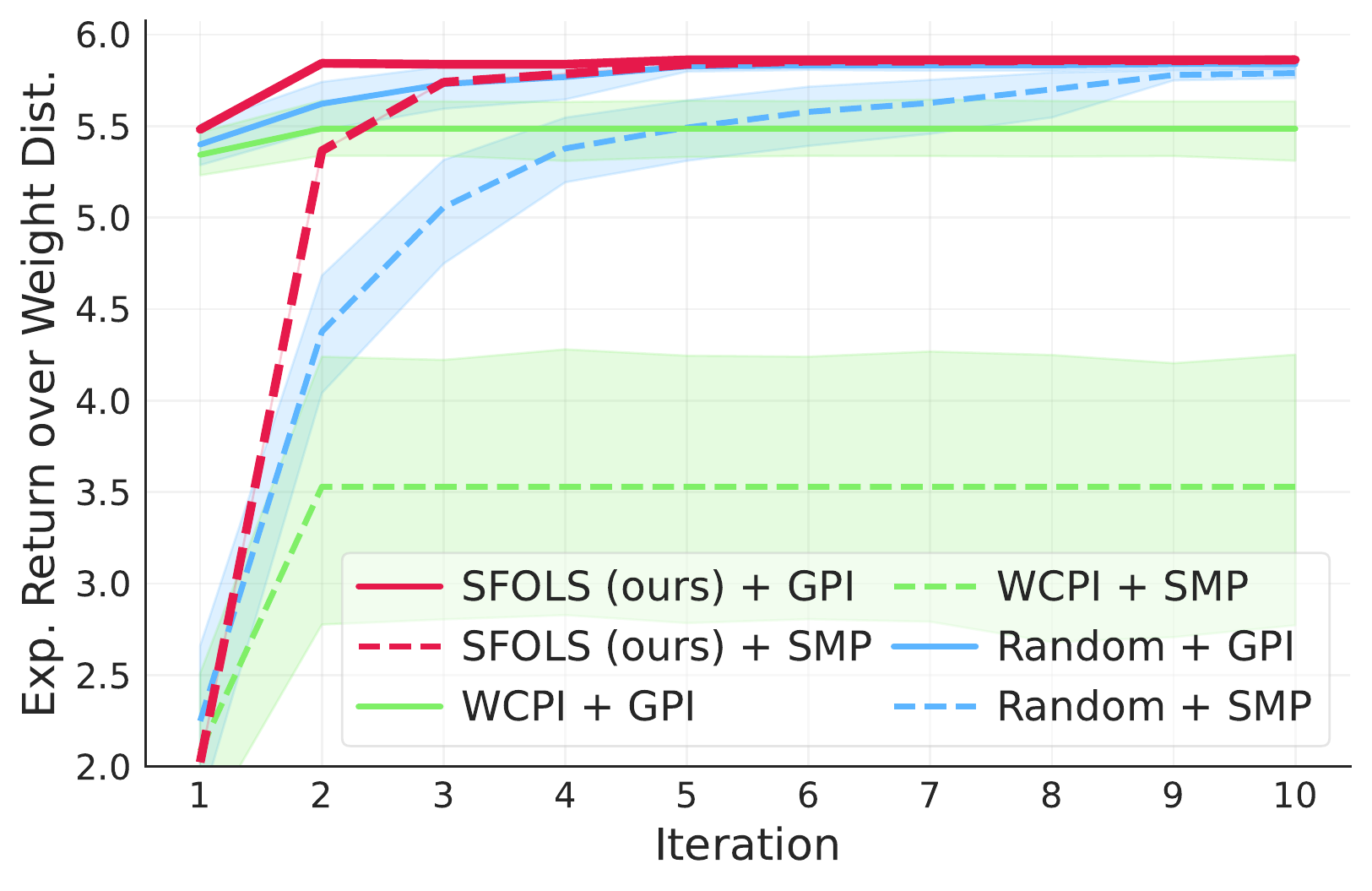}
\includegraphics[width=0.35\linewidth,align=c]{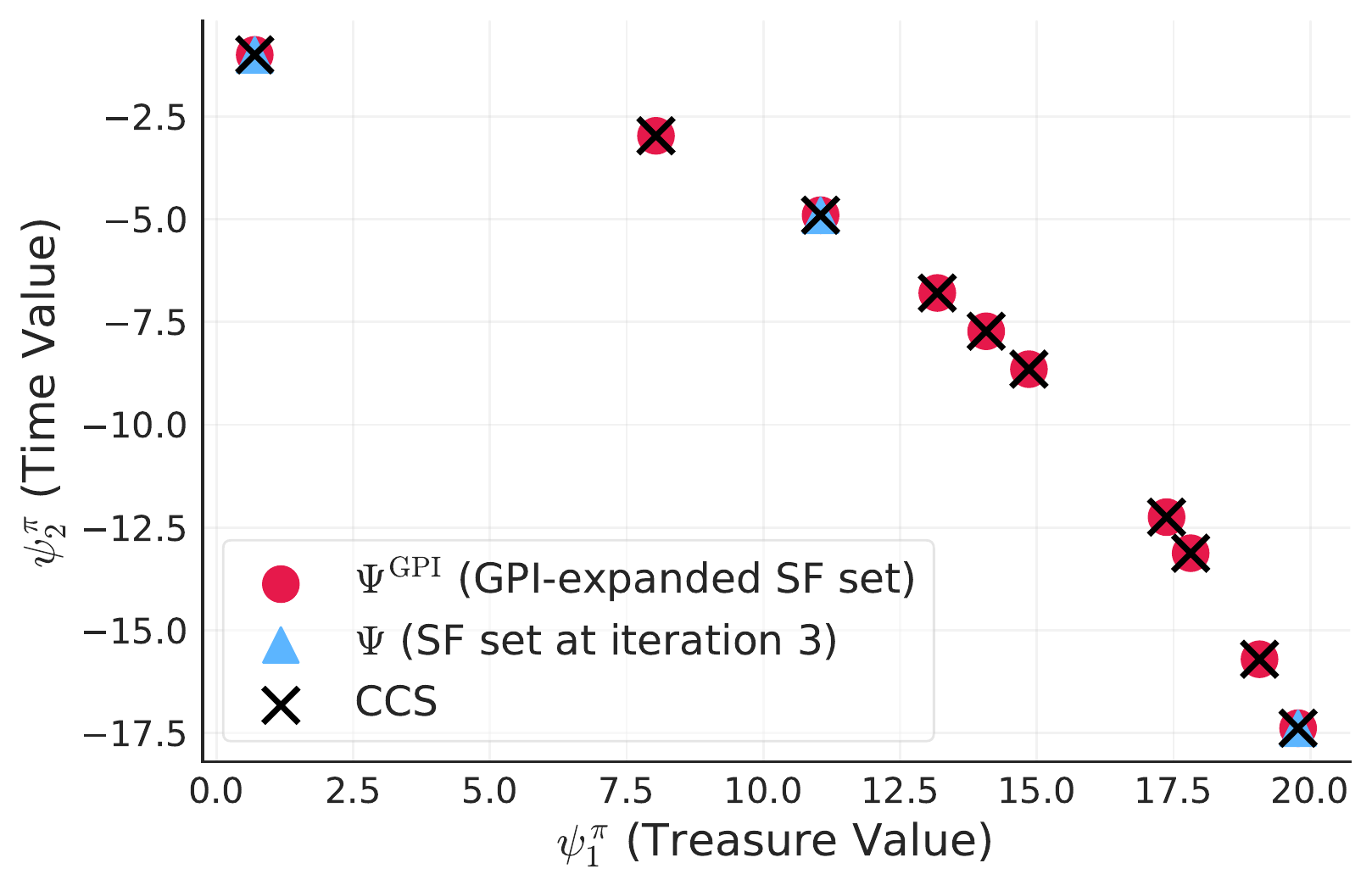}
}
\caption{\textbf{Left:} DST domain. \textbf{Middle:} Expected return of each algorithm over the task/reward weight distribution, $\mathcal{W}$, when evaluated using either GPI or SMP. \textbf{Right:} SFOLS recovers the complete CCS (black crosses) by performing GPI over only \textit{three} policies (blue triangles), thereby identifying all other policies in the CSS (red circles) after only three iterations.}
\label{fig:dst}
\end{center}
\vskip -0.2in
\end{figure*}

\paragraph{Deep Sea Treasure (DST).} We first compare all methods in the DST environment, a classic MORL domain \citep{Abels+2019,Yang+2019}. Here, the agent is a submarine in a $10\times11$-grid (left panel of Figure~\ref{fig:dst}) that must collect a treasure under a time penalty. The first component of the reward feature vector, $\vect{\phi}(s,a,s') \in \mathbb{R}^2$, is the treasure value (or zero when in a blank cell), and the second component is always $-1$.
In Figure~\ref{fig:dst}'s middle panel, we show the expected return (value) achieved by each method when evaluated over a test set of 64 tasks uniformly sampled from $\mathcal{W}$. We report the mean value and its 95\% confidence interval over 30 random seeds. 
First, notice that the WCPI algorithm converges to a sub-optimal policy set, as it does not learn any new policies after it solves the task with the lowest optimal value.
SFOLS, by contrast, and the random baseline, continuously improve their expected performances by learning new policies.

The rightmost plot of Figure~\ref{fig:dst} shows how SFOLS is capable of rapidly identifying the complete CCS, whose policies are shown as black crosses. Notice that after three iterations, SFOLS's SF set $\Psi$ contains only three policies (depicted as blue triangles). Even though SFOLS is forced, here, to operated over a reduced amount of experiences/policies, it is already capable of identifying and recovering \textit{all} policies in the CSS. In particular, the policies that SFOLS identifies by performing GPI over the three policies in $\Psi$ are shown as red circles. Notice, then, that our approach succeeds in its main goal of efficiently selecting tasks to practice in a way that (by combining their corresponding policies via GPI) allows the agent to rapidly reconstruct the entire CCS. 
This emphasizes the potential of using GPI as a method to avoid the costs of explicitly learning a complete CCS---which often contains a large number of policies.
Plots depicting SFOLS' performance after different number of iterations can be found in Appendix~\ref{sec:dst-iterations}.

\begin{figure*}[t!]
\vskip 0.1in
\begin{center}
\centerline{
\includegraphics[width=0.2\linewidth,align=c]{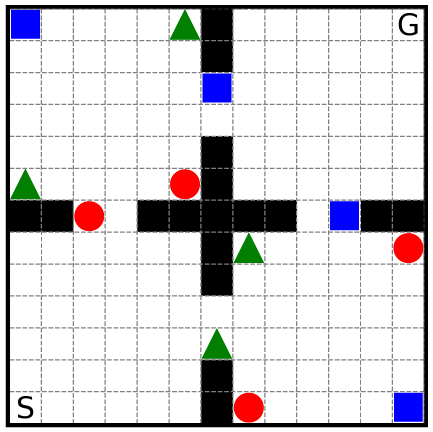}
\includegraphics[width=0.35\linewidth,align=c]{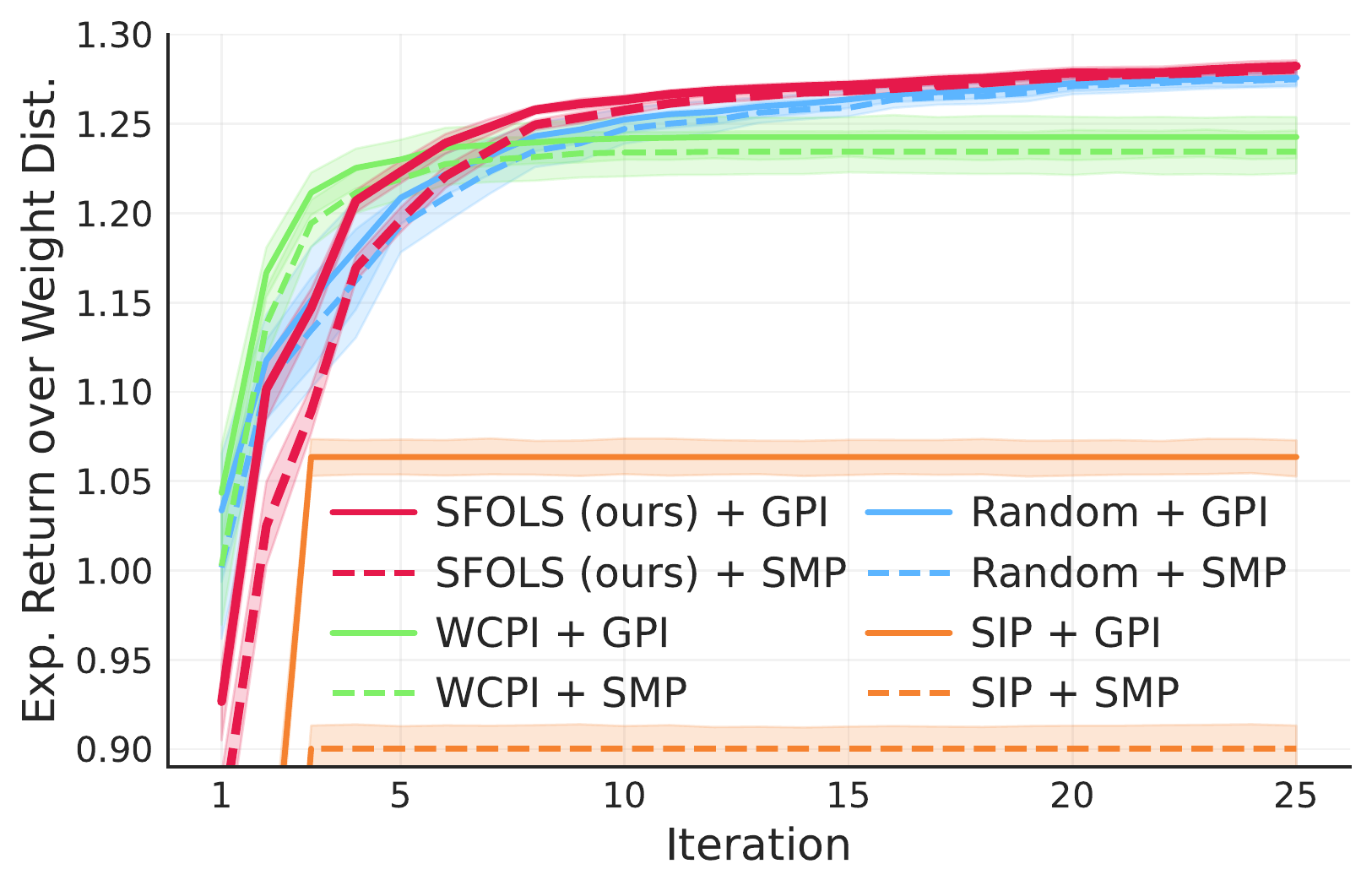}
\includegraphics[width=0.35\linewidth,align=c]{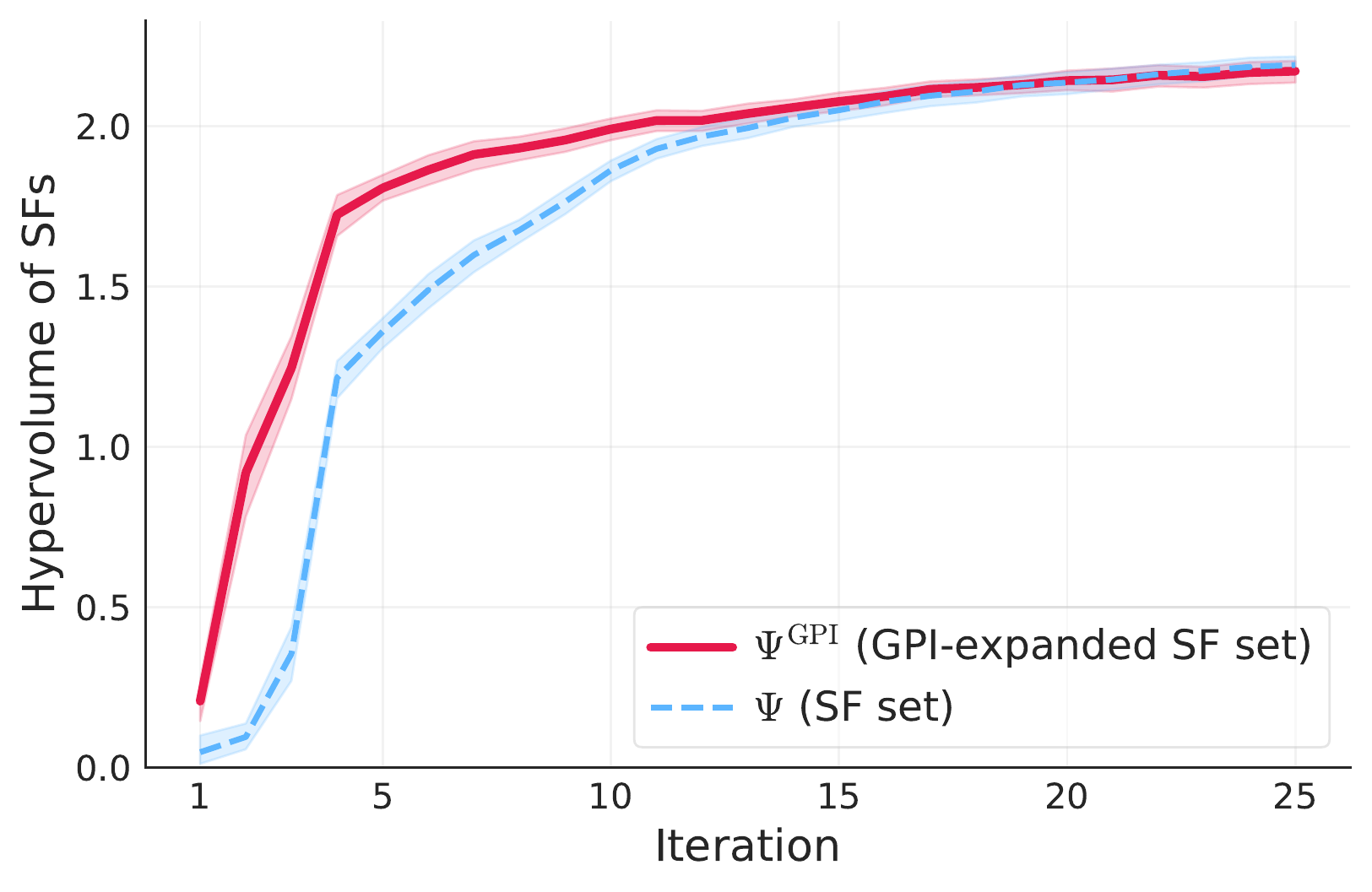}
}
\caption{\textbf{Left:} Four Room domain. \textbf{Middle:} Expected return of each algorithm over the task/reward weight distribution, $\mathcal{W}$, when evaluated using either GPI or SMP.  \textbf{Right:} Volume under the CCS frontier (\textit{hypervolume}) discovered by SFOLS as a function of  iterations. The red curve indicates that SFOLS' hypervolume grows rapidly and that it quickly converges to an almost-complete CCS.}
\label{fig:fourroom}
\end{center}
\vskip -0.2in
\end{figure*}

\paragraph{Four Room.} Next, we evaluate SFOLS in the Four Room domain \cite{Barreto+2017,Gimelfarb+2021}. The Four Room domain has a significantly larger state space than the DST domain.
In this task (depicted in the leftmost panel of Figure~\ref{fig:fourroom}), the reward features are one-hot encoded vectors $\vect{\phi}(s,a,s') \in \{0,1\}^3$ indicating the presence of one of three different classes of objects.
Since this domain satisfies the \textit{independent features} assumption, as defined by \citet{Alver&Precup2021}, we also compare SFOLS with their method: \textit{set of independent policies} (SIP) \cite{Alver&Precup2021}. 
The SIP algorithm learns $d$ policies, where each policy is optimal with respect to one task defined by a weight vector in which only one of its $d$ components is a positive value. All others are negative values.

We can observe (in the middle panel of Figure~\ref{fig:fourroom}) that WCPI constructs a better policy set than the competing methods in the first five iterations. However, after five iterations, it converges to a sub-optimal policy set. 
The SIP algorithm failed to present good performance when evaluated in the test tasks. We believe this occurs because SIP was designed to maximize the undiscounted total reward, and its performance guarantees do not extend to the discounted return setting.
SFOLS, by contrast, keeps improving its expected return over the task distribution after every iteration.
Finally, in the rightmost panel of Figure~\ref{fig:fourroom} we show the volume under the CCS frontier discovered by SFOLS (its \textit{hypervolume}) as a function of the number of iterations. The hypervolume is a widely-used metric deployed to measure the \text{coverage} of the set of solutions over the objectives space \cite{Hayes+2022}. In particular, notice that the red curve indicates that the volume of the CCS frontier identified by SFOLS grows rapidly, thus indicating that after only a few iterations, our method is capable of quickly converging to an almost-complete CCS.
This emphasizes how GPI, when performed over the solutions identified by SFOLS, can efficiently recover novel policies in settings where a computational budget may restrict the agent's capability of learning many policies.

\begin{figure*}[t!]
\vskip 0.1in
\begin{center}
\centerline{
\includegraphics[width=0.2\linewidth,align=c]{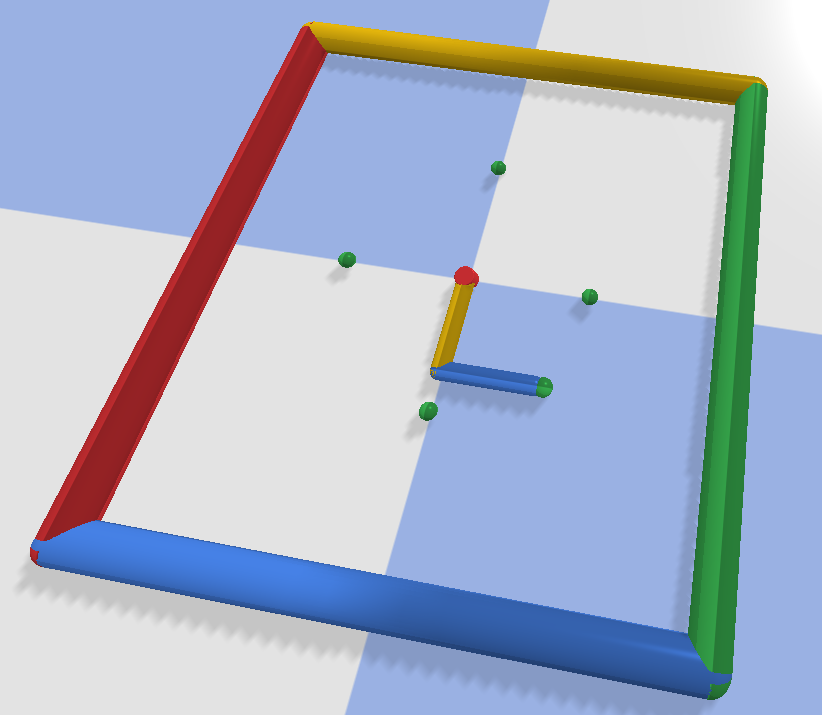}
\includegraphics[width=0.35\linewidth,align=c]{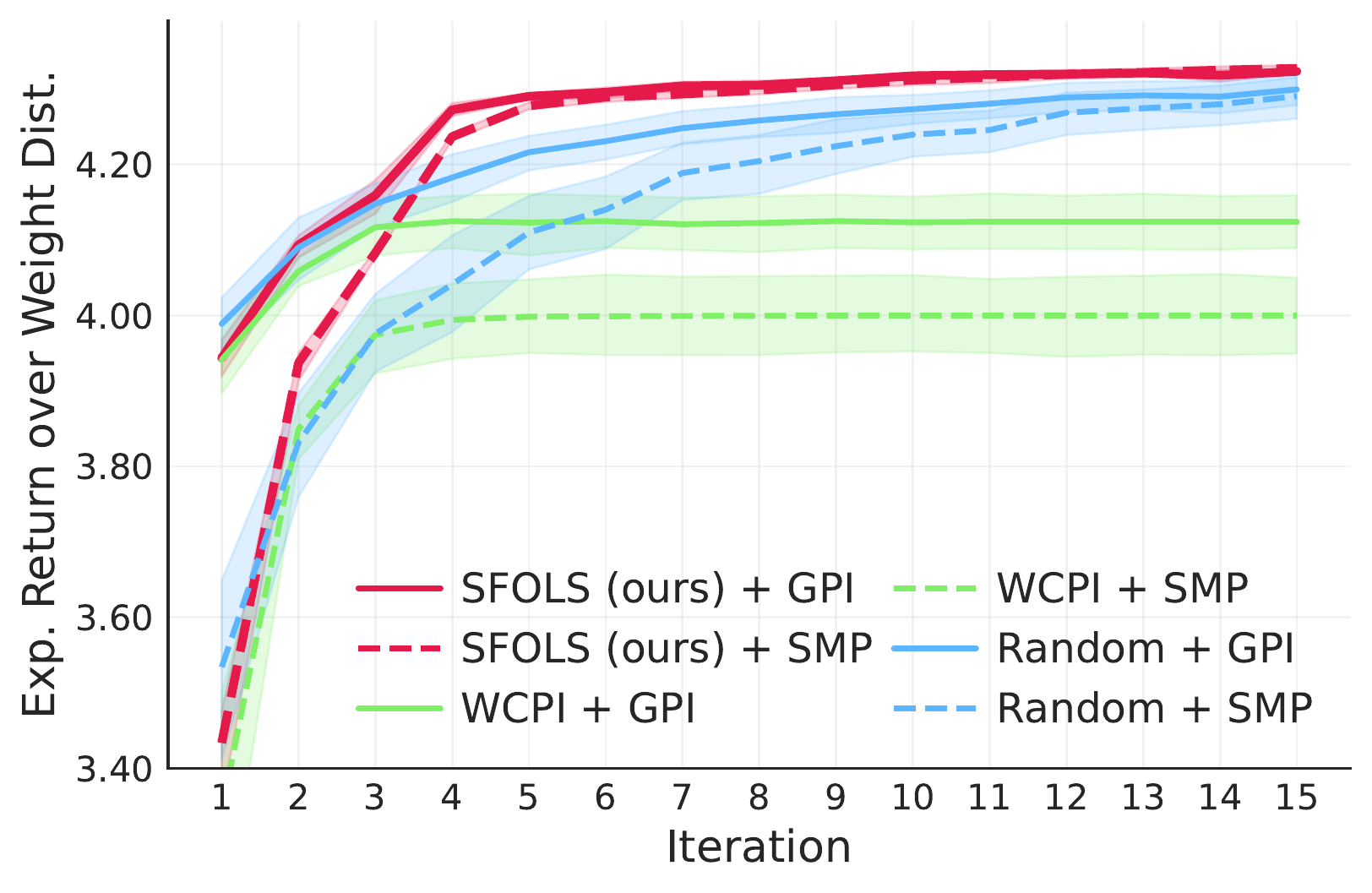}
\includegraphics[width=0.35\linewidth,align=c]{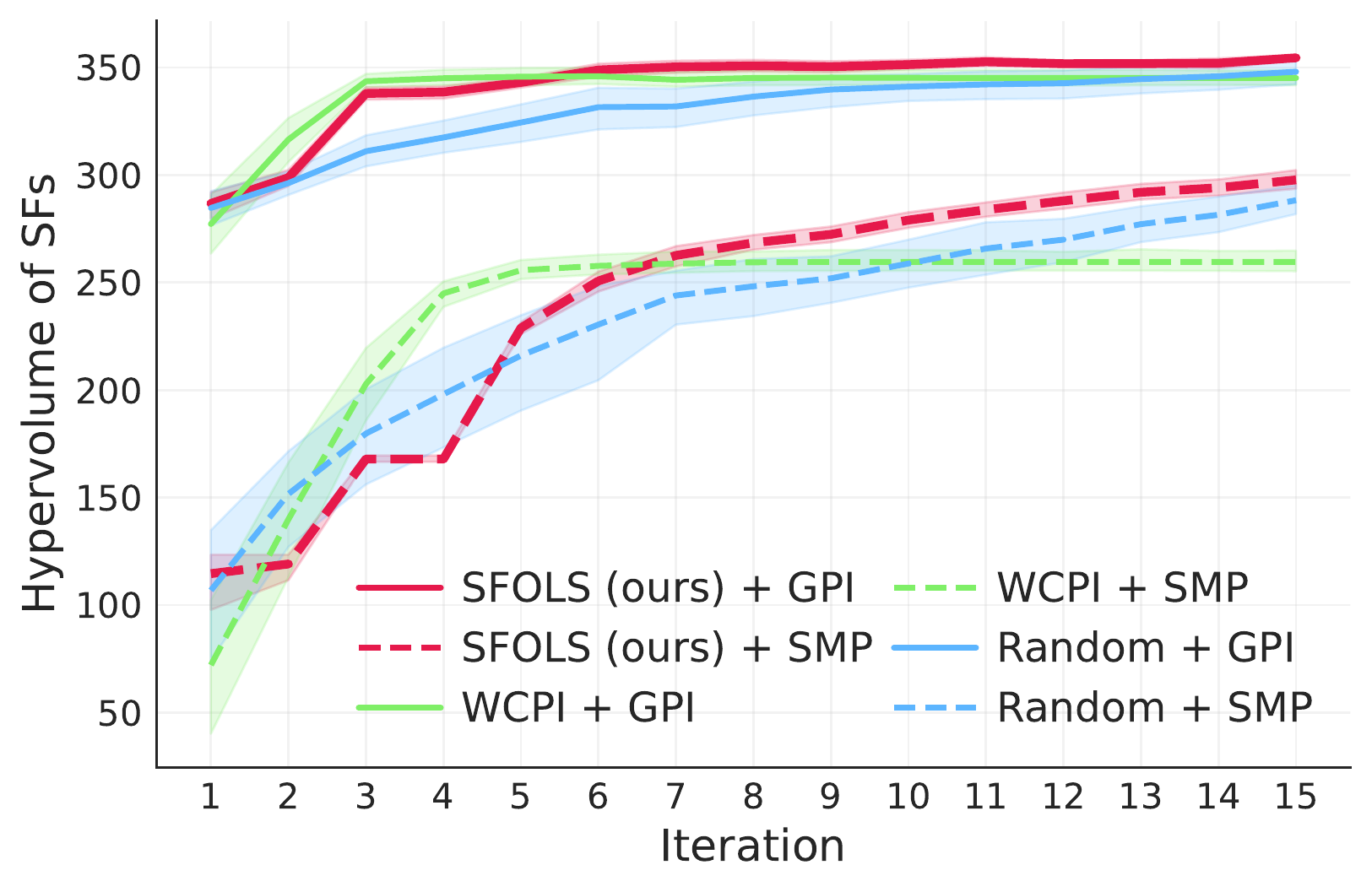}}
\caption{\textbf{Left:} Reacher environment. \textbf{Middle:} Expected return of each algorithm over the task/reward weight distribution, $\mathcal{W}$, when evaluated using either GPI or SMP.  \textbf{Right:} Hypervolume identified by each method (SFOLS, WCPI, Random) at each iteration.}
\label{fig:reacher}
\end{center}
\vskip -0.2in
\end{figure*}

\paragraph{Reacher.} Lastly, we evaluate all algorithms in a setting with a continuous state space that requires using function approximation techniques.
We modify the Reacher environment from PyBullet \cite{Ellenberger2018}, similarly as done in \cite{Barreto+2017,Gimelfarb+2021,Nemecek&Parr2021}.
In this domain, the agent is a robotic arm composed of two segments and that can apply torque to each of its two joints. The reward features $\vect{\phi}(s,a,s') \in \mathbb{R}^4$ are defined as $1$ minus the Euclidean distance from the tip of the arm to four different targets (Figure~\ref{fig:reacher}).
In this domain, we use neural networks to learn the policies' successor features.

The results in the middle and rightmost panels of  Figure~\ref{fig:reacher} show that SFOLS solves the problem after five iterations, while competing methods only approximate (but never reach) the performance of solution after three times more iterations.
In this domain, the GPI policy significantly outperforms the SMP policy up until the fifth iteration of SFOLS. WCPI converges to a sub-optimal policy set (with respect to the test weights), while the random baseline requires more iterations than SFOLS to produce a good policy set.
The hypervolume metric (shown in the rightmost panel of Figure~\ref{fig:reacher}) reveals how GPI generates novel solutions that more efficiently cover the space of solutions.
Finally, notice that WCPI has a higher hypervolume during the first six iterations, which shows its capability to quickly create a diverse behavior basis. 
SFOLS, by contrast, learns policies for the weights/tasks where the optimistic maximal improvement is higher, allowing it to continuously improve its performance over the task distribution.

\begin{figure}[tb]
\vskip 0.0in
\begin{center}
\centerline{
\includegraphics[width=0.82\columnwidth,align=c]{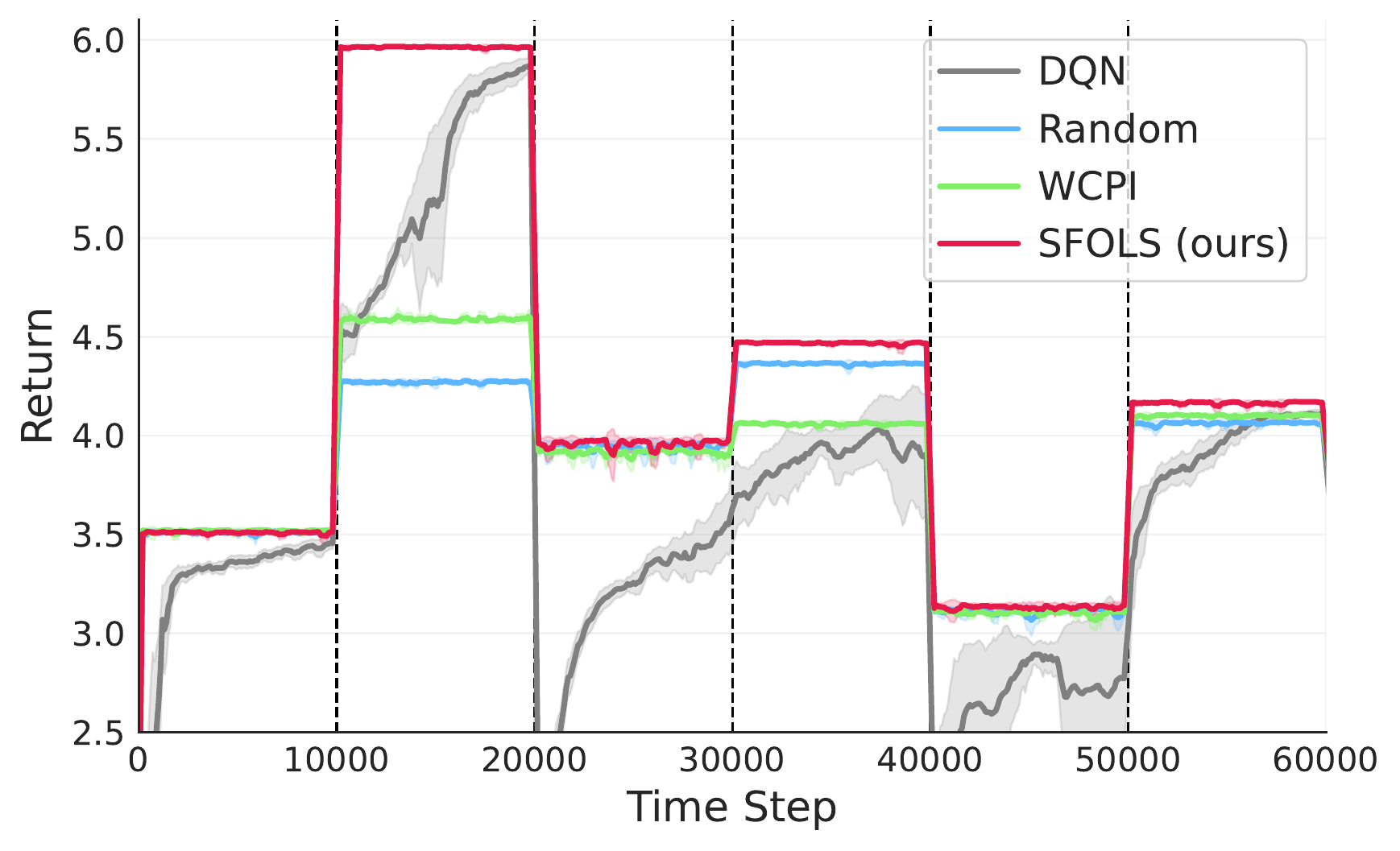}
}
\caption{Episodic return of each algorithm in a lifelong setting in which the reward vector $\vect{w}$ changes every 10,000 time steps.}
\label{fig:lifelong}
\end{center}
\vskip -0.3in
\end{figure}

\paragraph{Lifelong RL.} Finally, we consider a lifelong setting (similar to the one described in \citet{Alver&Precup2021}) in which the task being solved by the agent changes with time in an unpredictable manner. In particular, after every 10,000 steps, a new (previously unknown) task $\vect{w}$ is uniformly sampled from the task space $\mathcal{W}$. The goal of this experiment is to evaluate the \textit{zero-shot performance} of the GPI solution/behavior resulting from the policy sets constructed by each algorithm, after a pre-training period corresponding to the learning process described previously.\footnote{That is, after executing each algorithm for 15 iterations in the setting corresponding to the previous experiment; see Figure~\ref{fig:reacher}.} 
Figure~\ref{fig:lifelong} shows the zero-shot performance of SFOLS and competing methods, including a baseline a DQN agent \cite{Mnih+2015} which has to continuously re-adapt its action-value function. 
In this zero-shot learning setting, algorithms that can adapt more rapidly to novel tasks have better performance (higher return).
As can be seen, the GPI solution/behavior obtained by combining the policies learned by SFOLS \textit{always} results in higher or similar returns than the competing algorithms. This is because the policies associated with the CCS constructed by SFOLS better cover the space of tasks, $\mathcal{W}$.
As expected, the DQN baseline struggles to adapt to changes to the task being tackled by the agent.

\vspace{-0.1cm}
\section{Discussion and Related Work}

To the best of our knowledge, the survey of \citet{Hayes+2022} is the only work that briefly discusses similarities between MORL and SFs. In this paper, we first introduced theoretical results capable of successfully (and formally) combining ideas from both frameworks to derive a novel method that outperforms the state-of-the-art.

The work of \citet{Zahavy+2021} is the closest to ours. 
They proposed the WCPI algorithm to learn a diverse set of policies. 
While this algorithm produces a set of policies guaranteed to be optimal with respect to the worst-case reward, there are no guarantees regarding its performance in relation to the entire space of linear rewards.
We, on the other hand, show theoretically and empirically that SFOLS \textit{can} construct a set of policies optimal to any linear reward.
\citet{Zahavy+2021arxiv} studied the problem of learning a set of policies that maximizes a diversity metric while maintaining good performance in a task given by a fixed extrinsic reward.
We, by contrast, consider the problem of constructing a policy set that allows for optimal (or near-optimal) behavior in any linearly-expressed task.
More recently, \citet{Alver&Precup2021} proposed to construct a set of independent policies as a basis to be used with generalized policy updates.
Their method is guaranteed to be optimal only in the undiscounted case.
Additionally, it assumes that the MDP's transition function is deterministic, while SFOLS does not have this restriction.
\citet{Nemecek&Parr2021} introduced a method that decides whether a policy should be added to the agent's policy library based on an upper-bound computed using the policies' SFs. 
However, they do not provide a way to decide which tasks the agent should learn at each time, and instead train on randomly sampled ones.

Although prior policy transfer methods, which were not based on SFs, have been proposed \citep{Pickett&Barto2002,Fernandez&Veloso2006,Taylor&Stone2007,Abel+2018}, they do not address the problem of constructing a set of policies that allows for optimal behaviors (over arbitrary tasks) to be identified.
An exception is the work of \citet{Tasse+2020,Tasse+2022}, which can recover optimal policies for tasks expressible under their proposed Boolean Task Algebra framework.
Notice, however, that unlike SFOLS their optimality guarantees only hold for goal-based tasks. 

The problem we address in this paper is also related to the one studied in the unsupervised skill discovery literature \cite{Gregor+2016,Eysenbach+2019,Hansen+2020,Liu&Abbeel2021}. Typically, methods studied in this area use information-theoretic objectives to identify (in an unsupervised manner) policy sets  that can be used to enable faster transfer to novel tasks.
Recently, \citet{Eysenbach+2022} studied this problem under a geometric perspective in which policies are defined by their discounted state occupancy. In this case, each policy is associated with an  $|\mathcal{S}|$-dimensional point on the probability simplex. \citet{Eysenbach+2022} define the open problem of skill discovery as the problem of identifying the smallest set of policies such that every vertex of the discounted state occupancy polytope contains at least one policy.
In the discrete state case, in which reward features $\vect{\phi}_t$ are a one-hot representation of the agent's state in $\mathbb{R}^{|\mathcal{S}|}$, SFs correspond to the \textit{successor representation} (SR) \cite{Dayan1993}.
Notably, successor representations are also discounted state occupancy measures. In Appendix~\ref{sec:skill-discovery} we show that SFOLS effectively constructs a CCS over successor representations and thus solves the open problem of skill discovery proposed by \citet{Eysenbach+2022}.

\section{Conclusions}

We showed that any transfer learning problem within the SF framework can be mapped into an equivalent problem of learning multiple policies in MORL under linear preferences. We then introduced a novel SF-based extension of the OLS algorithm (SFOLS) to iteratively construct a set of policies whose SFs form a CCS. Additionally, \textit{we showed that these policies can be combined via GPI to directly identify optimal solutions for any novel linearly-expressible tasks.} To the best of our knowledge, ours is the only method capable of formally guaranteeing such a capability. We empirically showed that SFOLS outperforms state-of-the-art competing algorithms both in classic MORL and SF domains. We believe that our theoretical and empirical findings are relevant to both the MORL and transfer learning/SFs communities.
As future work, we would like to combine SFOLS with \textit{universal successor features approximators} (USFAs) \cite{Borsa+2019}. USFAs may allow us to learn a single model that generalizes over multiple tasks and that better scales to high-dimensional problems.

\section*{Acknowledgements}
We thank Diederik Roijers for insightful discussions, and the anonymous reviewers for their valuable feedback.
This study was financed in part by the following Brazilian agencies: Coordenação de Aperfeiçoamento de Pessoal de Nível Superior - Brazil (CAPES) - Finance Code 001; CNPq (grants 140500/2021-9 and 304932/2021-3); and FAPESP/MCTI/CGI (grant number 2020/05165-1).

\bibliographystyle{icml2022}

\newpage
\appendix
\onecolumn

\section{Proofs}
\label{sec:proofs}

\subsection{Proof of Lemma \ref{lemma:smp}}
\begin{lemma*}
Let $\Pi$ be a set of policies and $\vect{w}$ an arbitrary weight vector. If an optimal policy for the reward $r_{\vect{w}}$ is in $\Pi$, then $v^{\text{SMP}}_{\vect{w}} = v^*_{\vect{w}}$.
\end{lemma*}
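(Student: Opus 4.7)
The plan is to derive the equality by sandwiching $v^{\text{SMP}}_{\vect{w}}$ between two matching bounds, both of which follow directly from the definitions of the SMP value and of $v^*_{\vect{w}}$. No tools beyond what is already in the Background and Problem Formulation sections are needed; the statement is essentially a sanity check that the SMP operator preserves optimality whenever the policy set $\Pi$ is ``rich enough'' to already contain an optimal policy.

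First I would write out the upper bound. By definition, $v^{\text{SMP}}_{\vect{w}} = \max_{\pi \in \Pi} v^{\pi}_{\vect{w}}$ is a maximum of $v^{\pi}_{\vect{w}}$ taken over a subset $\Pi$ of the space of all (stationary) policies, while $v^{*}_{\vect{w}} = \max_{\pi} v^{\pi}_{\vect{w}} = \max_{\pi} \vect{\psi}^{\pi}\cdot \vect{w}$, as defined in Section~\ref{sec:problem}, is a maximum over \emph{all} policies. Since maximizing over a smaller set cannot yield a larger value, we immediately obtain $v^{\text{SMP}}_{\vect{w}} \leq v^{*}_{\vect{w}}$.

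Next, I would use the hypothesis: by assumption there exists $\pi^{*}_{\vect{w}} \in \Pi$ that is optimal for the scalar reward $r_{\vect{w}}$, i.e.\ $v^{\pi^{*}_{\vect{w}}}_{\vect{w}} = v^{*}_{\vect{w}}$. Since this policy belongs to $\Pi$, it is a feasible element in the maximization defining the SMP, so $v^{\text{SMP}}_{\vect{w}} = \max_{\pi \in \Pi} v^{\pi}_{\vect{w}} \geq v^{\pi^{*}_{\vect{w}}}_{\vect{w}} = v^{*}_{\vect{w}}$. Chaining the two inequalities yields $v^{\text{SMP}}_{\vect{w}} = v^{*}_{\vect{w}}$, as claimed.

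Frankly, there is no genuine obstacle here: the lemma is a one-line consequence of the definitions, and its role in the paper is to serve as the base case/building block for the more substantive Theorem~\ref{th:gpi-ccs}, where the SMP is combined with the GPI inequality $v^{\text{GPI}}_{\vect{w}} \geq v^{\text{SMP}}_{\vect{w}}$ of \citet{Zahavy+2021} to yield optimality of the GPI policy whenever the underlying SF set is a CCS. The only thing to be mildly careful about is to invoke the definition $v^{*}_{\vect{w}} = \max_{\pi}\vect{\psi}^{\pi}\cdot\vect{w}$ exactly as introduced in Eq.~\eqref{eq:problem}, so that the upper bound $v^{\text{SMP}}_{\vect{w}} \leq v^{*}_{\vect{w}}$ is justified by a single-sentence appeal to ``max over a subset is no larger than max over the whole space.''
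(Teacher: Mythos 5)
Your proof is correct and is essentially the paper's own argument: the paper's proof writes $v^{\text{SMP}}_{\vect{w}} = \max_{\pi\in\Pi}\vect{\psi}^{\pi}\cdot\vect{w} = \vect{\psi}^{\pi^{*}_{\vect{w}}}\cdot\vect{w} = v^{*}_{\vect{w}}$, which implicitly packages the same two observations (the optimal policy is a feasible element of the max, and no policy in $\Pi$ can exceed the optimal value) that you spell out as a sandwich of two inequalities. Your version is just a slightly more explicit rendering of the identical one-line argument.
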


\begin{proof}
\begin{align*}
    v^{\text{SMP}}_{\vect{w}} &= \max_{\pi \in \Pi} \vect{\psi}^\pi \cdot \vect{w} \\
    &= \vect{\psi}^{\pi^*_{\vect{w}}} \cdot \vect{w} \\
    &= v^*_{\vect{w}} \quad \text{(Definition of Optimal Value)}.
\end{align*}
\end{proof}

\subsection{Proof of Theorem~\ref{th:gpi-ccs}}
\begin{theorem*}
Let $\Pi \equiv \{\pi_i\}_{i=1}^{n}$ be a set of policies such that the set of their expected SFs, $\Psi = \{\vect{\psi}^{\pi_i}\}_{i=1}^{n}$, 
constitute a CCS (Eq.~\eqref{eq:new_ccs}).
Then, given any weight vector $\vect{w} {\in} \mathcal{W}$, the GPI policy $\pigpi(s;\vect{w}) \in \argmax_{a \in \mathcal{A}} \max_{\pi \in \Pi} q_{\vect{w}}^{\pi}(s,a)$ is optimal with respect to $\vect{w}$: $v_{\vect{w}}^{\text{GPI}} = v_{\vect{w}}^{*}$.
\end{theorem*}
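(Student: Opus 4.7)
The plan is to chain three ingredients already available in the excerpt: the operational meaning of the CCS, Lemma~\ref{lemma:smp}, and the inequality $v^{\text{GPI}}_{\vect{w}} \geq v^{\text{SMP}}_{\vect{w}}$ from \citet{Zahavy+2021}. The high-level picture is that SMP already hits optimality once an optimal policy sits in $\Pi$, and GPI can only improve on SMP while being capped from above by $v^*_{\vect{w}}$, so the two inequalities squeeze the GPI value to $v^*_{\vect{w}}$.

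\textbf{Step 1: CCS coverage.} First I would spell out that, for any fixed $\vect{w}\in\mathcal{W}$, the hypothesis that $\Psi$ is a CCS in the sense of Eq.~\eqref{eq:new_ccs} implies there exists an index $i^\star$ with $\vect{\psi}^{\pi_{i^\star}} \cdot \vect{w} \geq \vect{\psi}^{\pi'} \cdot \vect{w}$ for every policy $\pi'$. Using the SF identity $v^\pi_{\vect{w}} = \vect{\psi}^\pi \cdot \vect{w}$, this says $v^{\pi_{i^\star}}_{\vect{w}} = \max_{\pi'} v^{\pi'}_{\vect{w}} = v^*_{\vect{w}}$, so $\pi_{i^\star}\in\Pi$ is optimal for the scalarized reward $r_{\vect{w}}$. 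This is the key observation that unlocks Lemma~\ref{lemma:smp}; I expect this to be the only step that needs a careful argument, since one must be explicit that the CCS covers \emph{every} $\vect{w}\in\mathcal{W}$ rather than merely some witness weights.

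\textbf{Step 2: Squeeze via SMP and GPI.} With an optimal policy for $r_{\vect{w}}$ inside $\Pi$, Lemma~\ref{lemma:smp} gives $v^{\text{SMP}}_{\vect{w}} = v^*_{\vect{w}}$. Invoking the SMP-vs-GPI inequality quoted just before Lemma~\ref{lemma:smp}, namely $v^{\text{GPI}}_{\vect{w}} \geq v^{\text{SMP}}_{\vect{w}}$, I obtain $v^{\text{GPI}}_{\vect{w}} \geq v^*_{\vect{w}}$. The matching upper bound $v^{\text{GPI}}_{\vect{w}} \leq v^*_{\vect{w}}$ is immediate, because $\pigpi(\cdot;\vect{w})$ is a concrete stationary policy in the MDP with reward $r_{\vect{w}}$ and $v^*_{\vect{w}}$ is by definition the supremum of $v^\pi_{\vect{w}}$ over all such policies. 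Combining the two inequalities yields $v^{\text{GPI}}_{\vect{w}} = v^*_{\vect{w}}$, which is the claim.

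\textbf{Anticipated obstacle.} The only subtlety I foresee is making Step~1 fully rigorous: the written form of Eq.~\eqref{eq:new_ccs} only guarantees that \emph{each element} of the CCS is optimal for \emph{some} witness weight, whereas the argument requires the dual statement that \emph{every} $\vect{w}\in\mathcal{W}$ admits a CCS element that is optimal for it. This is standard from the convex-analysis characterization of CCS in \citet{Roijers+2013}, but I would state it explicitly (either as a short convexity/maximum-of-linear-functions observation, or by appealing to the ``given any new linear preferences, there exists at least one corresponding optimal policy in such a set'' property already noted in the introduction) so the chain of implications in Step~2 is airtight. Everything else is a one-line application of results already in hand.
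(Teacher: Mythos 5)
Your proposal is correct and follows essentially the same route as the paper: both arguments reduce the claim to the chain $v^{\text{GPI}}_{\vect{w}} \geq v^{\text{SMP}}_{\vect{w}} = v^*_{\vect{w}}$, using the coverage property of the CCS to place an optimal policy for $r_{\vect{w}}$ inside $\Pi$ and then invoking Lemma~\ref{lemma:smp}. The only cosmetic difference is that the paper re-derives the inequality $v^{\text{GPI}}_{\vect{w}} \geq v^{\text{SMP}}_{\vect{w}}$ from the GPI theorem (pointwise, then taking the expectation over $\mu$ and swapping $\max$ with $\Ex$), whereas you cite it directly from \citet{Zahavy+2021}; your explicit mention of the trivial upper bound $v^{\text{GPI}}_{\vect{w}} \leq v^*_{\vect{w}}$ and of the coverage subtlety in the CCS definition are both points the paper handles only implicitly or in a closing remark.
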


\begin{proof}\footnote{This proof follows part of the proof of Lemma 2 of \citet{Zahavy+2021}.}
\begin{align*}
    v^{\text{GPI}}_{\vect{w}}(s) &= q^{\text{GPI}}_{\vect{w}}(s,\pigpi(s)) \quad \forall s \ \text{}\ \\
    &\geq \max_{\pi \in \Pi, a \in \mathcal{A}} q^{\pi}_{\vect{w}}(s,a)  \quad \forall s \ \text{(GPI Theorem)} \\
    &\geq \max_{\pi \in \Pi} v^{\pi}_{\vect{w}}(s) \quad \forall s \ \text{.}
\end{align*}
Taking the expected value with respect to the initial state distribution, $\mu$, on both sides, gives us:
\begin{align*}
    \Ex_{S_0 \sim \mu} \left[v^{\text{GPI}}_{\vect{w}}(S_0)\right] &\geq  \Ex_{S_0 \sim \mu} \left[ \max_{\pi \in \Pi} v^{\pi}_{\vect{w}}(S_0) \right]\ \text{} \\
     &\geq \max_{\pi \in \Pi} \Ex_{S_0 \sim \mu} \left[ v^{\pi}_{\vect{w}}(S_0) \right], \quad \text{} \\
    v^{\text{GPI}}_{\vect{w}} &\geq \max_{\pi \in \Pi} v^{\pi}_{\vect{w}} \ \text{} \\
    &= v^{\text{SMP}}_{\vect{w}} \quad \text{(SMP Definition)}\\
    &= v^*_{\vect{w}} \quad \text{(Lemma 1 and CCS Definition).}
\end{align*}

The last equality comes from the fact that given any weight vector $\vect{w} \in \mathcal{W}$, there exists an optimal policy $\pi^*_{\vect{w}}$ such that $\vect{\psi}^{\pi^*_{\vect{w}}} \in \ccs$.
\end{proof}

\subsection{Proof of Theorem~\ref{th:gpi-epsilon-CCS}}
\begin{theorem*}
Let $\Pi = \{\pi^*_i\}_{i=1}^{n}$ be a set of optimal policies with respect to weights $\{\vect{w}_i\}_{i=1}^{n}$, such that their SF set $\Psi = \{\vect{\psi}^{\pi^*_i}\}_{i=1}^{n}$ is an $\epsilon_1$-CCS according to Def.~\eqref{def:epsilon-ccs}. Let $\vect{\phi}_{\text{max}} = \max_{s,a} ||\vect{\phi}(s,a)||$. Then, the GPI-expanded SF set $\Psi^{\text{GPI}}$ is an $\epsilon_2$-CCS where:
\begin{equation*}
    \epsilon_2 \leq \min \{\epsilon_1, \frac{2}{1-\gamma} \vect{\phi}_{\text{max}} \max_{\vect{w} \in \mathcal{W}}\min_{i} ||\vect{w} - \vect{w}_i|| \} .
\end{equation*}
\end{theorem*}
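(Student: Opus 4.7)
The plan is to establish the two bounds inside the minimum separately and then take the smaller one. Write $\pi^*_{\vect{w}}$ for an optimal policy for task $\vect{w}$ and set $v^{\text{SMP-GPI}}_{\vect{w}} := \max_{\vect{\psi}^\pi \in \Psi^{\text{GPI}}} \vect{\psi}^\pi \cdot \vect{w}$. The single observation that drives the whole proof is that, by construction, $\vect{\psi}^{\pigpi(\cdot;\vect{w})} \in \Psi^{\text{GPI}}$ for every $\vect{w} \in \mathcal{W}$, so combining this with the GPI theorem (which gives $v^{\text{GPI}}_{\vect{w}} \geq v^{\text{SMP}}_{\vect{w}}$) yields
\begin{equation*}
v^{\text{SMP-GPI}}_{\vect{w}} \;\geq\; \vect{\psi}^{\pigpi(\cdot;\vect{w})} \cdot \vect{w} \;=\; v^{\text{GPI}}_{\vect{w}} \;\geq\; v^{\text{SMP}}_{\vect{w}}
\end{equation*}
for every $\vect{w}$. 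Both bounds in the theorem will transfer from $\Psi$ to $\Psi^{\text{GPI}}$ through this single inequality.

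The first bound $\epsilon_2 \leq \epsilon_1$ follows immediately: using the inequality just derived together with the hypothesis that $\Psi$ is an $\epsilon_1$-CCS, $v^*_{\vect{w}} - v^{\text{SMP-GPI}}_{\vect{w}} \leq v^*_{\vect{w}} - v^{\text{SMP}}_{\vect{w}} \leq \epsilon_1$ for every $\vect{w} \in \mathcal{W}$.

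For the Lipschitz-type bound I would adapt the standard perturbation argument of Barreto et al.~(2017). For any $\vect{w} \in \mathcal{W}$ and any index $i$, I write
\begin{equation*}
v^*_{\vect{w}} - v^{\pi^*_i}_{\vect{w}} \;=\; (\vect{\psi}^{\pi^*_{\vect{w}}} - \vect{\psi}^{\pi^*_i}) \cdot \vect{w}_i \;+\; (\vect{\psi}^{\pi^*_{\vect{w}}} - \vect{\psi}^{\pi^*_i}) \cdot (\vect{w} - \vect{w}_i),
\end{equation*}
noting that the first summand is $\leq 0$ by optimality of $\pi^*_i$ on $\vect{w}_i$, and bound the second via Cauchy--Schwarz together with $\|\vect{\psi}^\pi\| \leq \vect{\phi}_{\text{max}}/(1-\gamma)$ (a consequence of Jensen's inequality applied to the geometric series defining $\vect{\psi}^\pi$). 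The triangle inequality produces the factor of $2$, giving
\begin{equation*}
v^*_{\vect{w}} - v^{\pi^*_i}_{\vect{w}} \;\leq\; \frac{2}{1-\gamma}\,\vect{\phi}_{\text{max}}\,\|\vect{w} - \vect{w}_i\|.
\end{equation*}
Since $\vect{\psi}^{\pi^*_i} \in \Psi$, we have $v^{\text{SMP}}_{\vect{w}} \geq v^{\pi^*_i}_{\vect{w}}$, so minimising the right-hand side over $i$ and invoking $v^{\text{SMP-GPI}}_{\vect{w}} \geq v^{\text{SMP}}_{\vect{w}}$ from the preliminary observation gives $v^*_{\vect{w}} - v^{\text{SMP-GPI}}_{\vect{w}} \leq \tfrac{2}{1-\gamma}\vect{\phi}_{\text{max}} \min_i \|\vect{w} - \vect{w}_i\|$. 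Taking the maximum over $\vect{w}$ produces the second term of the stated minimum, and combining with the first bound yields the theorem.

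I do not anticipate a real obstacle: every step is a one-line manipulation (GPI monotonicity, the optimality inequality $\vect{\psi}^{\pi^*_i} \cdot \vect{w}_i \geq \vect{\psi}^{\pi^*_{\vect{w}}} \cdot \vect{w}_i$, Cauchy--Schwarz, the Jensen/geometric-series bound on SF norms, and the triangle inequality). The only point requiring real care is to propagate both bounds from $\Psi$ to the potentially much larger set $\Psi^{\text{GPI}}$, which is exactly what the preliminary observation accomplishes by exhibiting one element of $\Psi^{\text{GPI}}$ -- the SF of the GPI policy at $\vect{w}$ -- whose scalarised value already dominates $v^{\text{SMP}}_{\vect{w}}$.
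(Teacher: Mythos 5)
Your proof is correct, and the two halves compare differently to the paper's. The first bound ($\epsilon_2\le\epsilon_1$) is identical to the paper's argument: GPI monotonicity $v^{\text{GPI}}_{\vect{w}}\ge v^{\text{SMP}}_{\vect{w}}$ plus the $\epsilon_1$-CCS hypothesis. For the second bound you take a genuinely different route. The paper invokes Theorem 2 of Barreto et al.\ (2017) as a black box, which bounds $q^*_{\vect{w}}(s,a)-q^{\text{GPI}}_{\vect{w}}(s,a)$ pointwise in $(s,a)$, and then has to construct an augmented MDP with a dummy initial state $\bar{s}$ and single action $\bar{a}$ so that the state--action bound can be read off as a bound on $v^*_{\vect{w}}-v^{\text{GPI}}_{\vect{w}}$ under the initial-state distribution. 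You instead re-derive the Lipschitz bound directly at the level of expected SF vectors via the decomposition $(\vect{\psi}^{\pi^*_{\vect{w}}}-\vect{\psi}^{\pi^*_i})\cdot\vect{w}=(\vect{\psi}^{\pi^*_{\vect{w}}}-\vect{\psi}^{\pi^*_i})\cdot\vect{w}_i+(\vect{\psi}^{\pi^*_{\vect{w}}}-\vect{\psi}^{\pi^*_i})\cdot(\vect{w}-\vect{w}_i)$, killing the first term by optimality of $\pi^*_i$ on $\vect{w}_i$ and bounding the second by Cauchy--Schwarz and $\|\vect{\psi}^\pi\|\le\vect{\phi}_{\text{max}}/(1-\gamma)$; this bounds $v^*_{\vect{w}}-v^{\text{SMP}}_{\vect{w}}$ with the same constant, which suffices since $v^{\text{SMP}}_{\vect{w}}\le v^{\text{GPI}}_{\vect{w}}\le\max_{\vect{\psi}\in\Psi^{\text{GPI}}}\vect{\psi}\cdot\vect{w}$. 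Your version is self-contained, avoids the dummy-state construction entirely, and is also slightly more careful than the paper on one point: by introducing $v^{\text{SMP-GPI}}_{\vect{w}}$ and noting that $\vect{\psi}^{\pigpi(\cdot;\vect{w})}\in\Psi^{\text{GPI}}$, you make explicit why a bound on $v^*_{\vect{w}}-v^{\text{GPI}}_{\vect{w}}$ certifies that $\Psi^{\text{GPI}}$ (as a \emph{set}, per Definition~\ref{def:epsilon-ccs}) is an $\epsilon_2$-CCS, a step the paper leaves implicit. What the paper's route buys in exchange is that it would inherit the extra $\epsilon$-approximation term from Barreto et al.'s theorem for free if the policies were only approximately optimal, whereas your argument uses exact optimality of $\pi^*_i$ on $\vect{w}_i$ to discard the first summand.
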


\begin{proof}
We start proving that $\epsilon_2 \leq \epsilon_1$. From Lemma 2 of \citet{Zahavy+2021}, we have that:
\begin{align*}
    \forall{\vect{w}} \in \mathcal{W}, \ &v^{\text{GPI}}_{\vect{w}} \geq v^{\text{SMP}}_{\vect{w}}, \\
    &v^*_{\vect{w}} - v^{\text{GPI}}_{\vect{w}} \leq v^*_{\vect{w}} - v^{\text{SMP}}_{\vect{w}} .
\end{align*}
Because $\Psi$ is an $\epsilon_1$-CCS, then: 
\begin{align*}
    \forall{\vect{w}} \in \mathcal{W}, \
    &v^*_{\vect{w}} - v^{\text{GPI}}_{\vect{w}} \leq v^*_{\vect{w}} - v^{\text{SMP}}_{\vect{w}} \leq \epsilon_1 .
\end{align*}
Therefore, it must exist an $\epsilon_2 \leq \epsilon_1$ such that:
\begin{align}
\label{eq:upb1}
    \forall{\vect{w}} \in \mathcal{W}, \
    &v^*_{\vect{w}} - v^{\text{GPI}}_{\vect{w}} \leq \epsilon_2 \leq \epsilon_1 .
\end{align}
From Theorem 2 of \citet{Barreto+2017}, we have that $\forall{\vect{w}}\in\mathcal{W}$ and $\forall{(s,a)}\in\mathcal{S}\times\mathcal{A}$:
\begin{align}
\label{eq:barreto-bound}
    q^*_{\vect{w}}(s,a) - q^{\text{GPI}}_{\vect{w}}(s,a) \leq \frac{2}{1-\gamma} \vect{\phi}_{\text{max}} \min_{i} ||\vect{w} - \vect{w}_i|| .
\end{align}
Without loss of generality, consider a new state space $\bar{\mathcal{S}} = \mathcal{S} \cup \{\bar{s}\}$, where $\bar{s}$ is a new dummy initial state in which only a single action $\bar{a}$ is available.
Let $p(s_0|\bar{s},\bar{a}) = d_0(s)$ for all $s_0 \in \mu$, where $d_0(s_0)$ is the original probability of the initial state being $s_0$. 
Notice that this does not change the values of any policy for the states $s \in \mathcal{S}$. Hence:
\begin{align*}
    q^*_{\vect{w}}(\bar{s},\bar{a}) - q^{\text{GPI}}_{\vect{w}}(\bar{s},\bar{a}) &= q^*_{\vect{w}}(\bar{s},\pi^*(\bar{s})) - q^{\text{GPI}}_{\vect{w}}(\bar{s},\pigpi(\bar{s})) ,\\
    &= v^*_{\vect{w}} - v^{\text{GPI}}_{\vect{w}} .
\end{align*}
Because (\ref{eq:barreto-bound}) holds for any $\forall{(s,a)}\in\mathcal{S}\times\mathcal{A}$, it holds for $(\bar{s},\bar{a})$, and we have that:
\begin{align*}
    \forall{\vect{w}}\in\mathcal{W}, \ v^*_{\vect{w}} - v^{\text{GPI}}_{\vect{w}} &\leq \frac{2}{1-\gamma} \vect{\phi}_{\text{max}} \min_{i} ||\vect{w} - \vect{w}_i||.
\end{align*}
Finally, in the worst case:
\begin{align}
\label{eq:upb2}
    \forall{\vect{w}} \in \mathcal{W}, \
    v^*_{\vect{w}} - v^{\text{GPI}}_{\vect{w}} \leq \epsilon_2 \leq \frac{2}{1-\gamma} \vect{\phi}_{\text{max}} \max_{\vect{w} \in \mathcal{W}}\min_{i} ||\vect{w} - \vect{w}_i|| .
\end{align}
Combining upper-bounds (\ref{eq:upb1}) and (\ref{eq:upb2}) for $\epsilon_2$ completes the proof.
\end{proof}

\subsection{Equivalence of \citet{Nemecek&Parr2021} and SFOLS Upper-Bounds}
\label{sec:equivalence-nemecek}

Given a set of policies $\Pi = \{\pi_i\}_{i=1}^{n}$ which are optimal, respectively, with respect to the tasks defined by weight vectors $\{\vect{w}'_i\}_{i=1}^{n}$, and given a novel weight vector $\vect{w} \in \mathbb{R}^d$, \citet{Nemecek&Parr2021} proposed a method to compute an upper-bound for the optimal value $v^*_{\vect{w}}$ by solving the following linear program:
\begin{align*}
    \min& \sum_{i=1}^{n} \alpha_i v^{\pi_i}_{\vect{w}'_i}\\
    \text{subject to}& \sum_{i=1}^{n} \alpha_i w'_{i,j} = w_j, \ j = 1,...,d\\
    &\alpha_i \geq 0,  \ i = 1,...,n .
\end{align*}
where $\vect{\alpha} \in \mathbb{R}^n$ is the vector of variables.
Let $\vect{\psi} \in \mathbb{R}^{m}$ be the dual variables of this linear program. Then, the dual linear program can be expressed as:
\begin{align*}
    \max& \sum_{j=1}^{d} \psi_j w_{j}\\
    \text{subject to}& \sum_{j=1}^{d} \psi_j w'_{i,j} \leq v^{\pi_i}_{\vect{w}'_i}, \ i = 1,...,n\\
    &\psi_j \in \mathbb{R},  \ j = 1,...,d .
\end{align*}
This is the same linear program used by SFOLS in Algorithm~\ref{alg:improvement}, which is adapted from the OLS algorithm \cite{Roijers2016}, to compute the optimistic upper-bound $\bar{v}^*_{\vect{w}}$.
Hence, as the linear programs of \citet{Nemecek&Parr2021} and of SFOLS are dual, they share the same optimal value.

\subsection{Optimal Unsupervised Skill Discovery}
\label{sec:skill-discovery}

Recently, \citet{Eysenbach+2022} studied the problem of unsupervised skill discovery~\cite{Gregor+2016,Eysenbach+2019,Hansen+2020,Liu&Abbeel2021} under a geometric perspective. They characterize policies by their discounted state occupancy measure, a $|\mathcal{S}|$-dimensional point lying on the probability simplex.
Formally, the discounted state occupancy measure of a policy $\pi$ is defined as:
\begin{equation}
    \rho^\pi(s) \equiv (1-\gamma) \sum_{t=0}^{\infty} \gamma^t P^\pi_t(s),
\end{equation}
where $P^\pi_t(s)$ is the probability that policy $\pi$ visits state $s$ at time $t$. 
Then, each policy can be represented as an $|\mathcal{S}|$-dimensional point $\vect{\rho}^\pi = [\rho^\pi(s_1) \ ... \ \rho^\pi(s_{|\mathcal{S}|})]^\top$. 
Since states are assumed to be discrete, any reward function, $r_{\vect{w}}$, can be represented as a vector $\vect{w} \in \mathbb{R}^{|\mathcal{S}|}$, where $\vect{w} = [r_{\vect{w}}(s_1) \ ... \ r_{\vect{w}}(s_{|\mathcal{S}|})]^\top$.
Based on this representation, the expected return of a policy, $v^\pi_{\vect{w}}$, can be expressed as the inner product between its state
occupancy measure and the reward vector. That is, $v^\pi_{\vect{w}} = \vect{\rho}^\pi \cdot \vect{w}$. \citet{Eysenbach+2022} then introduce two propositions, which we restate here:
\begin{proposition}
\cite{Eysenbach+2022}. For every state-dependent reward function, at least one policy that maximizes that reward function lies at a vertex of the discounted state occupancy polytope. That is, given any reward function, at least one of the corresponding optimal policies lies at a vertex of the polytope.
\end{proposition}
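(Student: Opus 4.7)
The plan is to reduce the claim to the fundamental theorem of linear programming: maximizing a linear objective over a bounded convex polytope is always attained at (at least) one vertex. The identity $v^\pi_{\vect{w}} = \vect{\rho}^\pi \cdot \vect{w}$ stated just before the proposition already says that, as a function of the discounted state occupancy measure $\vect{\rho}^\pi$, the expected return is linear in $\vect{\rho}^\pi$ for fixed $\vect{w}$. So once the feasible set of state occupancy measures is shown to be a bounded convex polytope, the proposition follows immediately.

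First I would make precise the set of interest: let
\begin{equation*}
\mathcal{P} \equiv \{ \vect{\rho}^\pi \in \mathbb{R}^{|\mathcal{S}|} \,:\, \pi \text{ a stationary policy in } M \}.
\end{equation*}
I would then invoke the standard LP characterization of MDPs: $\mathcal{P}$ coincides with the set of nonnegative vectors $\vect{\rho} \in \mathbb{R}_{\geq 0}^{|\mathcal{S}|}$ satisfying the Bellman flow equations, which (after discounting and including the initial state distribution $\mu$) take the form of a finite system of linear equalities and inequalities. Hence $\mathcal{P}$ is a polyhedron, and because $\sum_{s} \rho^\pi(s) = 1$ for every stationary $\pi$, it is bounded; so $\mathcal{P}$ is a convex polytope with finitely many vertices. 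This step is the technical heart of the argument, but it is classical (equivalent to the dual LP formulation of discounted MDPs), so I would simply cite it rather than re-derive it.

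Next I would fix an arbitrary reward vector $\vect{w} \in \mathbb{R}^{|\mathcal{S}|}$ and consider the linear program $\max_{\vect{\rho} \in \mathcal{P}} \vect{\rho} \cdot \vect{w}$. By the fundamental theorem of linear programming, this maximum is attained at (at least) one vertex $\vect{\rho}^* \in \mathcal{P}$. Let $\pi^*$ be any stationary policy achieving $\vect{\rho}^{\pi^*} = \vect{\rho}^*$; such a $\pi^*$ exists by the definition of $\mathcal{P}$ and can be recovered from $\vect{\rho}^*$ via the usual normalization $\pi^*(a|s) \propto \rho^*(s,a)$ in the state--action occupancy formulation. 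Then
\begin{equation*}
v^{\pi^*}_{\vect{w}} = \vect{\rho}^{\pi^*} \cdot \vect{w} = \max_{\vect{\rho} \in \mathcal{P}} \vect{\rho} \cdot \vect{w} = \max_{\pi} v^\pi_{\vect{w}},
\end{equation*}
so $\pi^*$ is optimal for the reward $\vect{w}$ and its occupancy lies at a vertex of $\mathcal{P}$, completing the proof.

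The main obstacle is conceptual rather than computational: establishing that the set of achievable discounted state occupancy measures is exactly a bounded convex polytope (and not merely a convex set), which is what legitimizes the use of the vertex-attainment theorem. I expect this step to be handled by citing the standard LP formulation of MDPs; once that is in place, the linearity observation and a one-line appeal to linear programming finish the argument.
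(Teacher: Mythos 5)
The paper does not prove this proposition: it is restated verbatim from \citet{Eysenbach+2022} with a citation, so there is no in-paper proof to compare against. Your argument is correct and is the standard one underlying the cited work's geometric perspective: the set of achievable discounted state occupancy measures is a bounded convex polytope (the linear image of the state--action occupancy polytope cut out by the Bellman flow constraints), the return under a state-dependent reward is linear in $\vect{\rho}^\pi$, and the fundamental theorem of linear programming places a maximizer at a vertex, from which a stationary optimal policy is recovered by normalizing the state--action occupancies. Your handling of the one genuine subtlety---that the proposition concerns the \emph{state} occupancy polytope while the LP characterization is naturally stated for state--action occupancies---is adequate, since you maximize directly over the projected polytope and only need that every point of it is realized by some policy.
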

\begin{proposition}
\cite{Eysenbach+2022}. For every vertex $\vect{\rho}^\pi$ of the state occupancy polytope, there exists a reward function for which $\pi$ is optimal. That is, every vertex of the polytope is associated with the optimal policy of a reward function. \end{proposition}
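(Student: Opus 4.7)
The plan is to turn the proposition into a standard supporting-hyperplane argument on the (convex) polytope of achievable discounted state occupancy measures. The first step is to recall that the set $\mathcal{P} = \{\vect{\rho}^\pi : \pi\} \subseteq \mathbb{R}^{|\mathcal{S}|}$ is a bounded convex polytope. This is a classical fact about finite MDPs: feasible state-occupancy measures are exactly the nonnegative solutions of the Bellman flow equations, which form a bounded polyhedron, hence a polytope. With that identification in place, the statement becomes a question purely about linear optimization over a polytope.

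Next I would invoke the standard characterization of vertices (equivalently, $0$-dimensional faces) of a convex polytope as exposed points. Concretely, given any vertex $\vect{\rho}^\pi$ of $\mathcal{P}$, I would choose $\vect{w} \in \mathbb{R}^{|\mathcal{S}|}$ in the (relative) interior of the normal cone to $\mathcal{P}$ at $\vect{\rho}^\pi$, which yields
\begin{equation*}
    \vect{\rho}^\pi \cdot \vect{w} \; > \; \vect{\rho}' \cdot \vect{w} \quad \text{for every } \vect{\rho}' \in \mathcal{P} \setminus \{\vect{\rho}^\pi\}.
\end{equation*}
Since $\mathcal{P}$ has finitely many vertices, the hyperplane $\{\vect{\rho}: \vect{\rho} \cdot \vect{w} = \vect{\rho}^\pi \cdot \vect{w}\}$ can be chosen to contain $\vect{\rho}^\pi$ and strictly separate it from all other vertices, and therefore (by convexity) from all of $\mathcal{P} \setminus \{\vect{\rho}^\pi\}$.

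The last step is to reinterpret $\vect{w}$ as a state-dependent reward function by setting $r_{\vect{w}}(s_i) \equiv w_i$. Using the identity $v^{\pi'}_{\vect{w}} = \vect{\rho}^{\pi'} \cdot \vect{w}$ available for every policy $\pi'$, the inequality above immediately yields $v^{\pi}_{\vect{w}} \geq v^{\pi'}_{\vect{w}}$ for all $\pi'$. Combined with Proposition~1, which asserts that at least one maximizer of any state-dependent reward lies at a vertex of $\mathcal{P}$, this shows that $\pi$ is optimal for $r_{\vect{w}}$.

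The main obstacle, if any, is justifying the ``vertex $\Rightarrow$ exposed point'' step cleanly without importing unnecessary convex-analysis machinery; I would handle this by a short lemma (or a citation) noting that every face of a convex polytope is exposed, which in particular applies to $0$-dimensional faces. The other routine check is confirming that $\mathcal{P}$ is a polytope rather than merely a convex body, but this is immediate from the standard occupancy-measure LP formulation of finite MDPs.
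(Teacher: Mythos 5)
The paper does not actually prove this proposition: it is restated verbatim from \citet{Eysenbach+2022} purely as a citation, so there is no in-paper argument to compare yours against. Your proof is correct and is essentially the standard argument (and, as far as the cited source goes, the same one used there): the set of discounted state occupancy measures is a bounded polyhedron cut out by the Bellman flow constraints, every vertex of a polytope is an exposed point, and the exposing linear functional $\vect{w}$ can be read off as a state-dependent reward via $v^{\pi'}_{\vect{w}} \propto \vect{\rho}^{\pi'}\cdot\vect{w}$ (the $(1-\gamma)$ normalization is harmless since it does not change the argmax). Two small remarks. First, your closing appeal to Proposition~1 is unnecessary: once you have $\vect{\rho}^{\pi}\cdot\vect{w} \ge \vect{\rho}^{\pi'}\cdot\vect{w}$ for every policy $\pi'$, that is already the statement that $\pi$ is optimal for $r_{\vect{w}}$ (with respect to the initial-state distribution, which is the notion of optimality in play here, since occupancy measures are defined relative to $\mu$); Proposition~1 adds nothing. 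Second, the ``vertex $\Rightarrow$ exposed'' step you flag as the main obstacle is genuinely trivial for polytopes (take $\vect{w}$ to be a suitable nonnegative combination of the outward normals of the constraints active at the vertex), so a one-line lemma suffices and no convex-analysis machinery about non-exposed extreme points of general convex bodies is needed.
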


Next, \cite{Eysenbach+2022} define the following open problem in the unsupervised skill discovery literature, which current state-of-the-art skill learning algorithms are unable to solve:
\begin{definition}
\label{def:vertex-discovery}
 \textit{(Vertex discovery problem \cite{Eysenbach+2022}) Given a controlled Markov process (i.e., an MDP without a reward function), find the smallest set of policies such that every vertex of the discounted state occupancy polytope contains at least one policy.} 
\end{definition}


\textbf{The importance of this open problem is the following. If solved, it would allow one to identify the smallest set of policies that solve \textit{all} possible tasks that may be defined over a controlled MDP. \footnote{The need for identifying the smallest set of policies is due to the fact that many tasks may share the same optimal policy.} Intuitively, then, identifying the (finite) set policies at the vertices of the state occupancy polytope allows one to solve \textit{all} possible tasks of interest.
}

Our key insight is that this problem is equivalent to the problem of finding a set of policies whose state occupancy measures form a CCS:
\begin{equation}
    \ccs = \{\vect{\rho}^\pi \ | \ \exists \vect{w}\ \text{s.t.}\ \forall \vect{\rho}^{\pi'}, \vect{\rho}^\pi \cdot \vect{w} \geq \vect{\rho}^{\pi'} \cdot \vect{w} \}.
\end{equation}

In the discrete state case, if the reward features $\vect{\phi}_t$ are a one-hot representation of the agent's state in $\mathbb{R}^{|\mathcal{S}|}$, then SFs correspond to the \textit{successor representation} (SR) \cite{Dayan1993}. Notably, the SR $\vect{\psi}^\pi \in \mathbb{R}^{|\mathcal{S}|}$ is also the  discounted state occupancy measure associated with policy $\pi$:\footnote{We omit the normalizing $(1-\gamma)$ term for clarity.} 
\begin{align}
    \vect{\psi}^\pi &\equiv \Ex_\pi \left[ \sum_{t=0}^{\infty} \gamma^t \vect{\phi}_t \right] \\
    &=  \sum_{t=0}^{\infty} \gamma^t \Ex_\pi \left[\vect{\phi}_t \right] \\
    &= \sum_{t=0}^{\infty} \gamma^t [P^\pi_t(s_1) \ ... \ P^\pi_t(s_{|\mathcal{S}|})]^\top \\
    &= [\rho^\pi(s_1) \ ... \ \rho^\pi(s_{|\mathcal{S}|})]^\top \\
    &\equiv \vect{\rho}^\pi .
\end{align}
Hence, SFOLS can be employed to solve the open problem stated by \citet{Eysenbach+2022} 
(Definition~\ref{def:vertex-discovery}).

Notice that although SFOLS solves this open problem, it may be challenging---in practice---to use it to identify the optimal set of skills. Recall that in the general case, the number of corner weights grows exponentially with $d$ (the number of objectives) \cite{Roijers2016}. This is generally feasible since the number of objectives is typically significantly smaller than the number of states. If using SFOLS to tackle the skill-discovery problem, however, $d$ is equal to the number of states. An interesting future direction is to exploit properties particular to the setting proposed by \cite{Eysenbach+2022} in order to reduce the complexity of SFOLS and employ it to identify optimal sets of skills.

\section{Experiments Details}
\label{ap:experiment}

The code containing the algorithms and training scripts necessary to reproduce the results are available at \url{https://github.com/LucasAlegre/sfols}.

\subsection{Environments}

All environments used in the experiments can be found in the MO-Gym library \cite{Alegre2022}.

\paragraph{Deep Sea Treasure.}
The Deep Sea Treasure is a classic MORL environment \cite{Vamplew+2011,vanMoffaert&Nowe2014,Abels+2019,Yang+2019}.
The agent's state at a given time step $t$ is its coordinates in the grid, $S_t = [x,y]$.
The action space consists of four directions the agent can move to, $\mathcal{A} = \{ \text{up},\text{down},\text{left},\text{right}\}$.
The first component of the feature/reward vector $\vect{\phi}(s,a,s') \in \mathbb{R}^2$ is the treasure value\footnote{We adopted the treasures values as defined in \citet{Yang+2019}.} (or zero if the agent is in a blank cell), as shown in the left of Figure~\ref{fig:dst}, and the second component is a time penalty of $-1$ in all states. 
The cells with treasures are also terminal states.
We considered a discount factor of $\gamma = 0.99$ in this domain.
There are ten different optimal values in this domain's CCS, each corresponding to a policy that reaches one of the ten treasures in the map.

\paragraph{Four Room.}
The Four Room domain \cite{Barreto+2017,Gimelfarb+2021} is defined by a grid of dimensions $13\times13$ containing four rooms separated by walls.
Each time step $t$, the agent occupies a cell and can move to one of the four directions $\mathcal{A} = \{ \text{up},\text{down},\text{left},\text{right}\}$. If the destination cell is a wall, then the agent remains in its current cell.
The grid contains 3 different types of objects the agent can pick up, as shown in the left of Figure~\ref{fig:fourroom}.
There are 4 instances of each type of object in the grid.
The state space consists of the concatenation of the agent's current x-y coordinates and a set of binary variables indicating whether or not each object has already been picked up: $\mathcal{S} = \{0,..,12\}^2 \times \{0,1\}^{12}$.
The features $\vect{\phi}(s,a,s') \in \{0,1\}^3$ are one-hot encoded vectors indicating the type of object present in the current cell.
If there are no objects in the cell, the features are zeroed. 
A special case is the goal cell at the upper-right of the map, which has all features activated and terminates the episode.
We considered a discount factor of $\gamma = 0.95$ in this domain.

\paragraph{Reacher.} 
We adapt the Reacher environment from PyBullet \cite{Ellenberger2018}, as done in \citet{Barreto+2017,Gimelfarb+2021,Nemecek&Parr2021}.\footnote{The code for this environment and Figure~\ref{fig:reacher} (left) were adapted from \citet{Gimelfarb+2021}.}
The agent's state space $\mathcal{S} \subset \mathbb{R}^4$ consists of the angles and angular velocities of the robotic arm's two joints.
The agent's initial state is the one shown in the left of Figure~\ref{fig:reacher}.
The action space, originally continuous, is discretized using 3 values per dimension corresponding to maximum positive (+1), negative (-1), and zero torque for each actuator. This results in a total of 9 possible actions: $\mathcal{A} = \{-1,0,+1\}^2$.
Each feature $\vect{\phi}(s,a,s') \in \mathbb{R}^4$ is computed as $\phi_i(s,a,s') = 1 - 4 \Delta(\text{target}_i), i=1...4$, where $\Delta(\text{target}_i)$ is the Euclidean distance of the tip of the robotic arm to the $i$-th target position.
We considered a discount factor of $\gamma = 0.9$ in this domain.

\subsection{Computing Corner Weights}
\label{sec:corner-weights}

Computing the corner weights (line 11 of Algorithm~\ref{alg:ols}) is an important step of SFOLS.
In Algorithm~\ref{algo:corner-weights} we present the algorithm used to compute the corner weights efficiently in each iteration. For more details on each of these steps, see Chapter~3 of \citet{Roijers2016}.

\begin{algorithm}[ht]
\label{algo:corner-weights}
\caption{Corner Weights \cite{Roijers2016}}
\begin{algorithmic}[1]
   \STATE {\bfseries Input:} New SF vector $\vect{\psi}^\pi$, current weight vector $\vect{w}$, current SF set $\Psi$.

  \STATE Let $\mathcal{W}_{del}$ be the set of obsolete weights removed from $Q$ in line 10 of Algorithm~\ref{alg:ols}
  
  \STATE Add $\vect{w}$ to $\mathcal{W}_{del}$
  
    \STATE $\mathcal{V}_{rel} \leftarrow \{\vect{\psi}^\pi | \vect{\psi}^\pi \in \argmax_{\vect{\psi}^\pi \in \Psi} \vect{\psi}^\pi \cdot \vect{w}' \text{ for at least one } \vect{w}' \in \mathcal{W}_{del}\}$

    \STATE $\mathcal{B}_{rel} \leftarrow$ the set of boundaries of the weight simplex $\mathcal{W}$ involved in any $\vect{w}' \in \mathcal{W}_{del}$ 
    
    \STATE $\mathcal{W}_{c} \leftarrow \{\}$
    
    \FOR{each subset $\mathcal{X}$ of $d-1$ elements from $\mathcal{V}_{rel} \cup \mathcal{B}_{rel}$}
        \STATE $\vect{w}_c \leftarrow$ the weight in $\mathcal{W}$ where $\vect{\psi}^\pi$ intersects with the vectors/boundaries in $\mathcal{X}$
        \STATE Add $\vect{w}_c$ to $\mathcal{W}_c$
    \ENDFOR
    
    \STATE {\bfseries return} $\mathcal{W}_c$
\end{algorithmic}
\end{algorithm}

\subsection{Additional Results}
\label{sec:dst-iterations}

In Figure~\ref{fig:dst-iterations}, we show the SF set, $\Psi$, and the GPI-expanded SF set , $\Psi^{\text{GPI}}$, computed at each iteration of SFOLS in the DST environment.
In the upper-left panel, we also show the variation of the hypervolume metric \cite{Hayes+2022} at each iteration.
Given a partial CCS, $\Psi$, and a reference point $\vect{\psi}_{\text{ref}}$, the hypervolume is defined as:
\begin{equation}
  \text{hypervolume}(\Psi, \vect{\psi}_{\text{ref}}) = \bigcup_{\vect{\psi}^\pi \in \Psi} \text{volume}(\vect{\psi}_{\text{ref}}, \vect{\psi}^\pi),  
\end{equation}
where $\text{volume}(\vect{\psi}_{\text{ref}}, \vect{\psi}^\pi)$ is the volume of the hypercube spanned by the reference vector, $\vect{\psi}_{\text{ref}}$, and the vector $\vect{\psi}^\pi$.
Notice that the hypervolume does not necessarily correlate with the mean value achieved over the space of reward weights $\mathcal{W}$. For instance, a set of diverse sub-optimal policies with low mean value can still have a high value of the hypervolume metric.

Notice that, from iteration $3$ onward, SFOLS+GPI reaches optimal performance for every weight vector, that is, $\Psi^{\text{GPI}} = \ccs$.
Meanwhile, the SF set only recovers the complete CCS at iteration 13.

\begin{figure*}[!h]
    \vskip 0.1in
	\begin{center}
		\begin{tabular}{cccc}
			\includegraphics[width=0.23\linewidth]{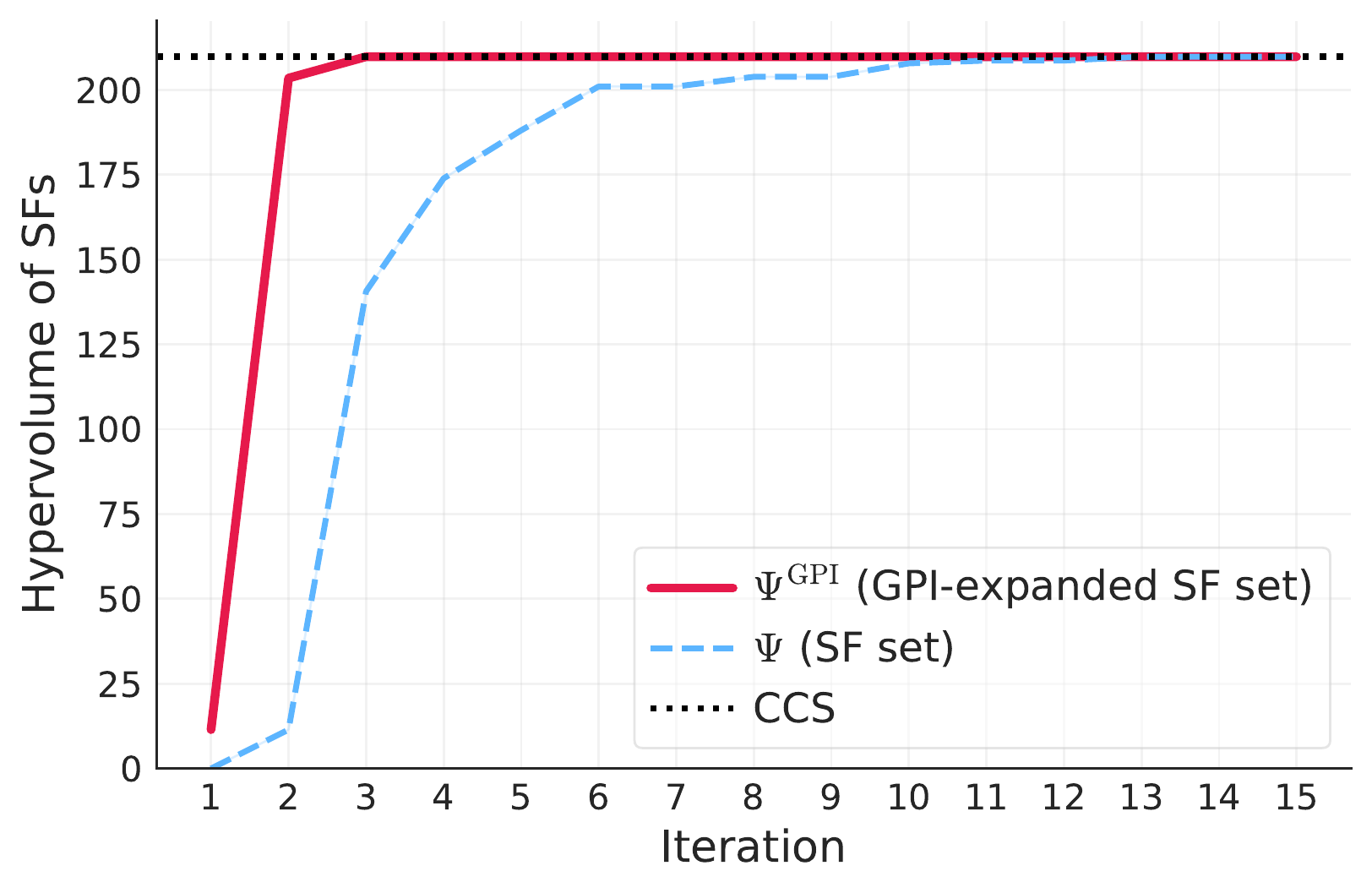} &
			\includegraphics[width=0.23\linewidth]{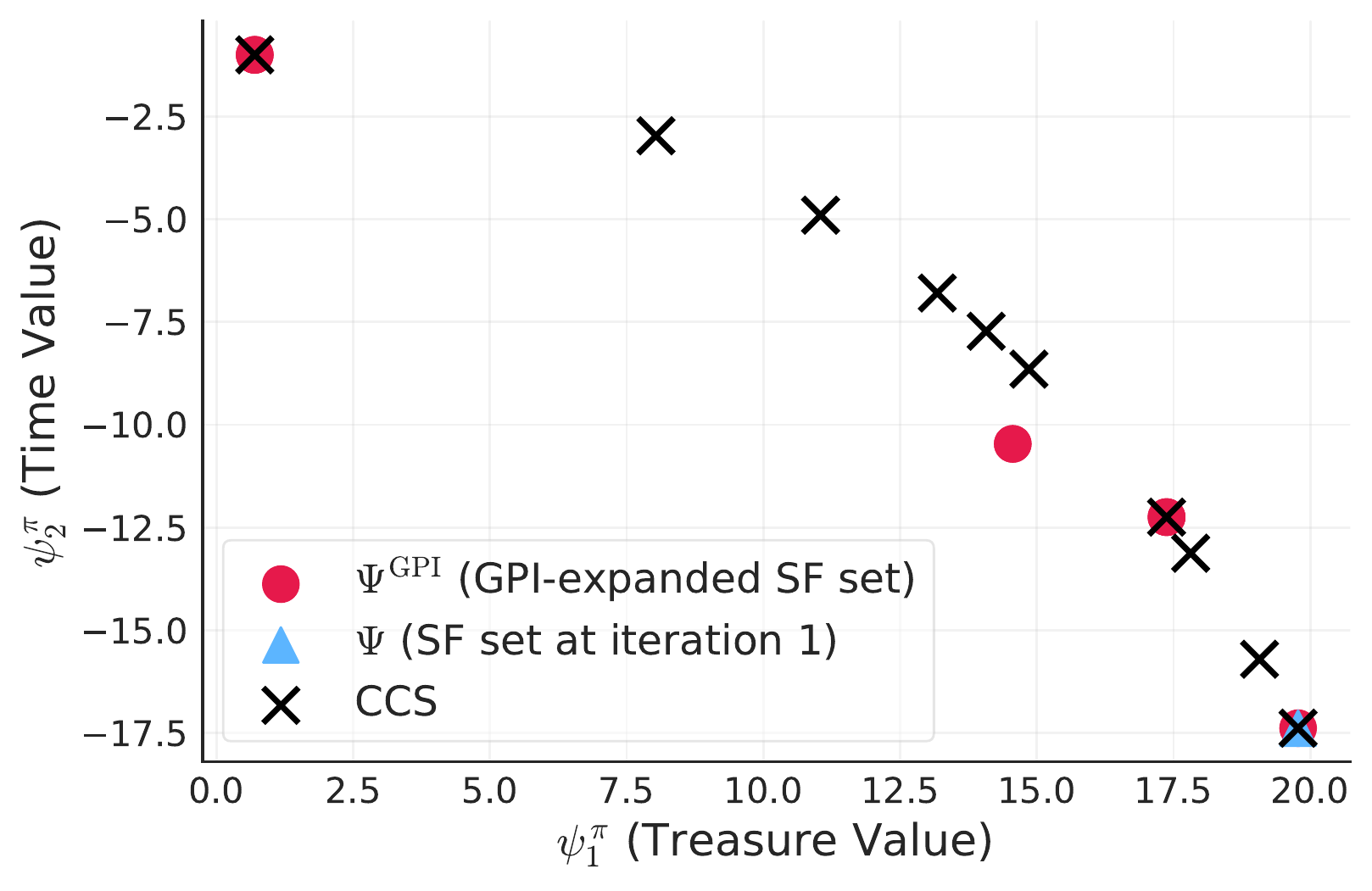} & \includegraphics[width=0.23\linewidth]{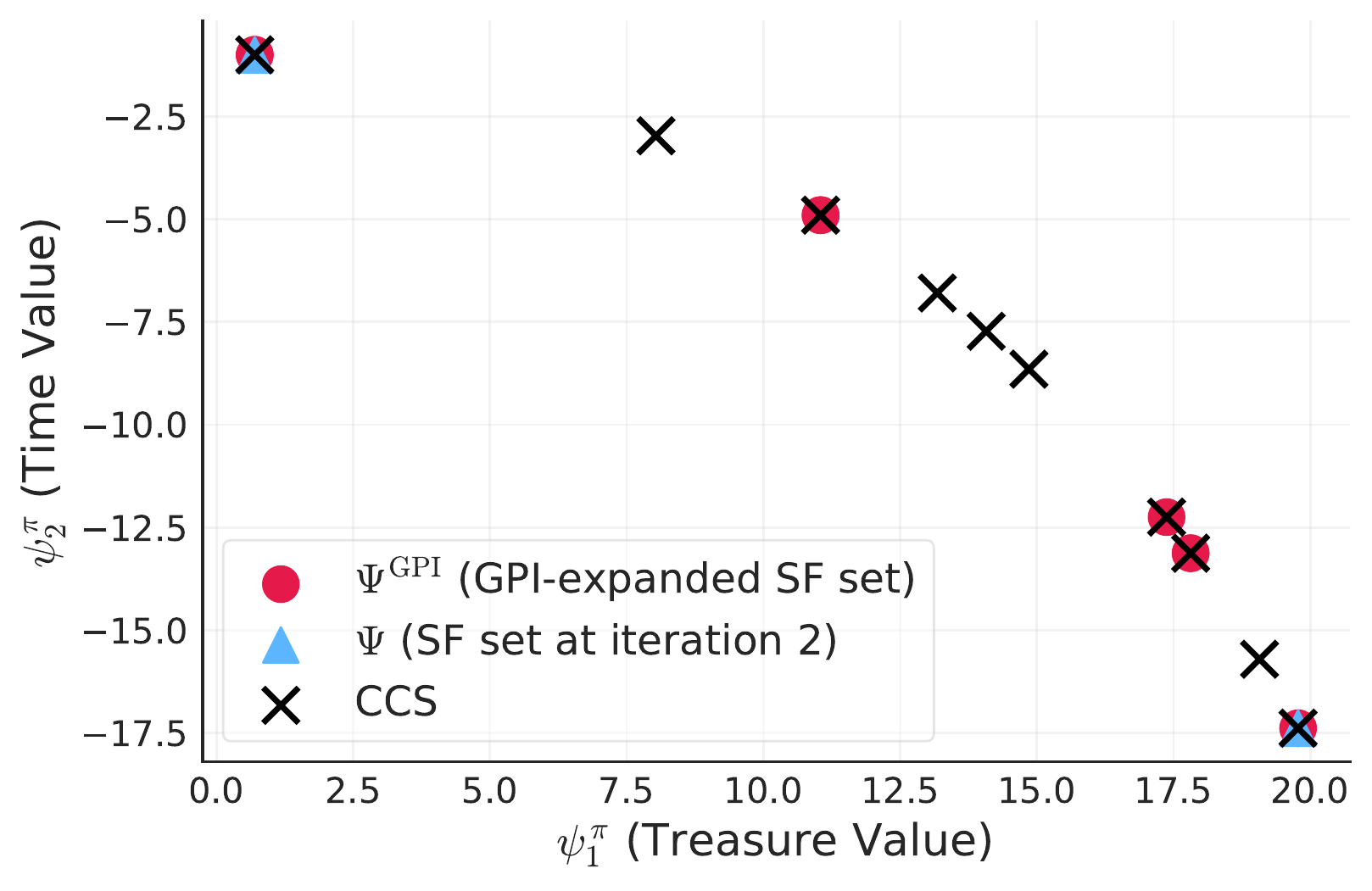} &
			\includegraphics[width=0.23\linewidth]{figs/ccs_dst3.pdf} \\
			\includegraphics[width=0.23\linewidth]{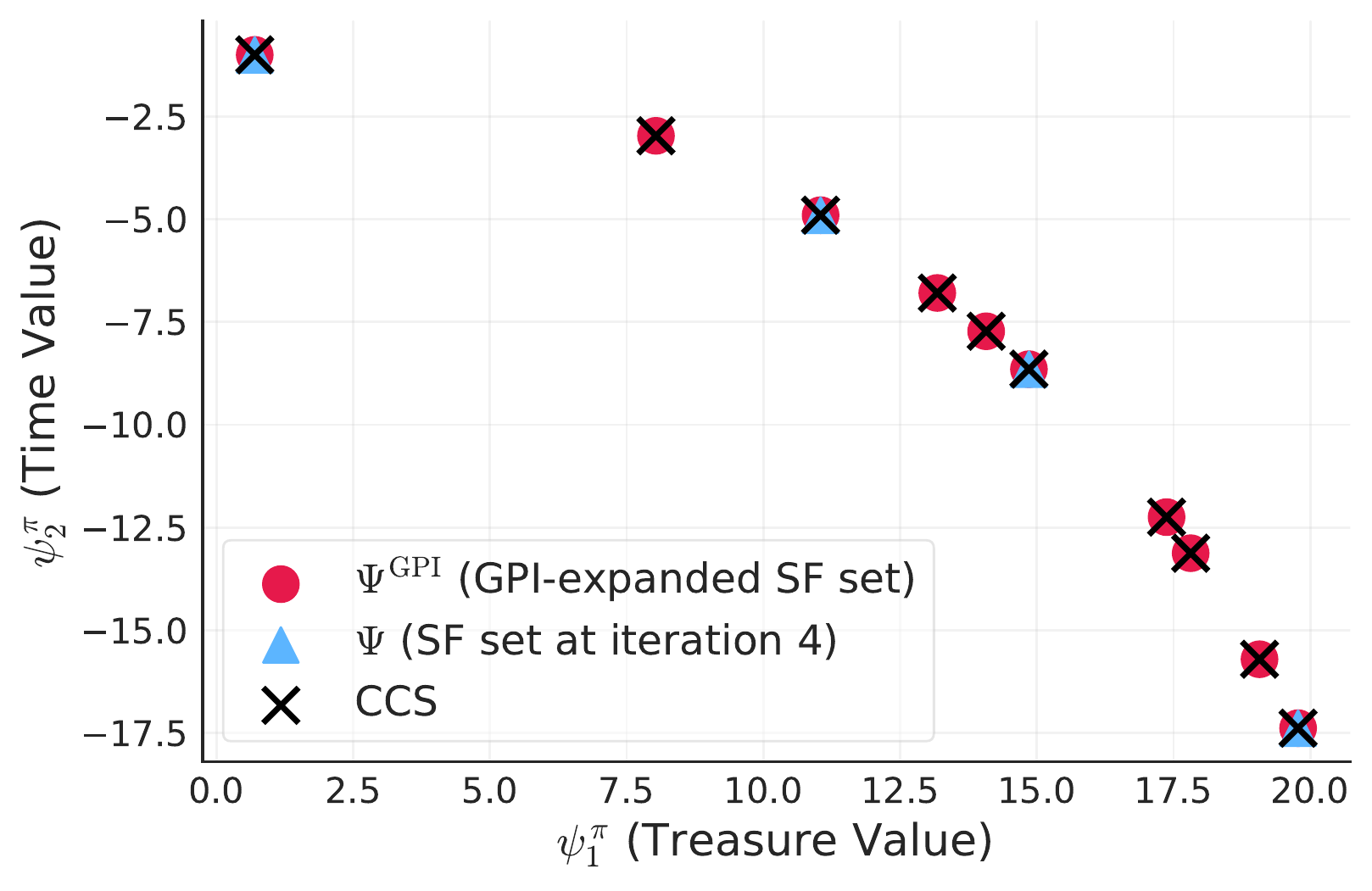} &
			\includegraphics[width=0.23\linewidth]{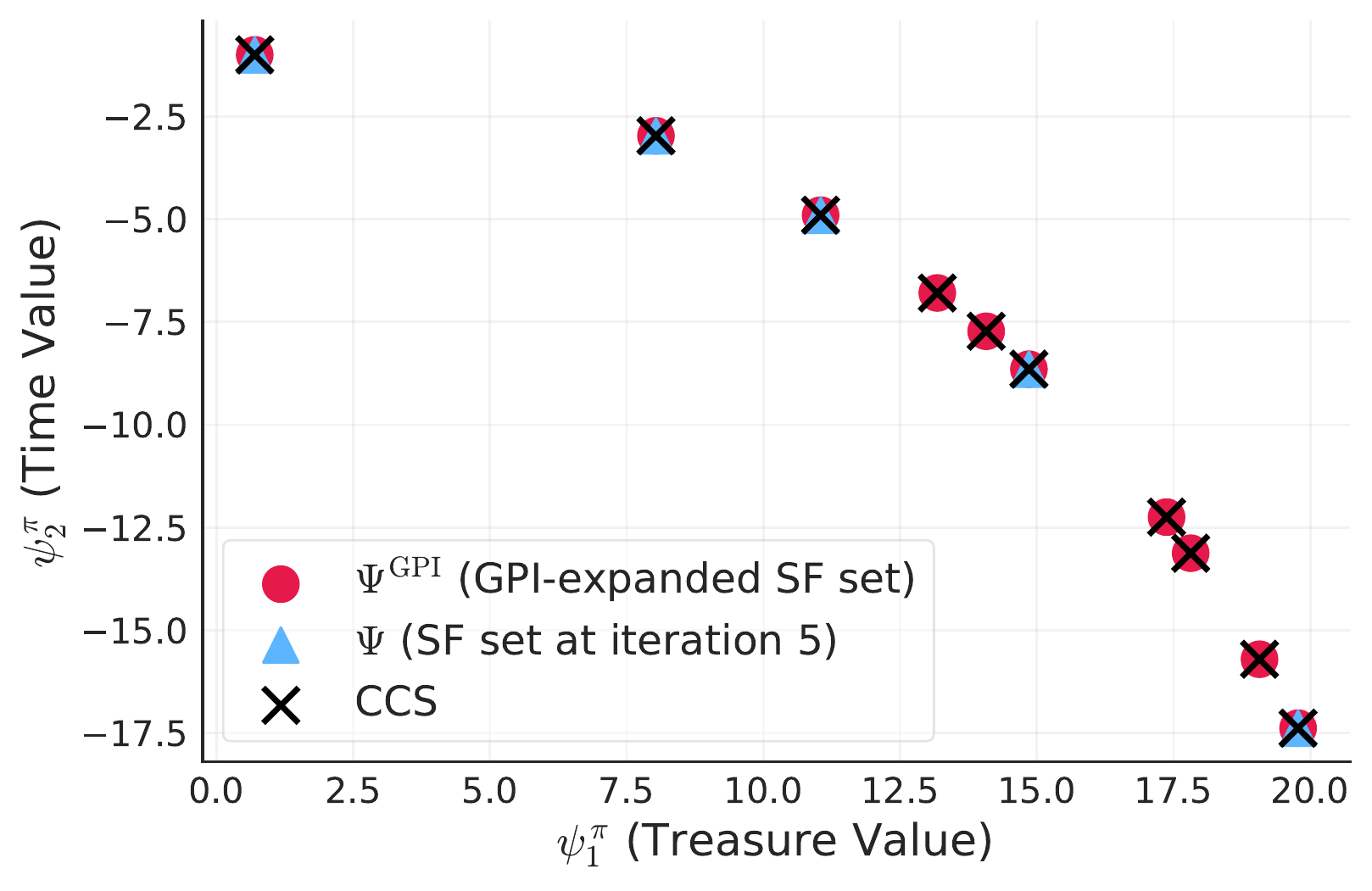} & \includegraphics[width=0.23\linewidth]{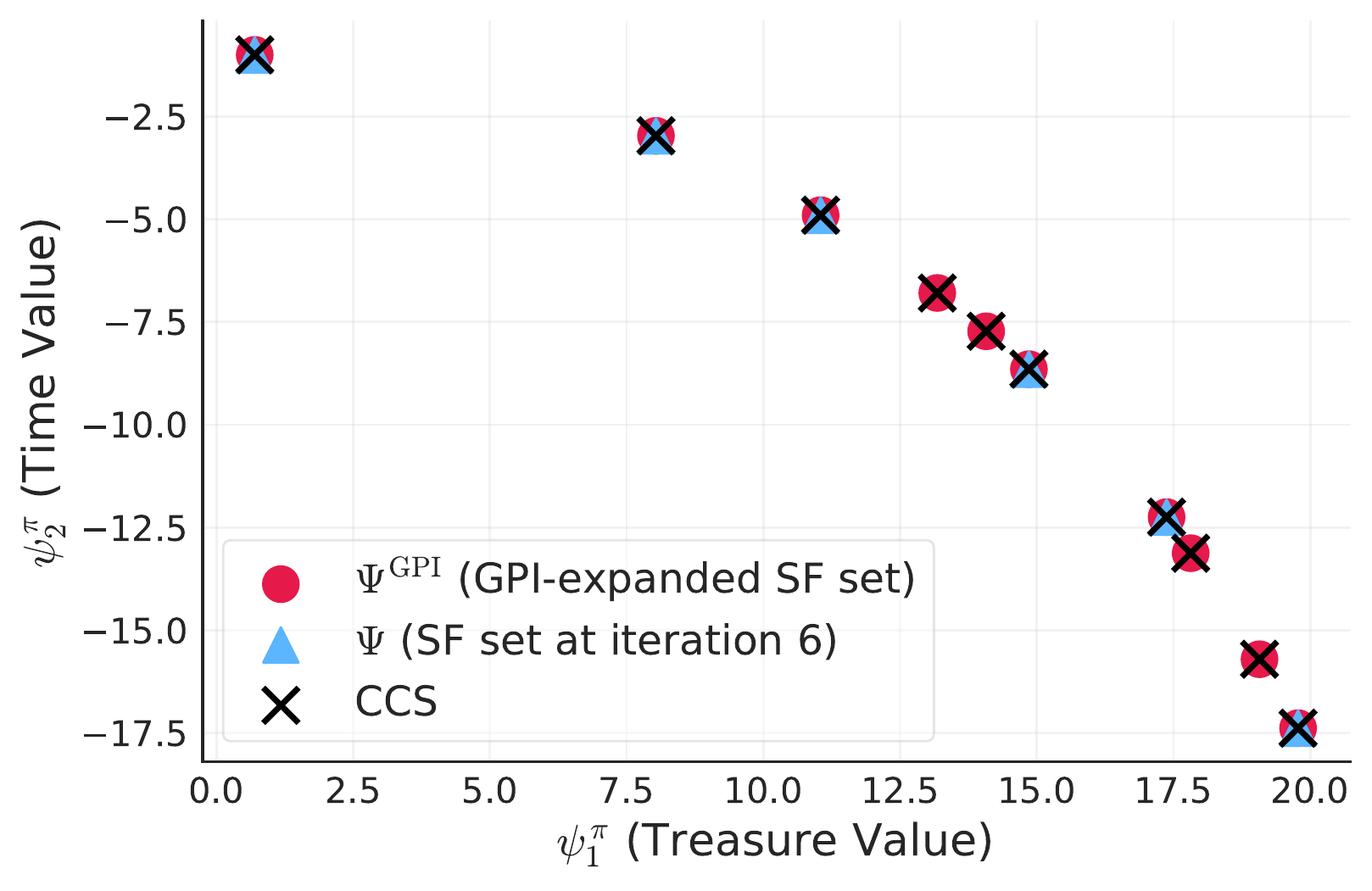} &
			\includegraphics[width=0.23\linewidth]{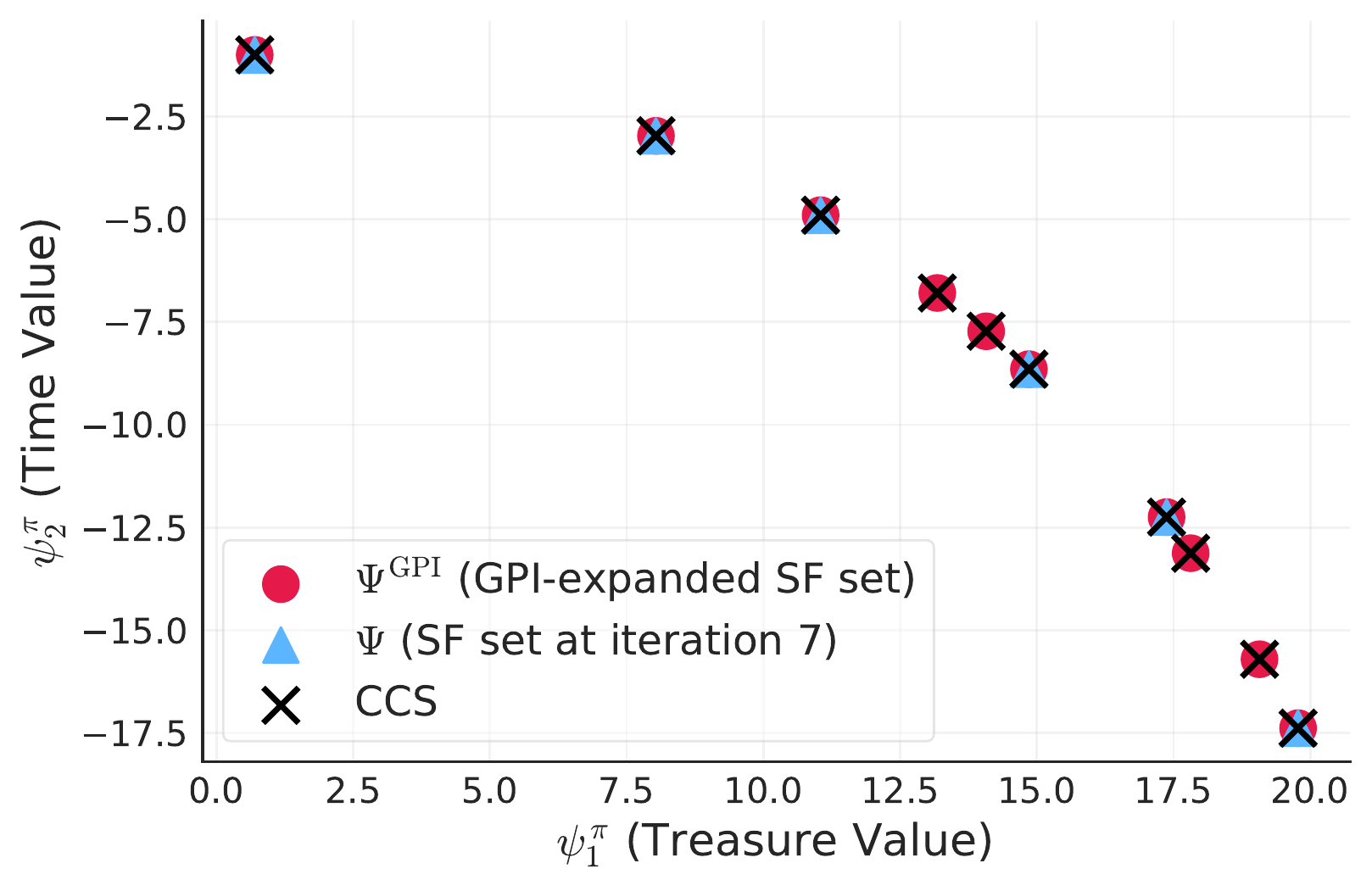} \\
			\includegraphics[width=0.23\linewidth]{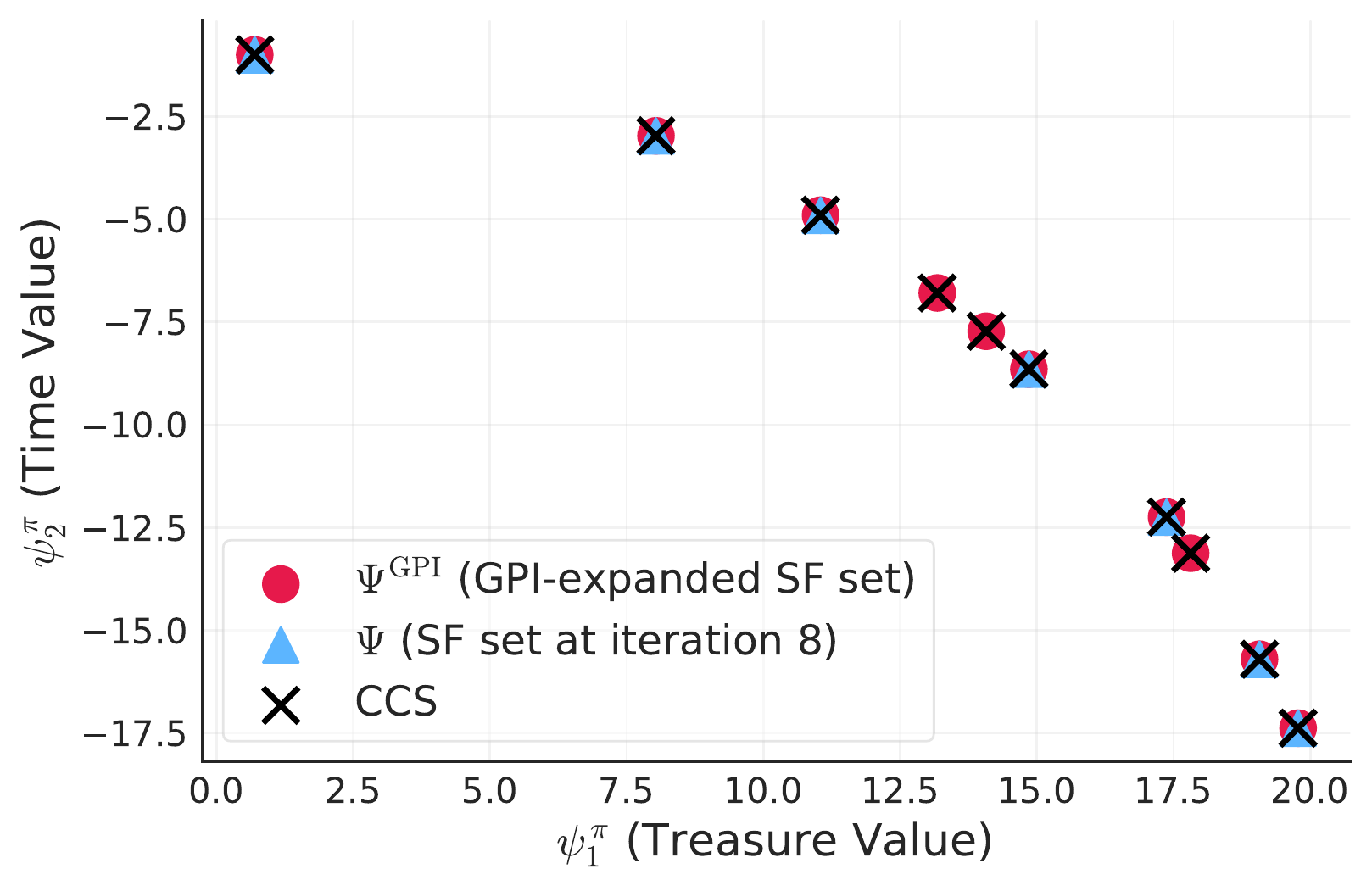} &
			\includegraphics[width=0.23\linewidth]{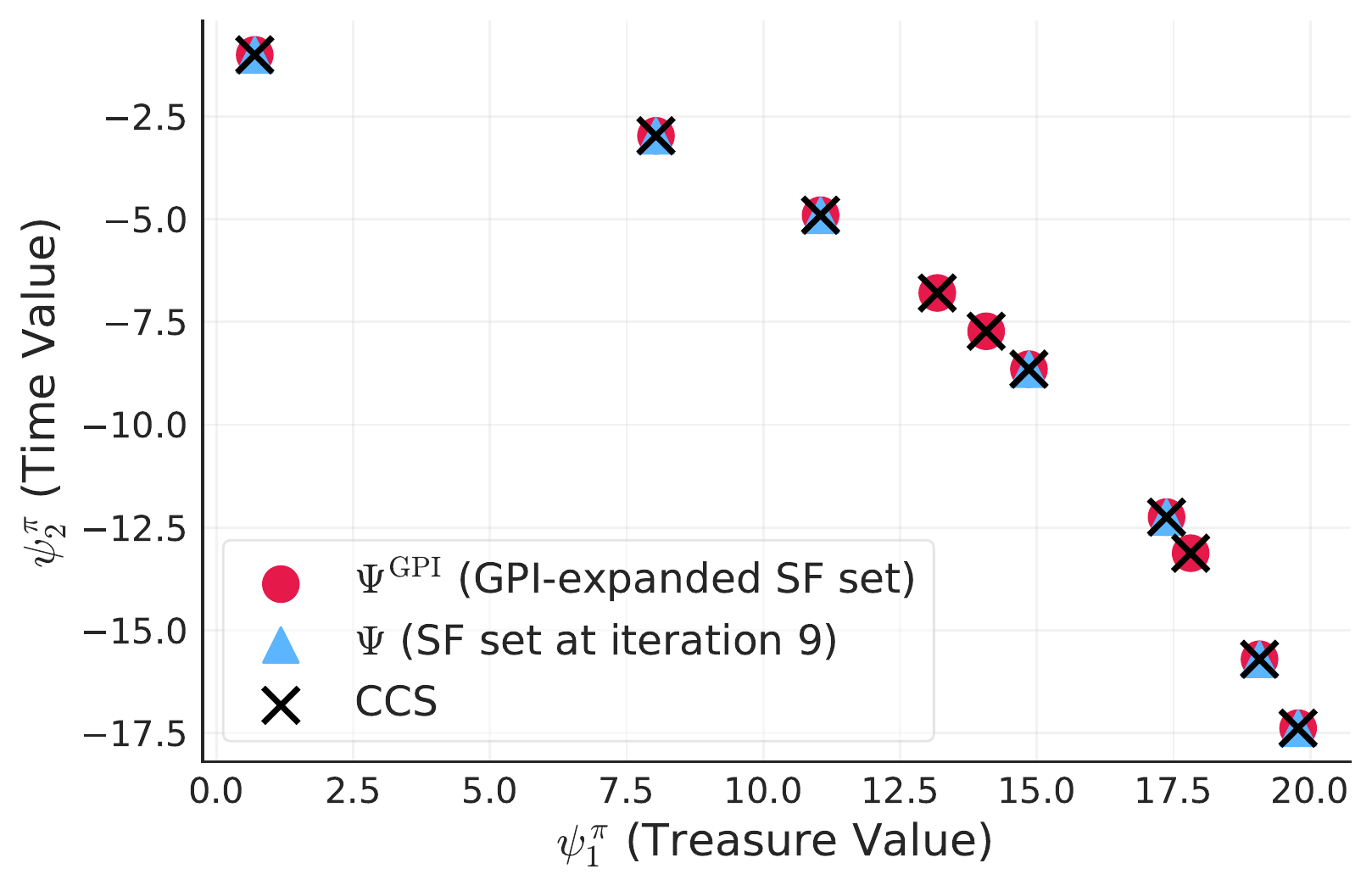} & \includegraphics[width=0.23\linewidth]{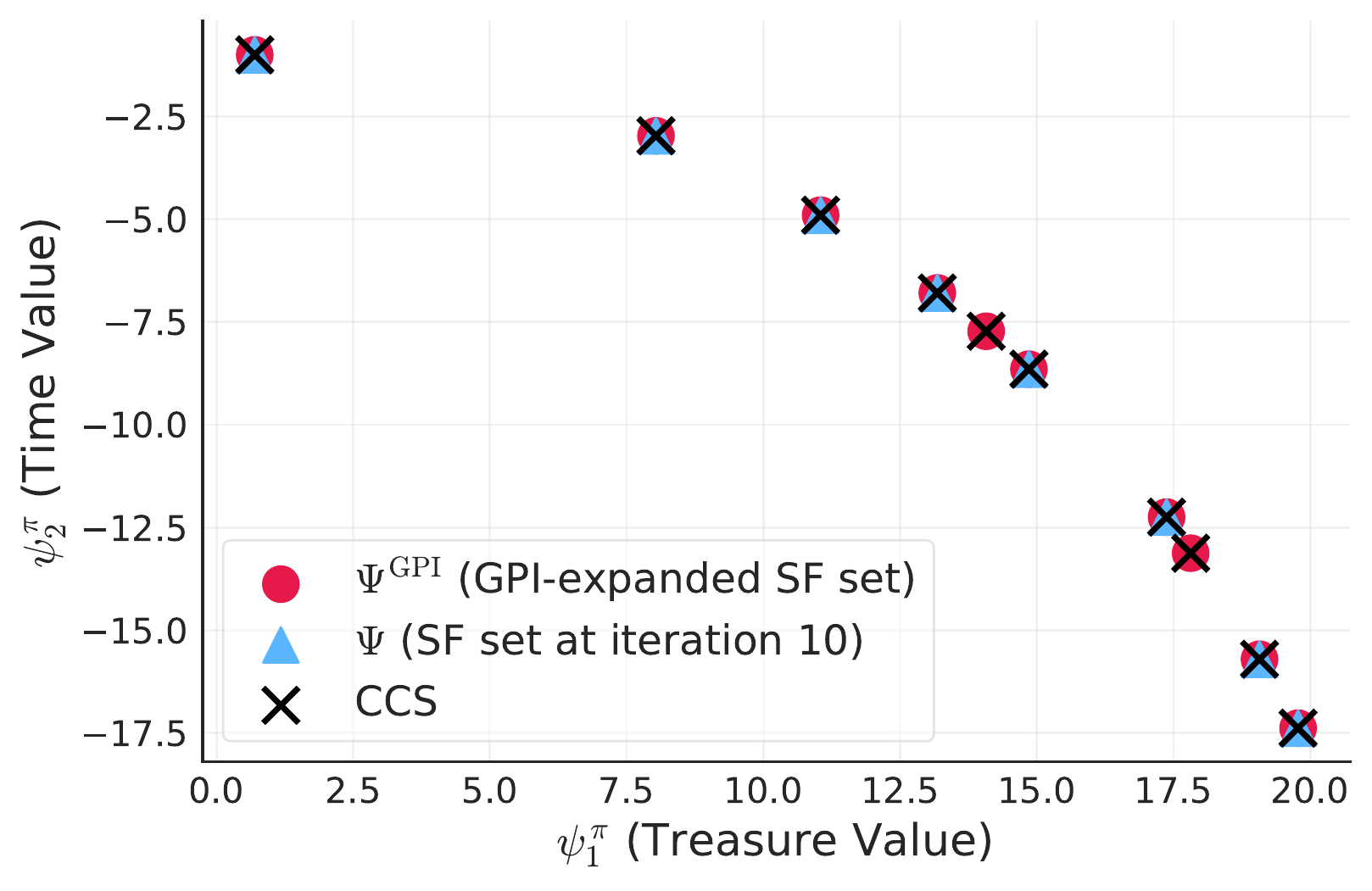} &
			\includegraphics[width=0.23\linewidth]{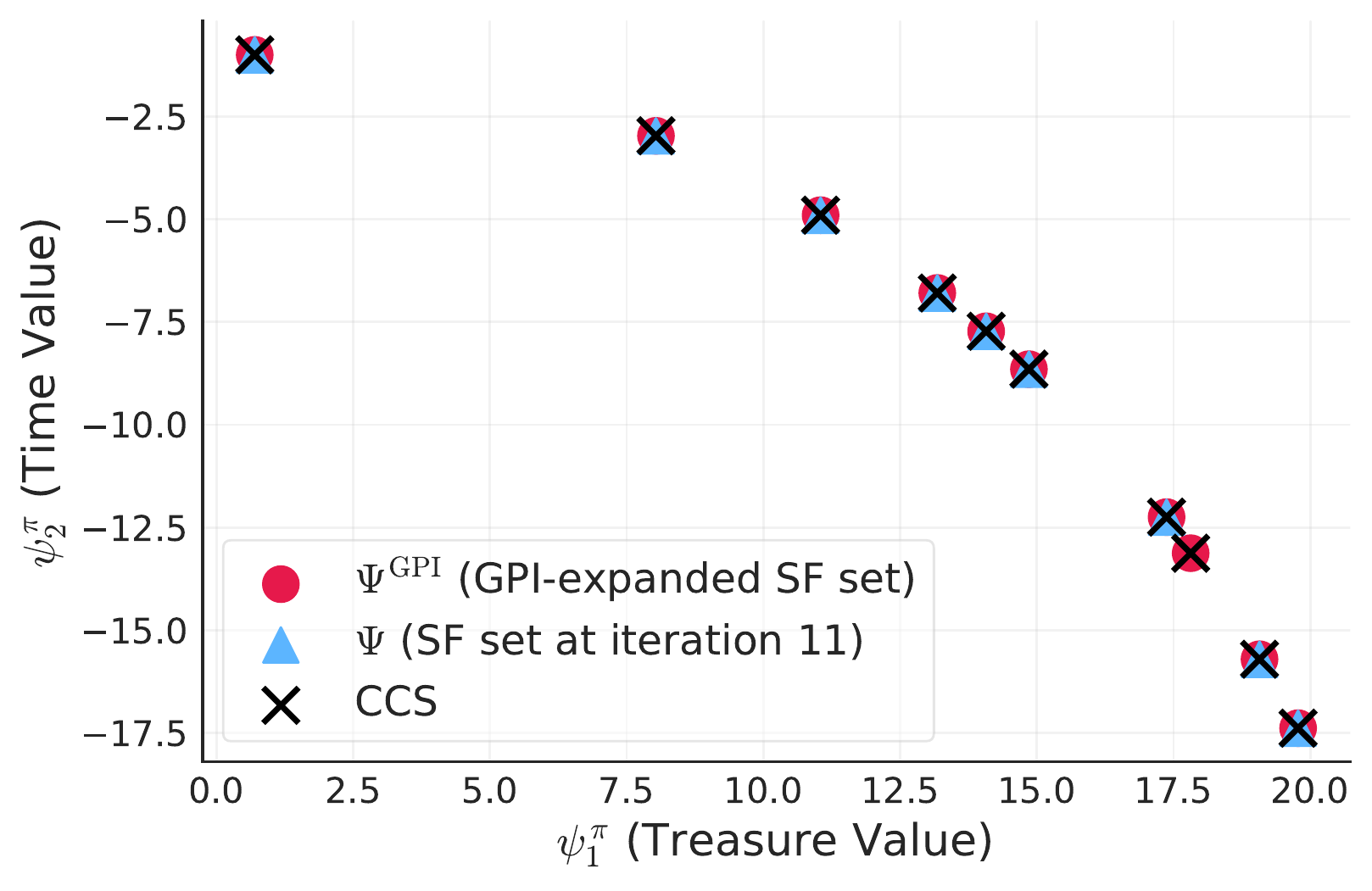} \\
			\includegraphics[width=0.23\linewidth]{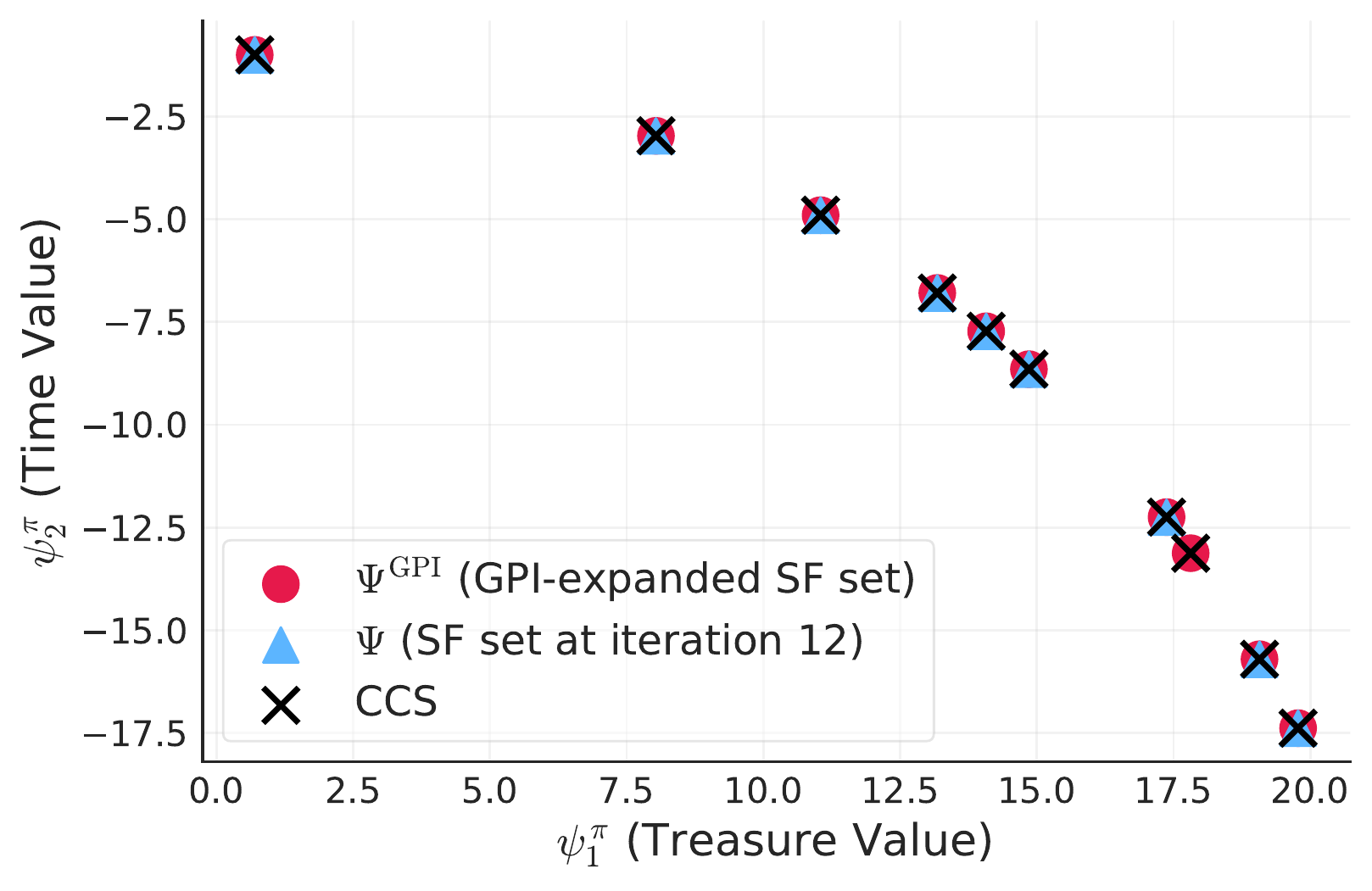} &
			\includegraphics[width=0.23\linewidth]{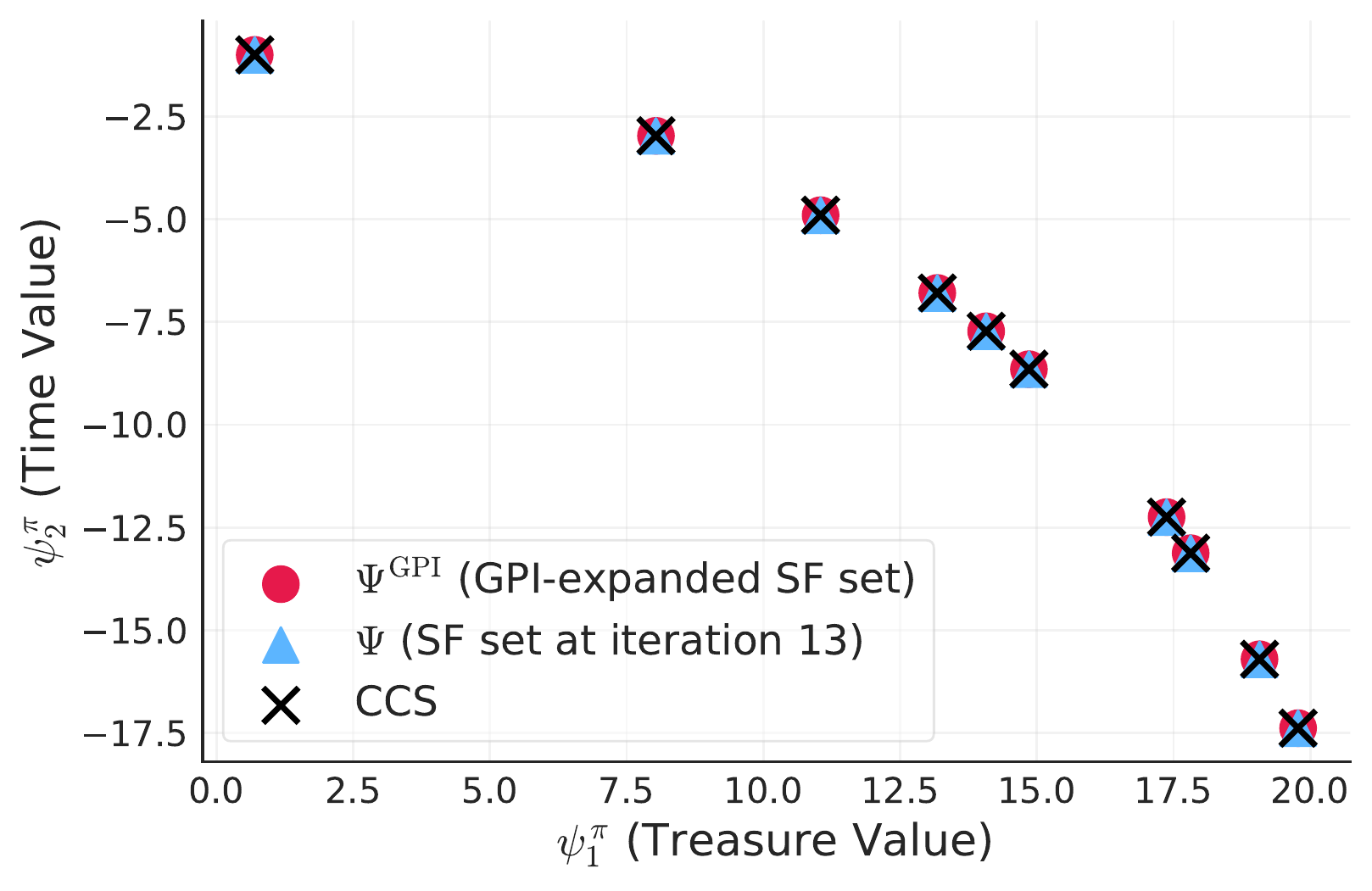} &
			\includegraphics[width=0.23\linewidth]{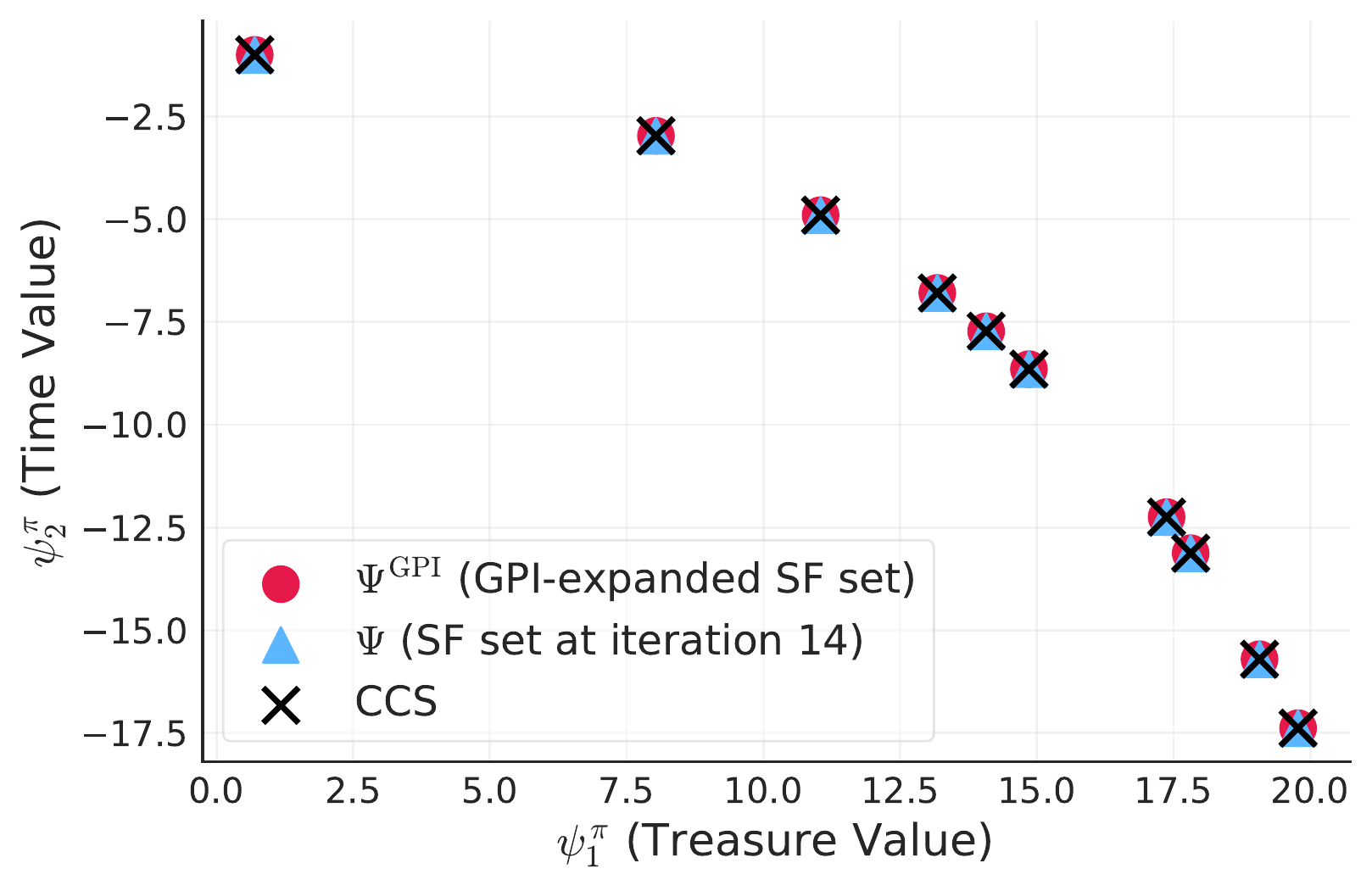} & \includegraphics[width=0.23\linewidth]{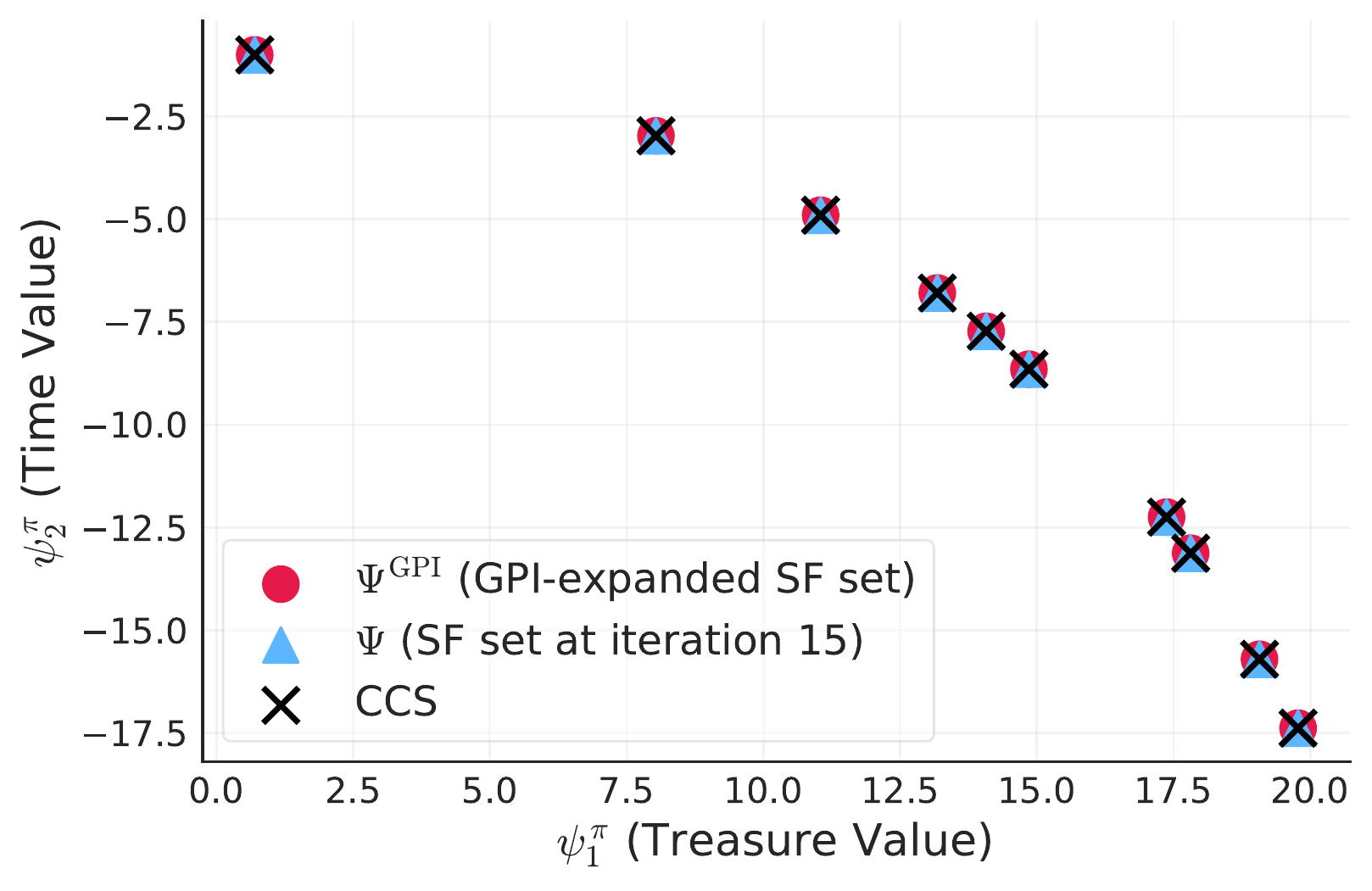}
		\end{tabular} 
	\end{center} 
	\caption{Hypervolume metric and the SF and GPI-expanded SF sets at each iteration of SFOLS in the DST domain. The reference point for the hypervolume calculation is $\vect{\psi}_{\text{ref}} = [0.0, -17.383]$.}
	\label{fig:dst-iterations}
	\vskip -0.1in
\end{figure*}

In order to provide a clearer visualization of the performance differences between each algorithm, in Figure~\ref{fig:dst-zoom} we show the curves, previously depicted in Figure~\ref{fig:dst}, separately for the SMP and GPI policies.
Similarly, in Figure~\ref{fig:fourroom-zoom} we separately zoom in on the curves previously depicted in Figure~\ref{fig:fourroom}.
Notice that SFOLS performance has significantly lower variance than the competing algorithms. This is because WCPI and the random baseline depend on random initializations of the reward vectors, while SFOLS does not.

\begin{figure*}[h!]
\vskip 0.1in
\begin{center}
\centerline{
\includegraphics[width=0.35\linewidth,align=c]{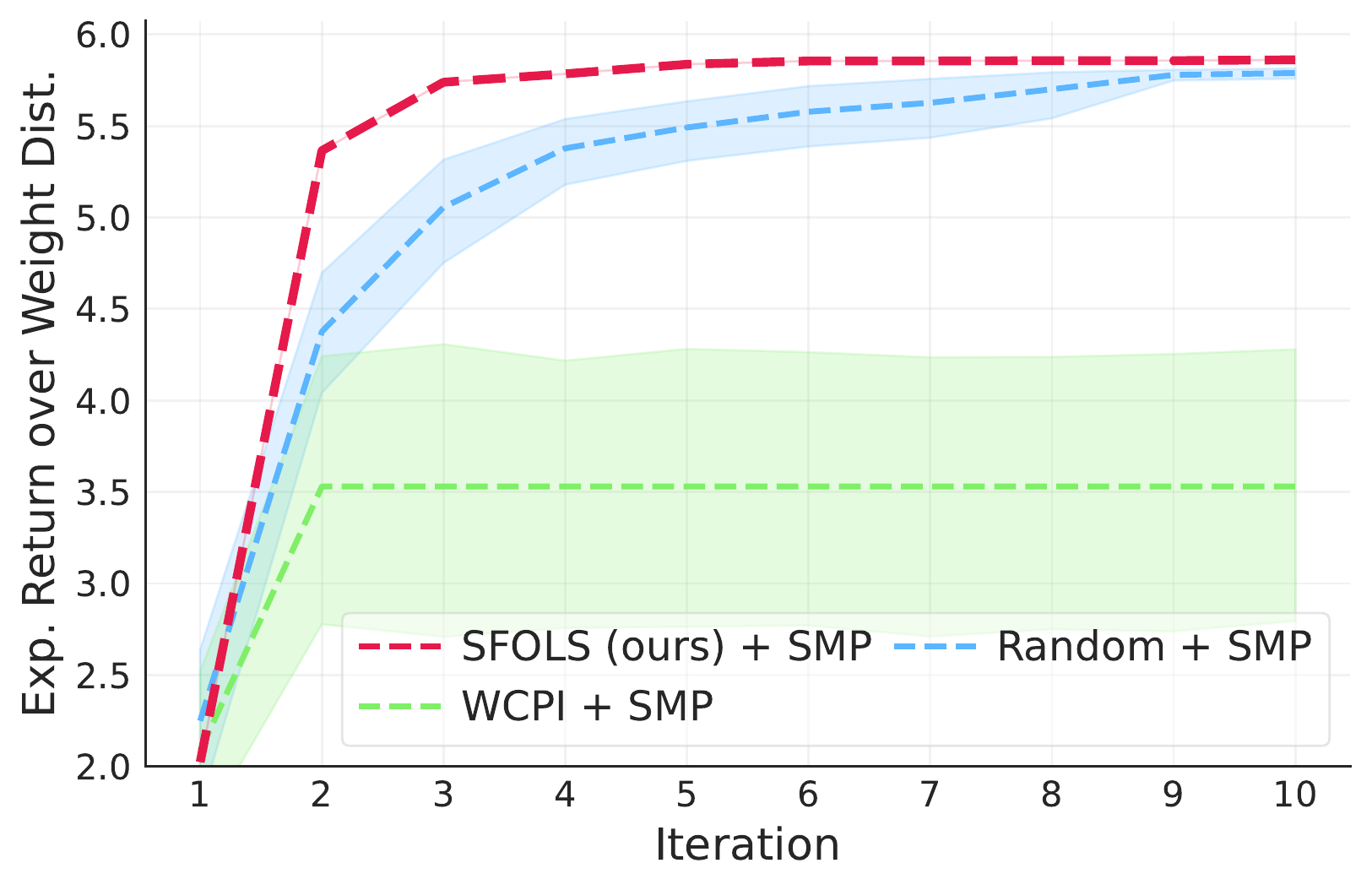}
\includegraphics[width=0.35\linewidth,align=c]{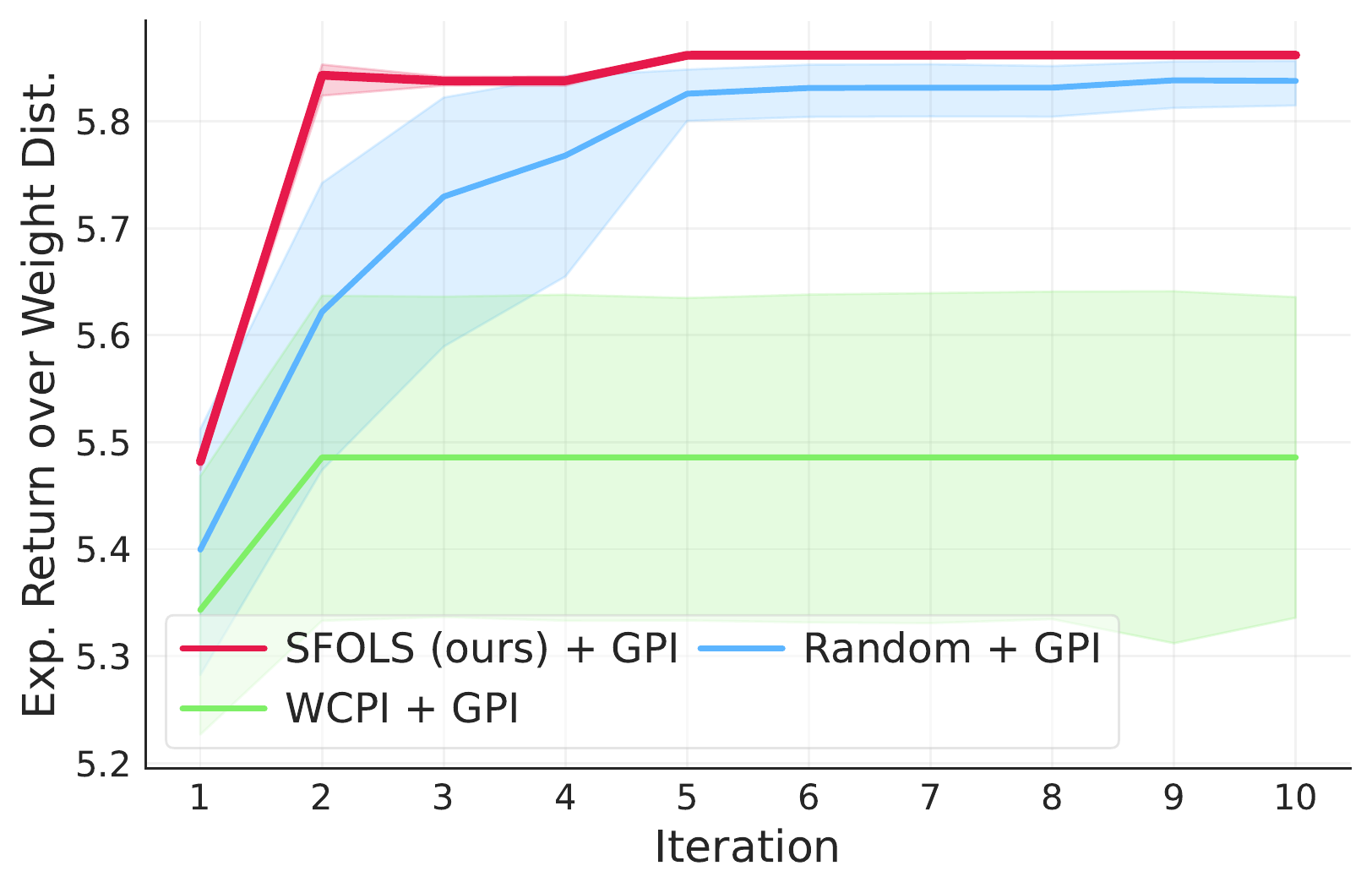}}
\caption{Expected return of each algorithm over the task distribution, $\mathcal{W}$, when evaluated using either SMP (\textbf{left}) or GPI (\textbf{right}) in the DST domain.}
\label{fig:dst-zoom}
\end{center}
\vskip -0.1in
\end{figure*}
\begin{figure*}[h!]
\vskip 0.0in
\begin{center}
\centerline{
\includegraphics[width=0.35\linewidth,align=c]{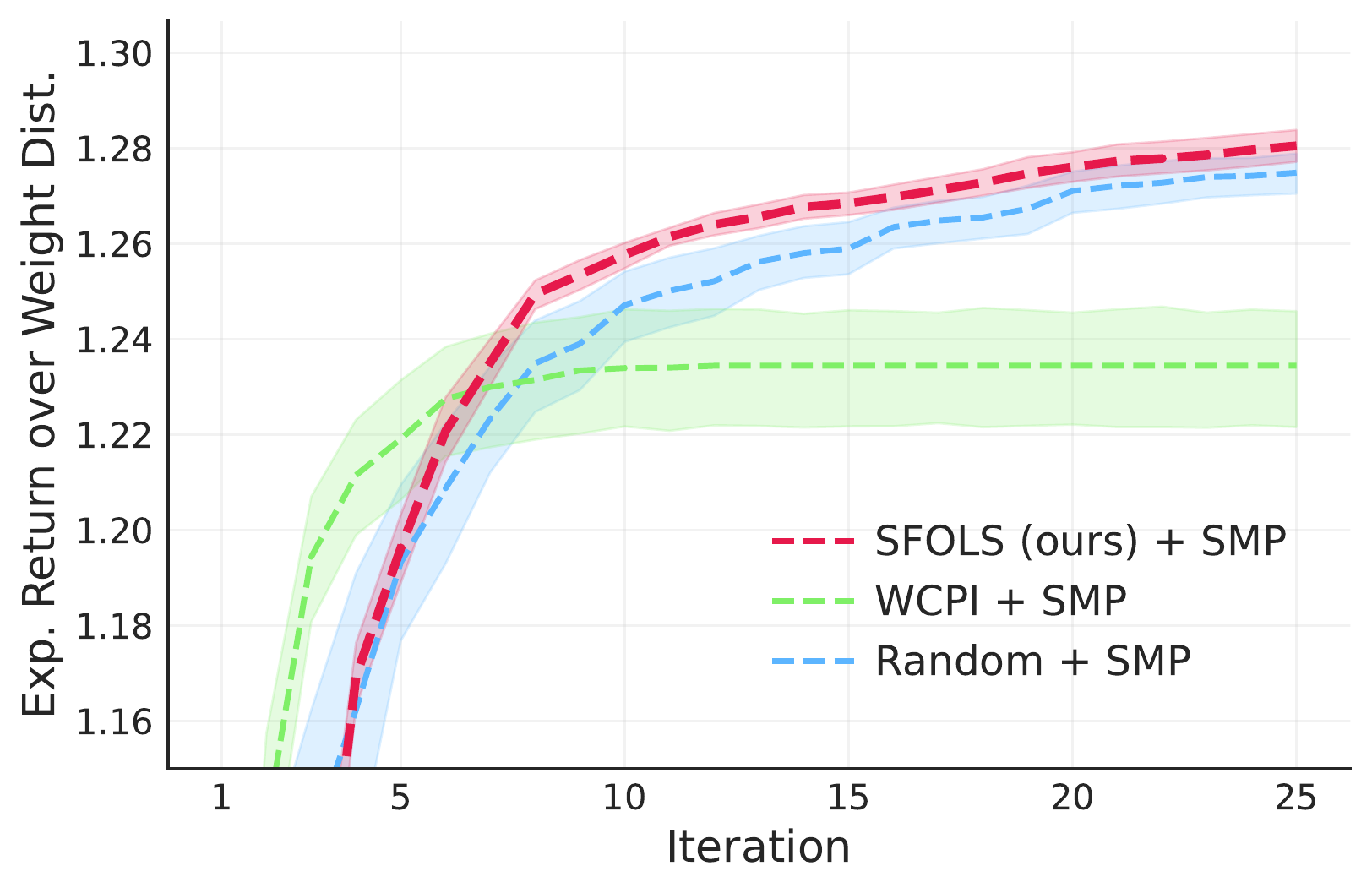}
\includegraphics[width=0.35\linewidth,align=c]{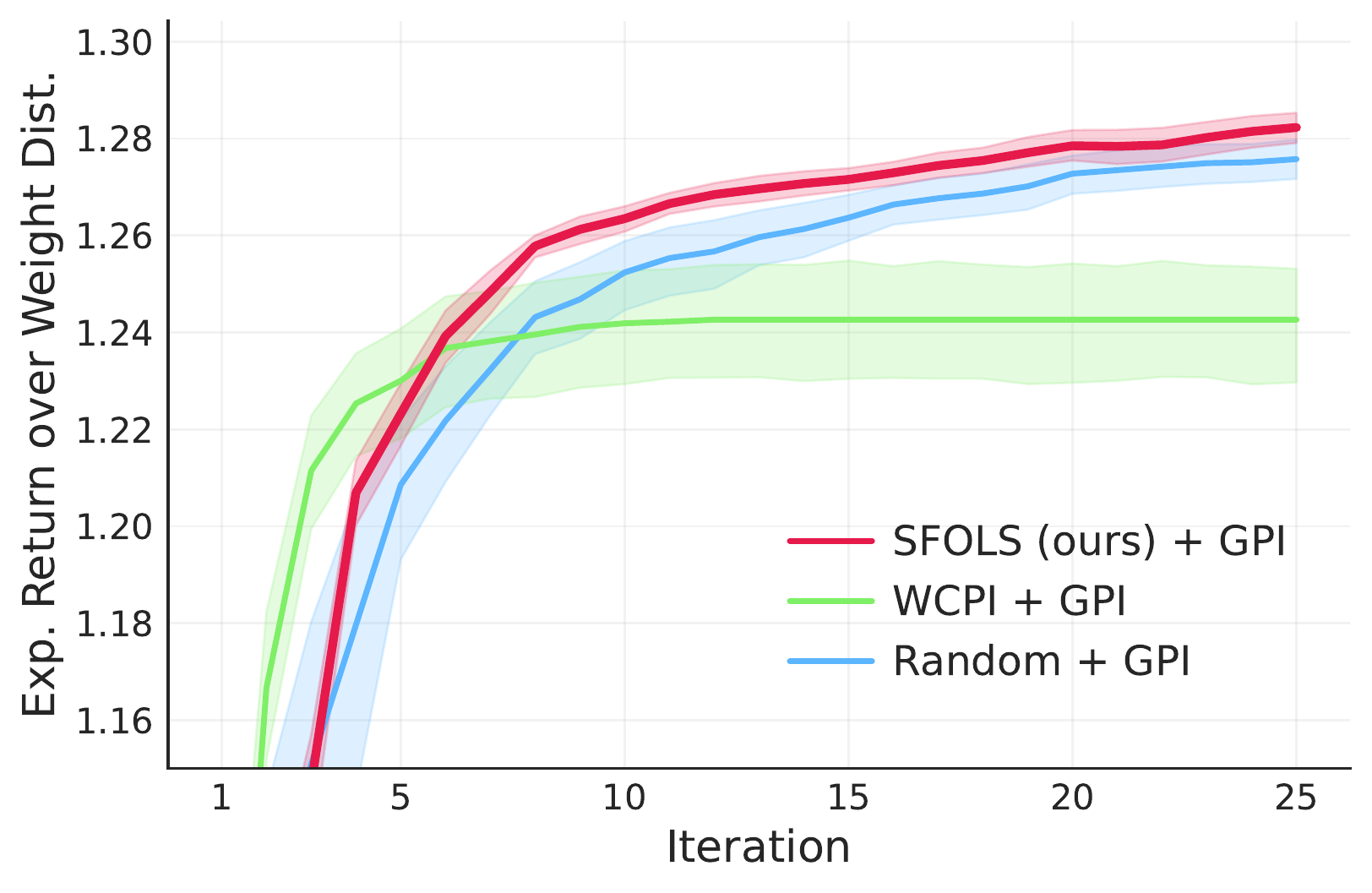}}
\caption{Expected return of each algorithm over the task distribution, $\mathcal{W}$, when evaluated using either SMP (\textbf{left}) or GPI (\textbf{right}) in the Four Room domain.}
\label{fig:fourroom-zoom}
\end{center}
\vskip -0.2in
\end{figure*}

\subsection{Learning SFs}

In Algorithm~\ref{alg:tabularsf} we introduce an algorithm to learn SFs using GPI based on tabular Q-learning.
We used a learning rate of $\alpha = 0.3$, and $\varepsilon$-greedy exploration with the value of $\varepsilon$ linearly decaying from $1$ to $0.05$.
At each iteration, the total number of time steps used to learn the SFs was set to $\text{num\_steps} = 10^5$ for the DST domain and $\text{num\_steps} = 10^6$ for the Four Room domain.

\begin{algorithm}[ht]
\caption{Learn New Tabular Successor Features with GPI}
\label{alg:tabularsf}
\begin{algorithmic}[1]
\STATE {\bfseries Input:}  SF set $\Psi = \{\vect{\psi}^{\pi_1},...,\vect{\psi}^{\pi_{n-1}}\}$, learning rate $\alpha$, exploration prob. $\varepsilon$, new preference weight $\vect{w}$, $\text{num\_steps}$

\STATE Initialize new SF: $\vect{\psi}^{\pi_n}(s,a) \leftarrow \vect{\psi}^{\pi_i}(s,a) \forall (s,a)$, with $i = \argmax_i \vect{\psi}^{\pi_i} \cdot \vect{w}$

\STATE $\text{new\_episode} \leftarrow \text{True}$

\FOR{$t = 0 ... \text{num\_steps}$}
\IF{$\text{new\_episode}$}
    \STATE $S_t \leftarrow \text{initial state sampled from } \mu$
    \STATE $\text{new\_episode} \leftarrow \text{False}$
\ENDIF

\IF{$\text{Bernoulli}(1-\varepsilon)$}
\STATE $A_t \leftarrow \argmax\nolimits_a \max\nolimits_{i} \vect{\psi}^{\pi_i}(S_t,a) \cdot \vect{w}$ \COMMENT{GPI}
\ELSE
    \STATE $A_t \leftarrow \text{random action sampled from Uniform}(\mathcal{A})$ \COMMENT{$\varepsilon$-greedy exploration}
\ENDIF

\STATE Execute $A_t$, observe $\vect{\phi}_t$ and $S_{t+1}$

\IF{$S_{t+1}$ is terminal}
\STATE $\text{new\_episode} \leftarrow \text{True}$
\STATE $\vect{\delta}_t = \vect{\phi}_t - \vect{\psi}^{\pi_n}(S_t,A_t)$
\ELSE
\STATE $a' \leftarrow \argmax\nolimits_{a'} \max\nolimits_{i} \vect{\psi}^{\pi_i}(S_{t+1},a') \cdot \vect{w}$
\STATE $\vect{\delta}_t = \vect{\phi}_t + \gamma \vect{\psi}^{\pi_n}(S_{t+1}, a')  - \vect{\psi}^{\pi_n}(S_t,A_t)$
\ENDIF

\STATE $\vect{\psi}^{\pi_n}(S_t,A_t) \leftarrow \vect{\psi}^{\pi_n}(S_t,A_t) + \alpha \vect{\delta}_t$ \COMMENT{Update SFs}

\ENDFOR
\STATE {\bfseries return} $\vect{\psi}^{\pi_n}$

\end{algorithmic}
\end{algorithm}

In the continuous state case, we learned SFs using neural networks as detailed in Algorithm~\ref{alg:sfdqn}.
We employed a training scheme similar to DQN \cite{Mnih+2015}.
Each SF $\vect{\psi}(s,a)$ is a multi-layer perceptron (MLP) neural network with two layers of 256 neurons and ReLU non-linear activations. Each neural network outputs a matrix $\mathbb{R}^{|\mathcal{A}| \times d}$, consisting of the values for each one of the $|\mathcal{A}|$ actions and $d$ features. 
We also adopted popular DQN extensions to speed up and stabilize learning, such as double Q-learning \cite{vanHasselt+2016} and prioritized experience replay \cite{Schaul+2016,Fujimoto+2020}.
We used Adam \cite{Kingma&Ba2015} with a learning rate of $0.001$ as the gradient-based optimizer, and a mini-batch of size $b=256$.
The value of $\varepsilon$ was fixed to $0.05$ for the $\varepsilon$-greedy exploration.
We trained each SF for $\text{num\_steps} = 2\cdot10^5$ time steps.

\begin{algorithm}[ht]
\caption{Learn New DQN-based Successor Features with GPI}
\label{alg:sfdqn}
\begin{algorithmic}[1]
\STATE {\bfseries Input:}  SF set $\Psi = \{\vect{\psi}^{\pi_1},...,\vect{\psi}^{\pi_{n-1}}\}$, replay buffer $\mathcal{D}$, exploration prob. $\varepsilon$, new preference weight $\vect{w}$, $\text{num\_steps}$

\STATE Initialize new SF $\vect{\psi}^{\pi_n}(s,a)$ as a neural network

\STATE $\text{new\_episode} \leftarrow \text{True}$

\FOR{$t = 0 ... \text{num\_steps}$}
\IF{$\text{new\_episode}$}
    \STATE $S_t \leftarrow \text{initial state sampled from } \mu$
    \STATE $\text{new\_episode} \leftarrow \text{False}$
\ENDIF

\IF{$\text{Bernoulli}(1-\varepsilon)$}
\STATE $A_t \leftarrow \argmax\nolimits_a \max\nolimits_{i} \vect{\psi}^{\pi_i}(S_t,a) \cdot \vect{w}$ \COMMENT{GPI}
\ELSE
    \STATE $A_t \leftarrow \text{random action sampled from Uniform}(\mathcal{A})$ \COMMENT{$\varepsilon$-greedy exploration}
\ENDIF

\STATE Execute $A_t$, observe $\vect{\phi}_t$ and $S_{t+1}$
\STATE Add $(S_t,A_t,\vect{\phi}_t,S_{t+1})$ to $\mathcal{D}$
\IF{$S_{t+1}$ is terminal}
\STATE $\text{new\_episode} \leftarrow \text{True}$
\ENDIF

\STATE Sample mini-batch $\{(s_i,a_i,\vect{\phi}_i,s'_i)\}_{i=1}^{b}$ from $\mathcal{D}$
\STATE $a'_i = \argmax_{a'} \max_i \vect{\psi}^{\pi_i}(s'_i,a')$
\STATE $\vect{y}_i = \vect{\phi}_i + \gamma  \vect{\psi}^{\pi_i}(s'_i,a'_i)$

\STATE Update SF by minimizing the loss $\mathcal{L}(\vect{\psi}^{\pi_n}) = \frac{1}{b}\sum_{i=1}^{b} \left[ (\vect{y}_i - \vect{\psi}^{\pi_n}(s_i,a_i))^2 \right]$

\ENDFOR
\STATE {\bfseries return} $\vect{\psi}^{\pi_n}$

\end{algorithmic}
\end{algorithm}

\subsection{Worst-Case Reward Policy Iteration \cite{Zahavy+2021}}

In Algorithm~\ref{algo:worst-case} we present the WCPI algorithm proposed by \citet{Zahavy+2021} to learn a diverse set of policies using SFs.
For details on how to compute the worst-case reward (line 5) by solving linear programs, see Lemma 4 of \citet{Zahavy+2021}.

\begin{algorithm}[ht]
\label{algo:worst-case}
\caption{SMP Worst Case Policy Iteration \cite{Zahavy+2021}}
\begin{algorithmic}[1]
   \STATE {\bfseries Initialize:} $\Pi \leftarrow \{\}$; $\Psi \leftarrow \{\}$; $\bar{\vect{w}} \leftarrow \text{random weight sampled from } \mathcal{W}$
   \STATE $\pi, \vect{\psi}^\pi \leftarrow \text{solution of the RL task } \bar{\vect{w}}$
   \STATE Add $\pi$ to $\Pi$ and $\vect{\psi}^\pi$ to $\Psi$
   \REPEAT
        \STATE $\bar{\vect{w}} \leftarrow \argmin_{\vect{w} \in \mathcal{W}} \max_{\pi \in \Pi} \vect{\psi}^\pi \cdot \vect{w}$
        \STATE $\pi, \vect{\psi}^\pi  \leftarrow \text{solution of the RL task } \bar{\vect{w}}$
        \STATE Add $\pi$ to $\Pi$ and $\vect{\psi}^\pi$ to $\Psi$
   \UNTIL{$v^{\text{SMP}}_{\bar{\vect{w}}}$ does not improve}
   
   \STATE{\bfseries return $\Pi, \Psi$}
\end{algorithmic}
\end{algorithm}

\subsection{Set of Independent Policies \cite{Alver&Precup2021}}

In Algorithm~\ref{algo:sip} we present the SIP algorithm proposed by \citet{Alver&Precup2021} to learn a set of independent policies using SFs.
Importantly, it requires that the features $\vect{\phi}(s,a,s') \in \mathbb{R}^d$ are \textit{independent} (see Definition 1 of \citet{Alver&Precup2021}) and that the MDP's transition function is deterministic.

\begin{algorithm}[ht]
\label{algo:sip}
\caption{Set of Independent Policies \cite{Alver&Precup2021}}
\begin{algorithmic}[1]
   \STATE {\bfseries Initialize:} $\Pi \leftarrow \{\}$; $\Psi \leftarrow \{\}$; $i \leftarrow 1$
   \WHILE{$i \leq d$}
        \STATE $\vect{w}_i \leftarrow$ weight with a positive value in the $i$-th component and negative values elsewhere
        \STATE $\pi, \vect{\psi}^\pi  \leftarrow \text{solution of the RL task } \vect{w}_i$
        \STATE Add $\pi$ to $\Pi$ and $\vect{\psi}^\pi$ to $\Psi$
        \STATE $i \leftarrow i + 1$
   \ENDWHILE
   
   \STATE{\bfseries return $\Pi, \Psi$}
\end{algorithmic}
\end{algorithm}


\end{document}